\definecolor{softbluegray}{HTML}{F2F6F9}
\theoremstyle{plain}
\newtheorem{theorem}{Theorem}[section]
\theoremstyle{definition}
\newtheorem{definition}[theorem]{Definition}
\theoremstyle{remark}
\icmltitlerunning{Enhancing Diversity in Parallel Agents}
\begin{document}

\twocolumn[
\icmltitle{Enhancing Diversity In Parallel Agents: \\ A Maximum State Entropy Exploration Story}



\icmlsetsymbol{equal}{*}

\begin{icmlauthorlist}
\icmlauthor{Vincenzo De Paola}{sch}
\icmlauthor{Riccardo Zamboni}{sch}
\icmlauthor{Mirco Mutti}{sch_h}
\icmlauthor{Marcello Restelli}{sch}
\end{icmlauthorlist}

\icmlaffiliation{sch}{AIRLAB, Politecnico di Milano, Milan, Italy}
\icmlaffiliation{sch_h}{Technion, Israel Intitute of Technology, Haifa, Israel}

\icmlcorrespondingauthor{Vincenzo De Paola}{vincenzo.depaola@polimi.it}
\icmlcorrespondingauthor{Riccardo Zamboni}{riccardo.zamboni@polimi.it}

\icmlkeywords{Machine Learning, ICML}

\vskip 0.3in
]



\printAffiliationsAndNotice{} 

\begin{abstract}
Parallel data collection has redefined Reinforcement Learning (RL), unlocking unprecedented efficiency and powering breakthroughs in large-scale real-world applications. In this paradigm, $N$ identical agents operate in $N$ replicas of an environment simulator, accelerating data collection by a factor of $N$. A critical question arises: \textit{Does specializing the policies of the parallel agents hold the key to surpass the $N$ factor acceleration?}
In this paper, we introduce a novel learning framework that maximizes the entropy of collected data in a parallel setting. Our approach carefully balances the entropy of individual agents with inter-agent diversity, effectively minimizing redundancies. The latter idea is implemented with a centralized policy gradient method, which shows promise when evaluated empirically against systems of identical agents, as well as synergy with batch RL techniques that can exploit data diversity.
Finally, we provide an original concentration analysis that shows faster rates for specialized parallel sampling distributions, which supports our methodology and may be of independent interest.
\end{abstract}


\section{Introduction}
\label{sec:introduction}


\emph{Exploration}, or taking sub-optimal decisions to gather information about the task, lies at the heart of Reinforcement Learning~\citep[RL,][]{bertsekas2019reinforcement} from its very foundation. When the task of interest is defined through a reward function, exploration shall be carefully balanced with the \emph{exploitation} of the current information to maximize the rewards collected over time~\citep{Atari}. When a task is not known at the time of the interaction, exploration becomes an objective \emph{per se}, in order to foster the collection of maximally informative data.

A common formulation of the latter pure exploration setting is \emph{state entropy maximization}~\citep{hazan2019provably}, in which the agent aims to maximize the entropy of the data distribution induced by its policy.
The state entropy objective has been widely studied in the literature, e.g., to provide data collection strategy for offline RL~\cite{yarats2022don, park2024hiqlofflinegoalconditionedrl}, experimental design~\cite{tarbouriech2019active}, or transition model estimation~\cite{tarbouriech2020active}, and as a surrogate loss for policy pre-training in reward-free settings~\cite{mutti2020intrinsically}. All of the above uphold the status of state entropy maximization as a flexible tool for exploration. To this day, however, an important gap remains in how to deploy state entropy maximization on parallel simulators.

Let us picture an illustrative example to guide our thoughts: A robotic arm control task. We may want the robot to be able to perform a wide range of tasks, such as interacting with a diverse set of objects. A good strategy is to exploit a simulator of the robotic arm to learn good manipulation policies in advance, instead of relying on costly online interactions, in which exploration may be unsafe. Since not all of the future tasks may be known in advance, one could opt for state entropy maximization, in order to collect exploration data once and for all. Collecting those data on complex simulators may be inefficient though, resulting in long simulation times. However, one crucial advantage of sampling in simulation rather than the physical system is that the sampling process does not have to be sequential.

In many modern simulators \citep{isaac-sim, Genesis, todorov2012mujoco}, the process of learning is accelerated by instantiating vectorized instances of agents and environments simultaneously. Parallelization was shown to be a game-changing approach that allowed for unprecedented performance in standard RL~\citep{Impala}, and the use of a multitude of identical agents operating on copies of the same environment allowed RL systems to tackle increasingly complex and large-scale problems. While the maximum entropy formulation has been widely adopted in prior works, how to better take advantage of parallel exploration agents is still to be investigated.

This is far from being a trivial problem. ~\citet{redundant} recently showed that not explicitly caring for redundant behaviors among the agents often results in inefficient use of computational resources, which is why parallelization is employed in the first place. Additionally, as highlighted in~\citet{rudin2022learning}, solving complex problems necessitates intelligent strategies for constructing experience that accelerate algorithmic convergence. This challenge is further compounded in scenarios where access to real-world process data is limited, examples are scarce, or simulators are computationally expensive. 
Following these considerations, our study is centered around the following questions:
\begin{tcolorbox}[colback=softbluegray, colframe=softbluegray,  boxrule=0.5pt, arc=4pt, width=\linewidth]
\begin{center}
\emph{How should parallel agents optimize their policies for efficient exploration?}\\
\emph{What is the best way to leverage parallelism in state entropy maximization?}
\end{center}
\end{tcolorbox}
In this paper, we generally address this question by extending state entropy maximization to incorporate the role of parallel agents, enabling a deeper investigation into their exploration efficiency. First, we introduce a specific problem formulation that can explicitly leverage parallelism. Rather than considering agents independently, we look at them as a whole, and consider the distribution a single one virtual agent would induce if it was following a uniform mixture of the agents' policies. Additionally, we theoretically characterize how the entropy of the distribution of state visits concentrates in the single-agent and parallel-agent cases and demonstrate the significant impact of parallelization on the rate of entropy stabilization and exploration diversity. Then, we shift our attention to how to solve the problem of parallel state exploration \emph{in practice}, by proposing a policy gradient strategy~\citep{williams1992simple} that explicitly optimizes for the objective at hand. Finally, we use illustrative experiments to show that the proposed algorithm is indeed able to optimize for the state entropy and that the induced behavior enables agents to balance individual exploration with inter-agent diversity. This synergy is particularly relevant when integrated with batch RL techniques~\citep{offlinerl}.

\paragraph*{Original Contributions.}~~Throughout the paper, we make the following contributions:
\begin{itemize}[itemsep=-1pt, leftmargin=*, topsep=0pt]
    \item We define the problem of State Entropy Maximization in Parallel MDPs (Section~\ref{sec:parallelexp});
    \item We provide high-probability concentration bounds describing the rate at which the entropy of empirical distributions converges to the entropy of their stationary distributions. The proofs employ novel techniques that may be of independent interest (Section~\ref{sec:theory});
    \item We introduce a policy gradient algorithm to take advantage of parallel agents for maximizing the entropy of the visited states  (Section~\ref{sec:algorithm});
    \item We provide numerical experiments to corroborate our findings, both in terms of state entropy maximization and batch RL on the collected data (Section~\ref{sec:experiments}).
\end{itemize}

\vspace{-6pt}
\section{Preliminaries}
\label{sec:preliminaries}
Before diving into the technical contributions of our work, we introduce the notation and the relevant background.\vspace{-8pt}
\paragraph*{Notation.}~~We denote $[N] := \{1, 2, \ldots, N\}$ for a constant $N < \infty$. We denote a set with a calligraphic letter $\mathcal{A}$ and its size as $|\mathcal{A}|$. We denote $\mathcal{A}^T := \times_{t = 1}^T \mathcal{A}$ the $T$-fold Cartesian product of $\mathcal{A}$. The simplex on $\mathcal{A}$ is denoted as $\Delta_{\mathcal{A}} := \{ p \in [0, 1]^{|\mathcal{A}|} | \sum_{a \in \mathcal{A}} p(a) = 1 \}$ and $\Delta_{\mathcal{A}}^{\mathcal{B}}$ denotes the set of conditional distributions $p: \mathcal{A} \to \Delta_\mathcal{B}$. Let $X, X'$ random variables on the set of outcomes $\mathcal X$ and corresponding probability measures $p_X, p_{X'}$, we denote the Shannon entropy of $X$ as $\mathcal H (X) = - \sum_{x \in \mathcal X} p_X (x) \log (p_X (x))$ and the Kullback-Leibler (KL) divergence as $D_{KL}(p_X\| p_{X'}) = \sum_{x \in \mathcal X} p_X (x) \log (p_{X} (x)/p_{X'} (x))$.\vspace{-6pt}
\paragraph*{Interaction Model.}~~As a base model for interaction, we consider a finite-horizon Parallel Markov Decision Process~\citep[PMDP,][]{sucar2007parallel} without rewards. A PMDP $\mathbb M_p = (\mathbb M_i)_{i \in [m]}$ consists of $m \in \mathbb N^+$ copies of the same MDP $\mathbb{M}_i= (\mathcal{S}, \mathcal{A}, \mathcal{P}, T, \mu)$, composed of a set $\mathcal{S}$ of states and a set $\mathcal{A}$ of actions, which we let discrete and finite with size $|\mathcal{S}|,|\mathcal{A}|$ respectively. A set of $m$ independent agents interact with each copy $\mathbb M_i$, according to the following protocol. At the start of an episode, the initial state is drawn from an initial state distribution $s^{i}_0 \sim \mu$. Then the $i$-th agent takes action $a^{i}_0$ and the state transitions to $s^{i}_1 \sim \mathcal{P}(\cdot | s^{i}_0, a^{i}_0)$ according to the transition kernel $\mathcal{P}: \mathcal{S} \times \mathcal{A} \to \Delta_{\mathcal{S}}$. Those steps are repeated until $s^{i}_T$ is reached, being $T < \infty$ the horizon of an episode. 
The $i$-th agent takes actions according to a \emph{policy} $\pi^i \in \Delta_{\mathcal S}^{\mathcal A}$ such that $\pi^i (a^i | s^i)$ denotes the conditional probability of taking action $a^i$ upon observing $s^i$. Overall, we refer with \emph{parallel policy} the collection of policies $\pi_p = (\pi^i)_{i \in [m]}$.\footnote{In general, we will denote the set of valid per-agent policies with $\Pi^i$ and the set of joint policies with $\Pi$.} Each interaction episode is collected into a \emph{trajectory} $\tau^i :=(s^i_0, a^i_0, \ldots, s^i_{T-1}, a^i_{T-1}, s^i_T)\in \mathcal{K} = \mathcal{S}^T \times \mathcal{A}^T$, where $\mathcal{K}$ is the set of all trajectories of length $T$. \vspace{-6pt}
\paragraph*{Trajectory and State Distributions in MDPs.}~~Here we consider a single MDP $\mathbb{M}_i$ for the purpose of expanding the connection between trajectories and distribution over states, temporarily dropping the index $i$ from all the quantities. A policy $\pi$ interacting with $\mathbb M_i$ induces a distribution over the generated trajectories. 
In particular, a set of $n \in \mathbb N^+$ trajectories $\tau = (\tau_j)_{j\in [n]}$ are distributed according to a probability measure \( p_\pi \in \Delta_{\mathcal K}\) defined as:
\begin{equation*}\label{eq:prob_mesure}
    p_{\pi}\left(\tau\right) = \prod_{j=1}^n \mu(s_{0,j})\prod_{t=0}^{T-1} \pi(a_{t,j} \mid s_{t,j}) P(s_{t+1,j} \mid s_{t,j}, a_{t,j}).
\end{equation*}
From the realization of these trajectories, one can extract their empirical distribution $\rho_n\in \Delta_{\mathcal{K}}$ defined as
$\rho_n(\tau) = \frac{1}{n} \sum_{j=1}^n \mathds{1}(\tau_{j} = \tau)$, where \(\tau_{j}\) is the trajectory at episode \(j\) in the history of interactions. Additionally, a trajectory obtained from
an interaction episode induces an empirical distribution over states $d_n \in \Delta_{\mathcal{S}}$ given by
\[
{d_n}(s) = \frac{1}{n T} \sum_{j=1}^{n} \sum_{t=0}^{T-1} \mathds{1}(S_{t,j} = s) \in \Delta_{\mathcal{S}}.
\]
With a slight overload of notation, we will denote as \(d_n \sim p_\pi\) an empirical state distribution obtained from a sequence of trajectories of length $T$, and with \(\tau_t \sim p_\pi^{t}\) a \textit{sub-trajectory} of length \(t < T\) drawn from \(p_\pi\). Finally, we denote the expectation of the empirical state distribution under the policy \(\pi\) as \(d_\pi(s) = \mathbb{E}_{d_n \sim p_\pi}[d_n(s)]\), such that $d_\pi \in \Delta_{\mathcal{S}}$  is called the state distribution induced by \(\pi\).

\paragraph*{State Entropy Maximization in MDPs.}~~In standard RL~\citep{sutton}, an agent interacts with an environment to maximize the (cumulative) reward collected from the MDP. In the absence of a reward, previous works~\citep{hazan2019provably,mutti2020intrinsically, mutti2021task} investigated the effects of optimizing a different metric, namely the state entropy $\mathcal{H}_{\text{s}}$ defined as:
\begin{equation*}\label{eq:mse_infinite}
    \mathcal{H}_{s}(\pi) \triangleq \mathcal{H}(d_\pi) := - \sum_{s \in \mathcal{S}} d_\pi(s) \log d_\pi(s).
\end{equation*}
More recently,~\citet{mutti2023challengingcommonassumptionsconvex} noted that only a finite number of episodes (called \emph{trials}) can be drawn in many practical scenarios. When only $n$ trajectories can be obtained, they propose to focus on $d_n$ rather than its expectation $d_\pi$, translating the \emph{infinite-trials} objective of Eq.~\eqref{eq:mse_infinite} with the \emph{finite-trials} version
\begin{equation*}\label{eq:mse_finite}
    \mathcal{J}_{s}(\pi) \triangleq \mathbb{E}_{d_n \sim p_{\pi}}  \mathcal{H} ( d_n).
\end{equation*}
The Jensen's inequality relates the two formulations as $\mathcal{H}_{s}(\pi)\geq \mathcal{J}_{s}(\pi)$.
\section{Problem Formulation}
\label{sec:parallelexp}
In this section, we extend the concept of state entropy to a novel objective able to incorporate parallel agents interacting with independent copies of the environment. In order to do so, we first need to introduce the following objects:

\begin{definition}[Distributions in PMDPs]
    Let $\mathbb M_p$ a PMDP with an interaction over $n \in \mathbb{N}$ episodes, then overall set of trajectories $\tau = \{\tau^i_j\}^{i \in [m]}_{j \in [n]} $ is distributed according to the following probability measure:
    \begin{equation*}\label{eq:prob_mesure_parallel}
        p_{\pi_p}(\tau) := p_{\pi^1, \dots, \pi^m} \left( \tau \right) = \frac{1}{m} \sum_{i=1}^m \sum_{j=1}^n p_{\pi^i}(\tau^i_j),
    \end{equation*}
    while the states are distributed according to the following mixture distribution:
\begin{equation*}
    d_{\pi_p}(s) = \frac{1}{m} \sum_{i=1}^m d_{\pi^i}(s).
\end{equation*}
\end{definition}

Collecting data from all agents induces an empirical (parallel) distribution over states which we denote as $d_{n,p}\in \Delta_{\mathcal{S}}$, so that it is possible to define the mixture distribution as the expectation of the empirical state distribution under the parallel policies, namely $d_{\pi_p}(s) = \mathbb{E}_{d_{n,p} \sim p_{\pi_p}}[d_{n,p}(s)]$. This object allows us to define objectives specifically designed for PMDPs, that are then able to leverage their structural properties. First of all, we introduce the following:

\begin{definition}[Parallel Learning Objective in Infinite Trials] Let $\mathbb M_m$ a PMDP, then the corresponding infinite trials objective can be defined as:
\begin{equation*}\label{eq:mse_parallel_infinite}
\mathcal{H}_{s,p}(\pi) \triangleq \mathcal{H}(d_{\pi_p}) := - \sum_{s \in \mathcal{S}} d_{\pi_p}(s) \log d_{\pi_p}(s).
\end{equation*}
\end{definition}
Now, while this objective will be shortly shown to enjoy rather interesting concentration properties, we are building over the motivating interest of exploring with a handful of trials. These considerations lead to:

\begin{definition}[Parallel Learning Objective in Finite Trials] Let $\mathcal M_m$ a PMDP with an interaction over $n \in \mathbb{N}$ episodes, then the corresponding finite trials objective can be defined as:
\begin{equation*}\label{eq:mse_parallel}
\mathcal{J}_p (\pi_p)\triangleq \mathbb{E}_{d_{n,p} \sim p_{\pi_{p}}}   \mathcal{H} ( d_{n,p}).
\end{equation*}
\end{definition}

We are now ready to state the novel problem of interest:
\begin{tcolorbox}
[colback=softbluegray!30, colframe=softbluegray, boxrule=0.5pt, arc=4pt, width=\linewidth, coltext=black, coltitle=black, title=\centering \textbf{Parallel State Entropy Maximization}]
\vspace{-6pt}
    \begin{equation}\label{eq;mse_parallel}
    \max_{\pi_p \in \Pi} \mathcal{J}_p (\pi_p)  = \max_{\pi_p \in \Pi}\left\{  \mathbb{E}_{{d_{n,p} \sim p_{\pi_{p}}}}  \mathcal{H} ( d_{n,p} ) \right\}
\end{equation}
\end{tcolorbox}

We denote by $\pi_p^\star \in \arg\max_\pi \mathcal{J}_p (\pi)$ a parallel policy that maximizes the parallel states entropy. In the next sections, we will show that Parallel Learning Objectives enjoy extremely nice properties: first of all, they concentrate faster than their non-parallel counterparts, motivating the need for parallel exploration; moreover, they can be easily optimized in a decentralized fashion through policy gradient methods.
\section{Fast Concentration of the Entropy in Parallel Settings}
\label{sec:theory}

    In this section, we present theoretical results that illustrate how the (infinite-trials) entropy of the state (or trajectory) distribution concentrates. We will use these results to demonstrate the significant impact of parallelization on the rate of entropy stabilization and exploration diversity.
    
    \paragraph*{Key Remarks.}~~While the following theorems are instantiated for state distributions, they are of broader theoretical interest and apply equally to distributions \emph{over trajectories}.\footnote{The number of samples $n$ will represent the number of episodes for trajectory distributions or the number of steps for state distributions.} Additionally, we adopt the convention of referring to the entropy and variance of distributions rather than those of random variables, as a slight abuse of terminology, since these quantities are technically defined for random variables but naturally extend to their associated distributions.
    
    We start by analyzing the entropy of state distributions in MDPs, for which the following result holds:
    \begin{tcolorbox}[colback=softbluegray, colframe=softbluegray,  boxrule=0.2pt,boxsep=0.2pt, arc=4pt, width=\linewidth]
    \begin{restatable}{theorem}{CIH}\label{th:CIH}
    Let $d_\pi$ be the (categorical) distribution induced by $\pi$ over the finite set $\mathcal{S}$ with $|\mathcal{S}| = S$, and let $d_n$ be the empirical distribution obtained from $n$ independent samples drawn from $d_\pi$. Then, for any $\epsilon > 0$, the following bound holds:
    \[
    \mathbb{P}\left( \mathcal H(d_\pi)\!-\!\mathcal H(d_n)\!>\!\epsilon \right) \leq 2S \exp\!\left(\!-n\frac{\epsilon^2\mathrm{Var}(d_\pi)}{2S^3\mathcal H^2(d_\pi)}\right)\!,
    \]
    where $H(d_n)$ and $H(d_\pi)$ denote the entropy of the empirical and true distributions, respectively, and $\mathrm{Var}(d_\pi) = \sum_{s\in[\mathcal S]} d_\pi(s)(1-d_\pi(s))$ is the variance of a random variable associated with the categorical distribution $d_\pi$.
    Furthermore, to ensure this concentration with confidence $1 - \delta$, the number of samples $n$ must satisfy the following lower bound:
    $$
    n \geq \frac{2S^3 \mathcal{H}^2(d_\pi)}{\epsilon^2 \mathrm{Var}(d_\pi)} \cdot \ln\frac{2S}{\delta}.
    $$
    \end{restatable}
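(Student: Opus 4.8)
The plan is to reduce the fluctuation of the nonlinear entropy functional to the fluctuations of the individual empirical frequencies $d_n(s)$, which are plain averages of i.i.d.\ indicators, and then to control those by an exponential tail bound. Concretely, each coordinate satisfies $d_n(s) = \frac{1}{n}\sum_{j=1}^n \mathds{1}(X_j = s)$ with $\mathds{1}(X_j = s) \sim \mathrm{Bernoulli}(d_\pi(s))$ i.i.d., so $\mathbb{E}[d_n(s)] = d_\pi(s)$ and the per-coordinate variance is exactly $d_\pi(s)(1 - d_\pi(s))$, whose sum over $s$ is $\mathrm{Var}(d_\pi)$. The first goal is therefore a \emph{deterministic} sensitivity bound controlling $\mathcal{H}(d_\pi) - \mathcal{H}(d_n)$ by the worst coordinate deviation $\max_s |d_n(s) - d_\pi(s)|$; once this is in place, a union bound over the $S$ coordinates together with a two-sided tail estimate produces the prefactor $2S$.

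For the sensitivity step I would expand the gap coordinate-wise. Writing $\phi(p) = -p\log p$ and $\delta_s = d_n(s) - d_\pi(s)$, a second-order mean-value expansion combined with $\sum_s \delta_s = 0$ gives
\[
\mathcal{H}(d_\pi) - \mathcal{H}(d_n) = \sum_s \delta_s \log d_\pi(s) + \frac{1}{2}\sum_s \frac{\delta_s^2}{\xi_s},
\]
where each $\xi_s$ lies between $d_n(s)$ and $d_\pi(s)$; the second sum is nonnegative and is precisely the $\chi^2$/KL curvature term $D_{KL}(d_n \| d_\pi)$. The factor $\mathcal{H}(d_\pi)$ can be injected into the linear term through the elementary inequality $-\log d_\pi(s) \le \mathcal{H}(d_\pi)/d_\pi(s)$, which is valid because $\mathcal{H}(d_\pi) \ge -d_\pi(s)\log d_\pi(s)$ (a single nonnegative summand of the entropy). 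The remaining powers of $S$ then arise from passing from an $\ell_1$/$\ell_2$ control of $\delta$ to the uniform deviation $\max_s |\delta_s|$ across the $S$ states, and from aggregating the per-coordinate variances into $\mathrm{Var}(d_\pi)$.

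For the probabilistic step I would apply Bernstein's inequality to each coordinate, with variance $d_\pi(s)(1 - d_\pi(s)) \le \mathrm{Var}(d_\pi)$, obtaining $\mathbb{P}(|\delta_s| > t) \le 2\exp(-\Theta(n t^2 / \mathrm{Var}(d_\pi)))$. Choosing the threshold $t$ dictated by the sensitivity constant so that the induced entropy gap equals $\epsilon$, and union-bounding over $s \in \mathcal{S}$, yields the claimed exponent $n\epsilon^2 \mathrm{Var}(d_\pi)/(2 S^3 \mathcal{H}^2(d_\pi))$ with the $2S$ prefactor. The sample-complexity corollary is then immediate and clean: setting the right-hand side equal to $\delta$ and solving $2S \exp(-nc) \le \delta$ with $c = \epsilon^2 \mathrm{Var}(d_\pi)/(2 S^3 \mathcal{H}^2(d_\pi))$ gives exactly $n \ge \frac{2 S^3 \mathcal{H}^2(d_\pi)}{\epsilon^2 \mathrm{Var}(d_\pi)} \ln \frac{2S}{\delta}$.

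The hard part will be the sensitivity step, not the concentration step: the map $d \mapsto \mathcal{H}(d)$ is not Lipschitz near the boundary of the simplex, so the curvature term $\sum_s \delta_s^2/\xi_s$ can blow up whenever some $d_\pi(s)$ is tiny and $d_n(s)$ undershoots it. Controlling this term, and more generally tracking exactly how $\mathrm{Var}(d_\pi)$, $\mathcal{H}(d_\pi)$, and the powers of $S$ combine so that $\mathrm{Var}(d_\pi)$ lands in the numerator of the exponent, is the delicate bookkeeping responsible for the (deliberately loose) $S^3$ dependence. I would handle it by restricting to the high-probability event on which every coordinate is well estimated, which yields a uniform lower bound on $\xi_s$ and lets the curvature contribution be absorbed into the same Bernstein estimate used for the linear term.
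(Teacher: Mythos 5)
Your high-level architecture (coordinate-wise decomposition of the entropy gap, two-sided per-coordinate Bernoulli tails, a union bound giving the $2S$ prefactor, then solving for $n$) matches the paper's proof, but there is a genuine gap exactly at the point you defer as ``delicate bookkeeping'': your proposal contains no mechanism that puts $\mathrm{Var}(d_\pi)$ in the \emph{numerator} of the exponent. Your probabilistic step, with the per-coordinate variance upper-bounded by $\mathrm{Var}(d_\pi)$, produces tails of the form $\exp(-\Theta(nt^2/\mathrm{Var}(d_\pi)))$ --- variance in the denominator --- so everything hinges on the threshold $t$ being proportional to a variance, and your sensitivity step does not deliver that. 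Concretely, injecting $\mathcal{H}(d_\pi)$ through $-\log d_\pi(s)\le \mathcal{H}(d_\pi)/d_\pi(s)$ forces per-coordinate thresholds $t_s \approx \epsilon\, d_\pi(s)/(S\,\mathcal{H}(d_\pi))$, and combining these with the per-coordinate variance $d_\pi(s)(1-d_\pi(s))$ yields exponents proportional to $\frac{n\epsilon^2}{S^2\mathcal{H}^2(d_\pi)}\cdot\frac{d_\pi(s)}{1-d_\pi(s)}$. The resulting bound is then governed by $\min_s \frac{d_\pi(s)}{1-d_\pi(s)}$, which is \emph{not} lower-bounded by $\mathrm{Var}(d_\pi)/S$: for $d_\pi=\bigl(\tfrac{1-\eta}{2},\tfrac{1-\eta}{2},\eta\bigr)$ with $\eta\to 0$, the minimum vanishes while $\mathrm{Var}(d_\pi)/\bigl(S\,\mathcal{H}^2(d_\pi)\bigr)$ stays bounded away from zero. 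So the route as sketched proves a strictly weaker statement that degenerates in the presence of rare states, and no restriction to a good event for the curvature term $\sum_s \delta_s^2/\xi_s$ can repair this, because the loss is already in your treatment of the linear term.

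The paper avoids this with two ideas you would need to import. First, instead of comparing $-\log d_\pi(s)$ to the total entropy, it bounds the per-coordinate deviation relative to the per-coordinate entropy $h(p_i)=p_i\log(1/p_i)$ via a concavity (secant) bound, $h(\hat p_i)\ge h(p_i)-\frac{h(p_i)}{\min(p_i,1-p_i)}|p_i-\hat p_i|$; this makes the allowed deviation $\frac{p_i(1-p_i)}{h(p_i)}\cdot\frac{\epsilon}{S}$, so after the Chernoff bound the per-coordinate exponent carries the factor $\frac{p_i(1-p_i)}{h^2(p_i)}$, which \emph{improves} for rare states (it grows like $1/(p_i\log^2(1/p_i))$) rather than degrading. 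Second, to replace the $\min_i$ of this factor by a global quantity, the paper uses a symmetry argument: since the Bernoulli variance and the Bernoulli entropy are both symmetric about $p=\tfrac12$ and maximized there, the coordinate minimizing their ratio is the one maximizing both, whence $\min_i \frac{p_i(1-p_i)}{h^2(p_i)} \ge \frac{\max_i p_i(1-p_i)}{\max_i \mathcal{H}^2(p_i)}\ge \frac{\mathrm{Var}(d_\pi)}{S\,\mathcal{H}^2(d_\pi)}$, using $\max\ge$ average in the numerator and the fact that each Bernoulli (grouped) entropy is at most the categorical entropy in the denominator. That chain is what places $\mathrm{Var}(d_\pi)$ in the numerator and contributes the third factor of $S$; without an analogue of it, your argument cannot reach the stated bound.
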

    \end{tcolorbox}
    This theorem establishes an upper bound on the probability that the entropy difference between the true and empirical distributions exceeds \( \epsilon \). Specifically, 
    the probability of large deviations between these two entropies decreases exponentially with $n$, with the rate of convergence influenced by the entropy of the true distribution $d^\pi$.
    Notably, as $\lim_{\mathcal H(d_\pi)\rightarrow 0}\frac{\mathcal{H}^2(d_\pi)}{\mathrm{Var}(d_\pi)} = 0$, distributions with lower entropy require fewer samples for concentration, implying they are easier to approximate empirically. The full proof can be found in Appendix~\ref{sec:appendixproof}. 
    
    \paragraph*{Primary Insight.}~~This result suggests a key advantage of parallel exploration: when multiple agents explore the environment simultaneously, each can focus on different regions of the state space. As a result, they induce distributions with lower entropy compared to a single policy covering the entire space. 
    
    We now consider the case of PMDPs with $m$ parallel agents, each independently collecting $n/m$ samples from the environment for a total of $n$ samples. This setting induces mixture distributions $d_{\pi_p}$ and empirical mixture distributions $d_{n,p}$ as defined in the previous sections. 
    
    
    
    
    The entropy of the mixture distributions decomposes as follows:
    $$\mathcal H(d_{\pi_p}) = \frac{1}{m}\sum_{i=0}^m\mathcal{H}(d_{\pi^i}) + \frac{1}{m}\sum_{i=0}^mD_{KL}(d_{\pi^i}||d_{\pi_p}), $$
    where the first term is the average entropy of the individual agent distributions, and the second term captures diversity among these distributions via the KL divergence.
    By analyzing this decomposition, we can get insights into the potential advantages of parallel exploration versus single-agent exploration. To achieve high-entropy mixture distributions, we have two key strategies:
    \begin{description}
        \item[Homogeneous Exploration:] All agents follow the same entropy-maximizing policy, leading to a mixture with an entropy equivalent to that of a single agent (the average of the entropies is equal to the single-agent entropy, and the second term in the above decomposition is equal to zero). The primary benefit of parallelism, in this case, is computational speedup rather than sample complexity reduction.
        \item[Diverse Exploration:] Agents follow distinct policies with lower individual entropy, but their induced distributions are maximally different. Here, the average entropy remains low, but the overall mixture entropy is high due to diversity (i.e., high values of the KL divergences).
    \end{description}
     For example, if $m$ agents induce distributions with disjoint supports, the entropy of the mixture increases by $\log(m)$ while maintaining low individual entropy, enabling faster concentration. However, if the agents' distributions overlap, the average KL divergence term decreases, necessitating higher individual entropies to maintain high mixture entropy, thereby increasing sample complexity.
     
    This analysis shows that parallel exploration is beneficial not only for reducing data collection time but also for improving sample efficiency by enabling broader state-space coverage with fewer samples.

\section{Policy Gradient for Parallel Exploration with a Handful of Trials}
\label{sec:algorithm}

As stated before, a core motivation of this work is addressing the problem of exploration in practical scenarios, where a strong parallelization of the environment is used to overcome the difficulty to sample from complex simulators or even physical instantiations of the agents. In such setting, it is crucial to achieve good performances over a finite number of realizations, and for this reason our attention now shifts towards finite trials objectives. Specifically, we will focus on the single trial case, i.e., $n=1$, and show that rather than instantiating $m$ copies of the same algorithm, the parallel learning objective allows for an alternative formulation, where agents explicitly leverage parallelization. The core idea lies in the implementation of a policy gradient of the parallel formulation of Eq.~\ref{eq:mse_parallel} on a set of parametric policies $\pi_{\theta_i} : \mathcal{S} \to \Delta_\mathcal{A}$, where $\theta_i \in \Theta_i\subset \mathbb{R}^d$ is a parameter vector in a \textit{d}-dimensional space, uniquely associated with the agent $i$. The parallel policy is then defined as $\pi_\theta = (\pi_{\theta_i})_{i \in [m]}$.

In the following, we derive the gradient of the parallel objective with respect to the policy parameters.
For the sake of simplifying the notation, we will refer to $d_{n,p}$ as $d_{p}$ for simplicity.
Starting from the cost function:
\begin{equation*}
    \mathcal{J}(\pi_{p}) = \int{p_{\pi_{p}}(d_{p}) \mathcal{H} ( d_{p} )},
\end{equation*}
we take the derivative with respect to the parameters ${\theta_i}$: 
\begin{equation*}
   \frac{\partial \mathcal{J}(\pi_{p}) }{\partial \theta_i}  = \frac{\partial}{\partial \theta_i}\int{p_{\pi_{p}}(d_{p}) \mathcal{H} ( d_{p})}.
\end{equation*}

Since $\theta_i$ only appears in $p_{\pi_p}(\cdot)$, we can then write:
\begin{equation*}
\frac{\partial \mathcal{J}(\pi_{p}) }{\partial \theta_i} = \int \frac{\partial p_{\pi_{p}}(d_{p})}{\partial \theta_i} \mathcal{H} ( d_{p} ),
\end{equation*}
Now, we can employ standard log-trick arguments after noticing that the parameter $\theta_i$ only appears in agent $i$'s policy, that is conditioned over the respective trajectory $\tau_i$, allowing one to write:
\begin{equation*}
    \frac{\partial p_{\pi_{p}}(d_{p})}{\partial \theta_i} = {p_{\pi_{p}}(d_{p}) \sum_{t=0}^{T-1}  \frac{\partial \log\pi_{\theta_i}(a^i_t | s^i_t)}{\partial \theta_i}},
\end{equation*}
Then, after substituting back into the original integral we get:
\begin{equation*}
\frac{\partial \mathcal{J}(\pi_p)}{\partial \theta_i} = \int p_{\pi_p}(d_{p})\log\pi_{\theta_i}\mathcal{H}(d_{p}).
\end{equation*}
defining the so-called \emph{score function} as $\log\pi_{\theta_i} \triangleq\sum_{t=0}^{T-1} \nabla_\theta\log \pi_{\theta_i}(a_t | s_t)$, with $\pi_{\theta_i}(a_t | s_t)$ being the probability for agent $i$ of taking action $a_t$ in state $s_t$ in trajectory $\tau_i$. On the other hand, one should note that the entropy is computed over the empirical distribution $d_{p}$ induced by \emph{all} the agents. In other words, each agent can independently perform gradient updates via a joint estimation of the entropy feedback. In the following, we estimate the gradient of the parallel objective $\mathcal{J}(\pi_\theta)$ with respect to its policy parameters $\theta_i$ using an unbiased Monte-Carlo estimator over $K \in \mathbb N$ sampled parallel trajectories:
\begin{equation*}
\nabla_{\theta_i} \mathcal{J}(\pi_\theta) \approx \frac{1}{K} \sum_{k=1}^{K} \left( \sum_{t=0}^{T-1} \nabla_\theta \log \pi_{\theta_i} (a^i_{t,k} | s^i_{t,k}) \right) \mathcal{H}(d^k_{p}).
\end{equation*}

\begin{minipage}[T]{\linewidth}
\begin{tcolorbox}[colback=softbluegray!30, colframe=softbluegray, boxrule=0.5pt, arc=4pt, width=\linewidth, coltext=black, coltitle=black, title=\textbf{Algorithm 1}: Policy Gradient for Parallel States Entropy maximization (\textbf{PGPSE})]
\refstepcounter{algorithm}
\label{alg2:reinforce_states}
\begin{algorithmic}[1]
    \STATE \textbf{Input}: Episodes N, Trajectories K, Batch Size B, Learning Rate \(\alpha\), Parameters $\theta=(\theta^i)_{i \in [m]}$
    \FOR{$e \in \{1, \dots, N\}$}
       \FOR{$itr \in \{1, \dots, B\}$}
           \FOR{$k \in \{1, \dots, K\}$}
                \STATE $\tau \sim \pi_{\theta}$ \quad \COMMENT{Sample parallel trajectories}
                \STATE $\log\pi_{\theta_i} \gets \sum_{t=1}^{T-1} \nabla_{\theta} \log \pi_{\theta}(a_t \mid s_t)$
                \STATE $d_p(s) \gets \frac{1}{km} \sum_{j,i,t=1}^{m,k,T} \bm{1}(s_{t,i,j} = s)$
                \STATE $\nabla_{\theta} \mathcal{J}(\theta) \mathrel{+}= \log\pi_{\theta_i} \cdot \mathcal{H}(d_p)$
            \ENDFOR
        \ENDFOR
        \STATE $\nabla_{\theta} \mathcal{J}(\theta) \gets \frac{1}{B} \nabla_{\theta} \mathcal{J}(\theta)$
        \STATE $\theta \gets \theta + \alpha \nabla_{\theta} \mathcal{J}(\theta)$
    \ENDFOR
    \STATE \textbf{Output}: Policies $\pi_{\theta}= (\pi^i_{\theta^i})_{i \in [m]}$
\end{algorithmic}
\tcbset{label={alg:trpe}}
\end{tcolorbox}
\end{minipage}

Thus, the partial gradient update consists of an internal summation over the log of the $i$-th policy executed by agent $i$, as it interacts over the horizon $T$, experiencing various action-state combinations. This internal summation is further aggregated across the external summation over $K$ trajectories.
This estimator approximates the true gradient by averaging the score function of sampled trajectories, weighted by the entropy of such induced state distribution.
By limiting parallel agents to experience only one trajectory per each, we average over a mini-batch $\mathcal{B} = \{ (\tau_i) \}_{i=1}^{n}$ to reduce the variance of the reinforce estimator, with $ B= |\mathcal{B}|$ being the batch-size.
The pseudo-code of the resulting Algorithm, called \emph{Policy Gradient for Parallel States Entropy maximization} (\textbf{PGPSE}), is reported in Algorithm~\ref{alg2:reinforce_states}.

\section{Empirical Corroboration}
\label{sec:experiments}

\begin{figure*}[t]
    \centering
    \begin{tikzpicture}
    \node[draw=black, rounded corners, inner sep=2pt, fill=white] (legend) at (0,0) {
        \begin{tikzpicture}[scale=0.8]
            \draw[thick, color={rgb,255:red,247; green,112; blue,137}, opacity=0.8] (0,0) -- (1,0);
            \fill[color={rgb,255:red,247; green,112; blue,137}, opacity=0.2] (0,-0.1) rectangle (1,0.1);
            \node[anchor=west, font=\scriptsize] at (1.2,0) {2 Agents};
            
            \draw[thick, dashed, color={rgb,255:red,0; green,28; blue,127}, opacity=0.8](2.7,0) -- (3.7,0);
            \fill[color={rgb,255:red,0; green,28; blue,127}, opacity=0.2] (2.7,-0.1) rectangle (3.7,0.1);
            \node[anchor=west, font=\scriptsize] at (3.7,0) {1 Agent $K' = 2K$};

            \draw[thick, color={rgb,255:red,59; green,163; blue,236}, opacity=0.8] (6.4,0) -- (7.4,0);
            \fill[color={rgb,255:red,59; green,163; blue,236}, opacity=0.2] (6.4,-0.1) rectangle (7.4,0.1);
            \node[anchor=west, font=\scriptsize] at (7.6,0) {6  Agents};
            
            \draw[thick, dashed, color={rgb,255:red,140; green,8; blue,0}, opacity=0.8] (9.2,0) -- (10.2,0);
            \fill[color={rgb,255:red,140; green,8; blue,0}, opacity=0.2] (9.2,-0.1) rectangle (10.2,0.1);
            \node[anchor=west, font=\scriptsize] at (10.4,0) {1 Agent $K'=6K$};
            
        \end{tikzpicture}
    };
\end{tikzpicture}
    \vspace{-1pt}

    \centering
    \begin{subfigure}[b]{0.245\textwidth}
        \includegraphics[trim=15 0 20 60, clip, width=\textwidth]{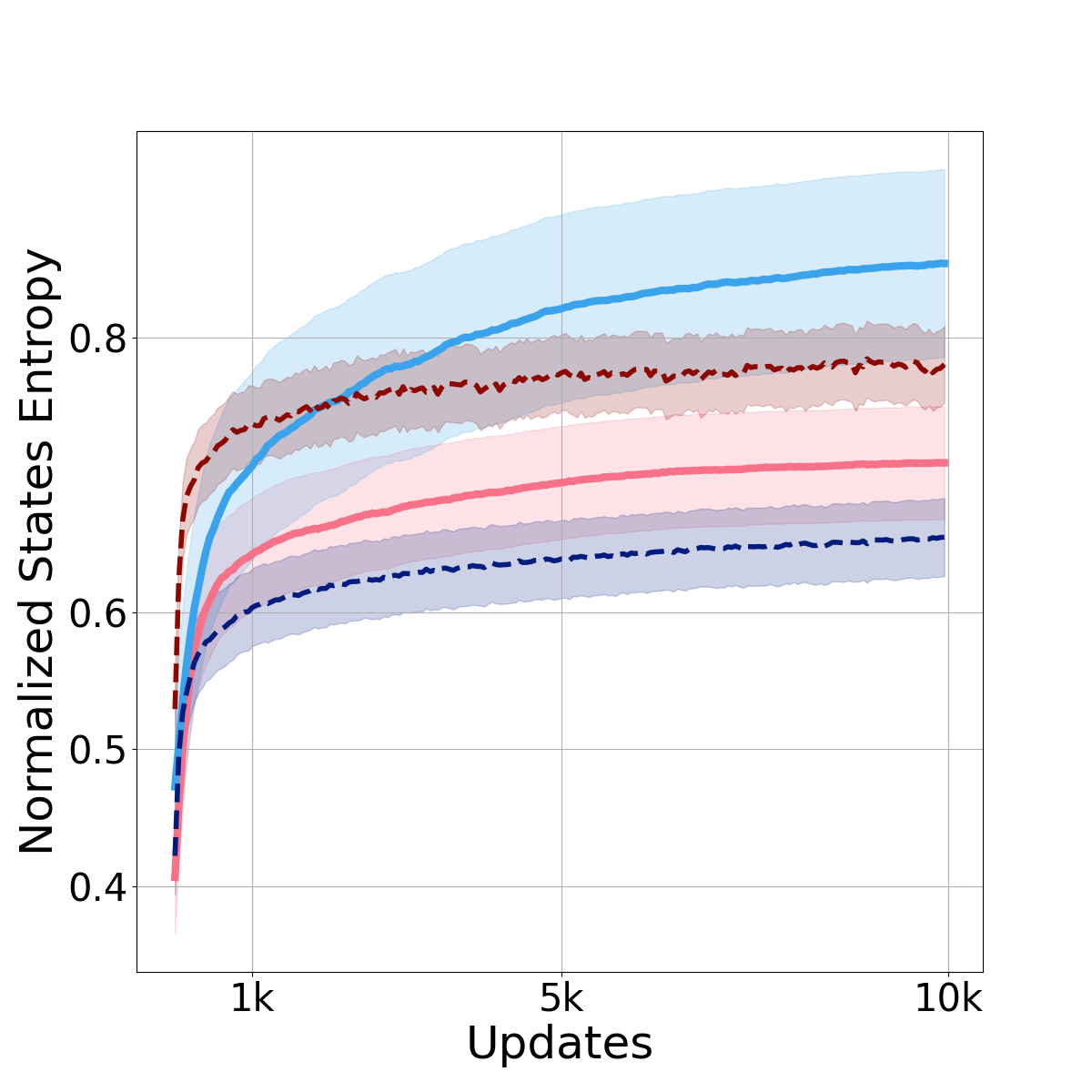}
        \vspace{-15pt}
        \caption{\centering State Entropy (\emph{Room-det})}
        \label{subfig:image2}
    \end{subfigure}
    \hfill
    \begin{subfigure}[b]{0.245\textwidth}
        \includegraphics[trim=15 0 20 60, clip, width=\textwidth]{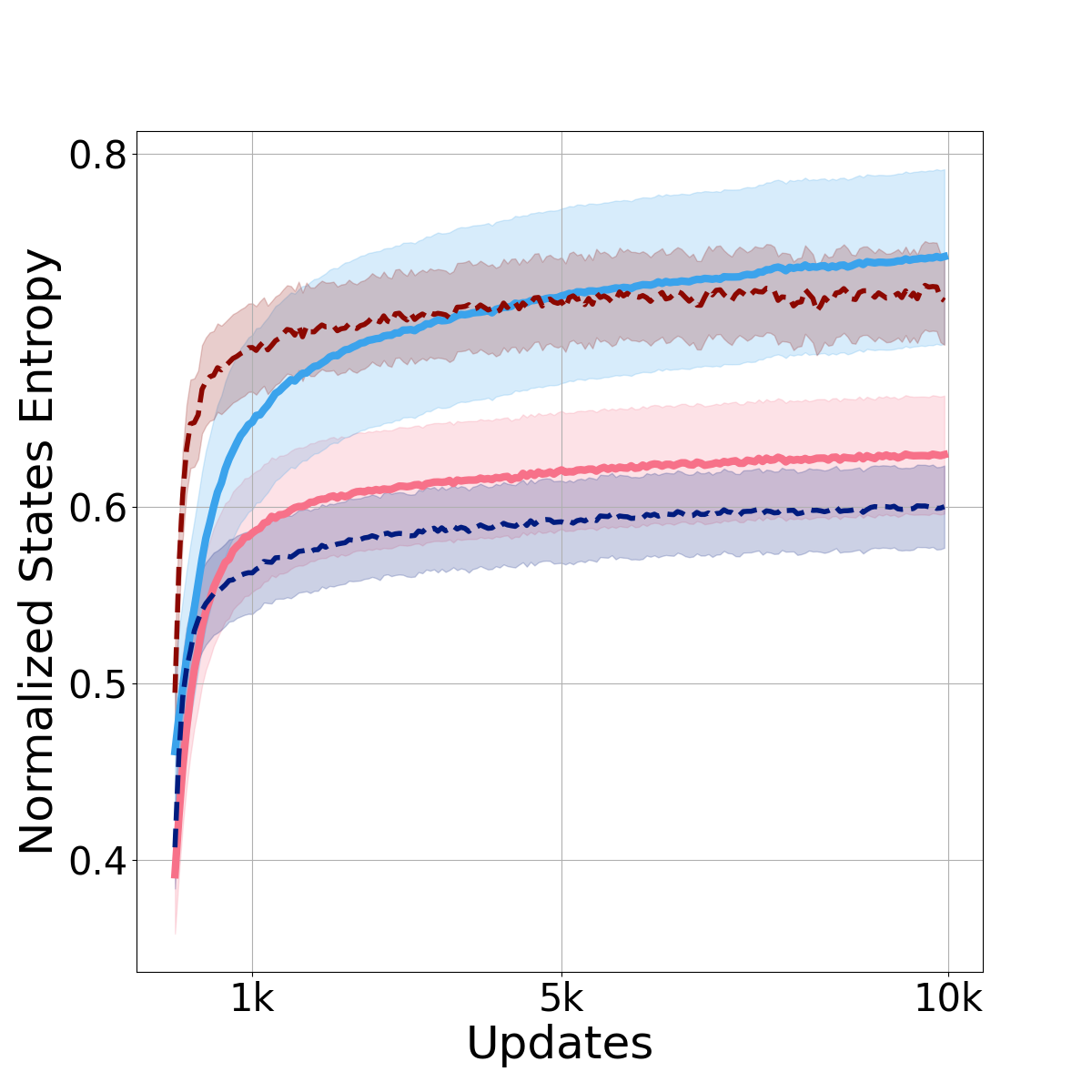}
        \vspace{-15pt}
        \caption{\centering State Entropy (\emph{Room-stoc})}
        \label{subfig:image3}
    \end{subfigure}
    \hfill
    \begin{subfigure}[b]{0.245\textwidth}
        \includegraphics[trim=15 0 20 60, clip, width=\textwidth]{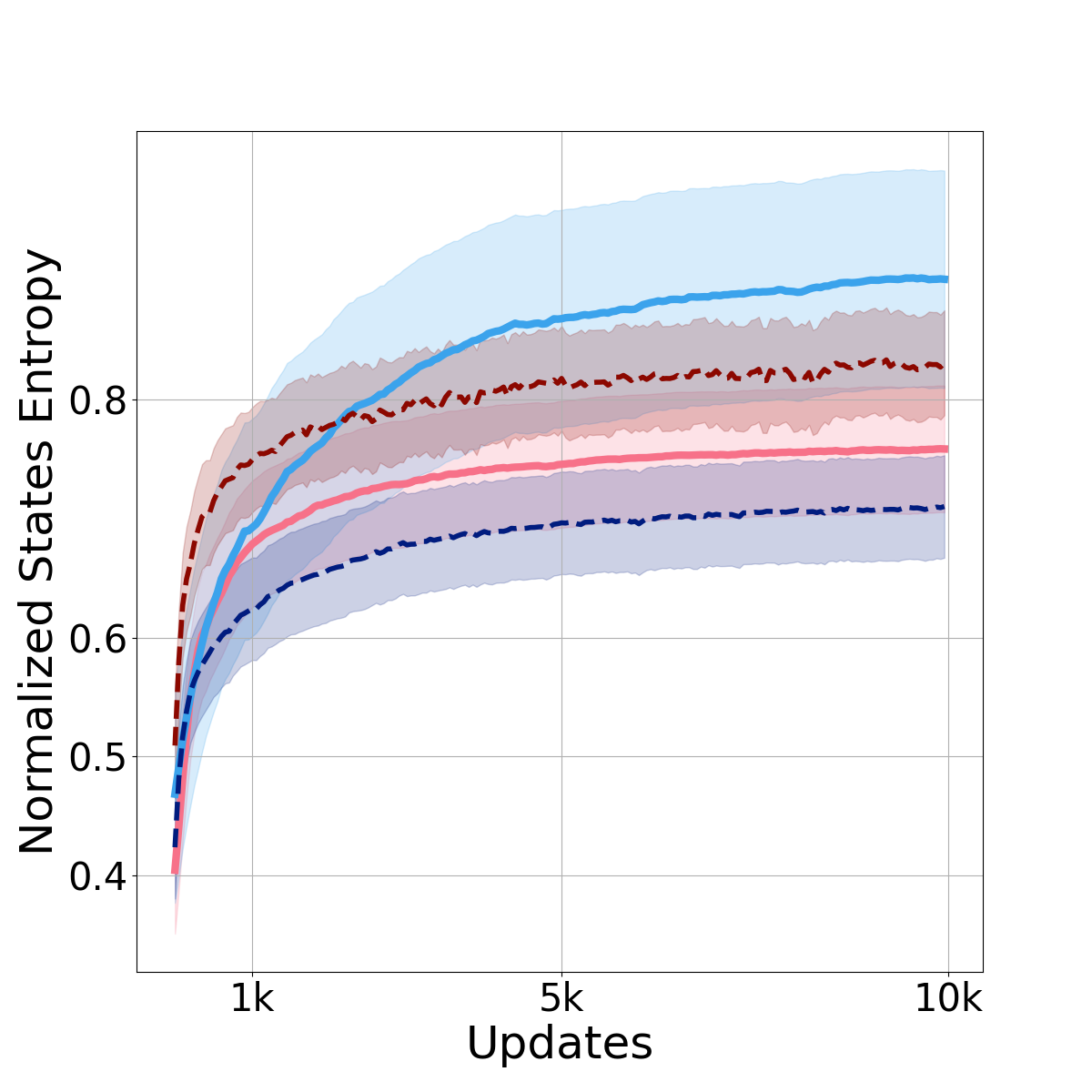}
        \vspace{-15pt}
        \caption{\centering State Entropy (\emph{Maze-det})}
        \label{subfig:image4}
    \end{subfigure}
    \hfill
    \begin{subfigure}[b]{0.245\textwidth}
        \includegraphics[trim=15 0 20 60, clip, width=\textwidth]{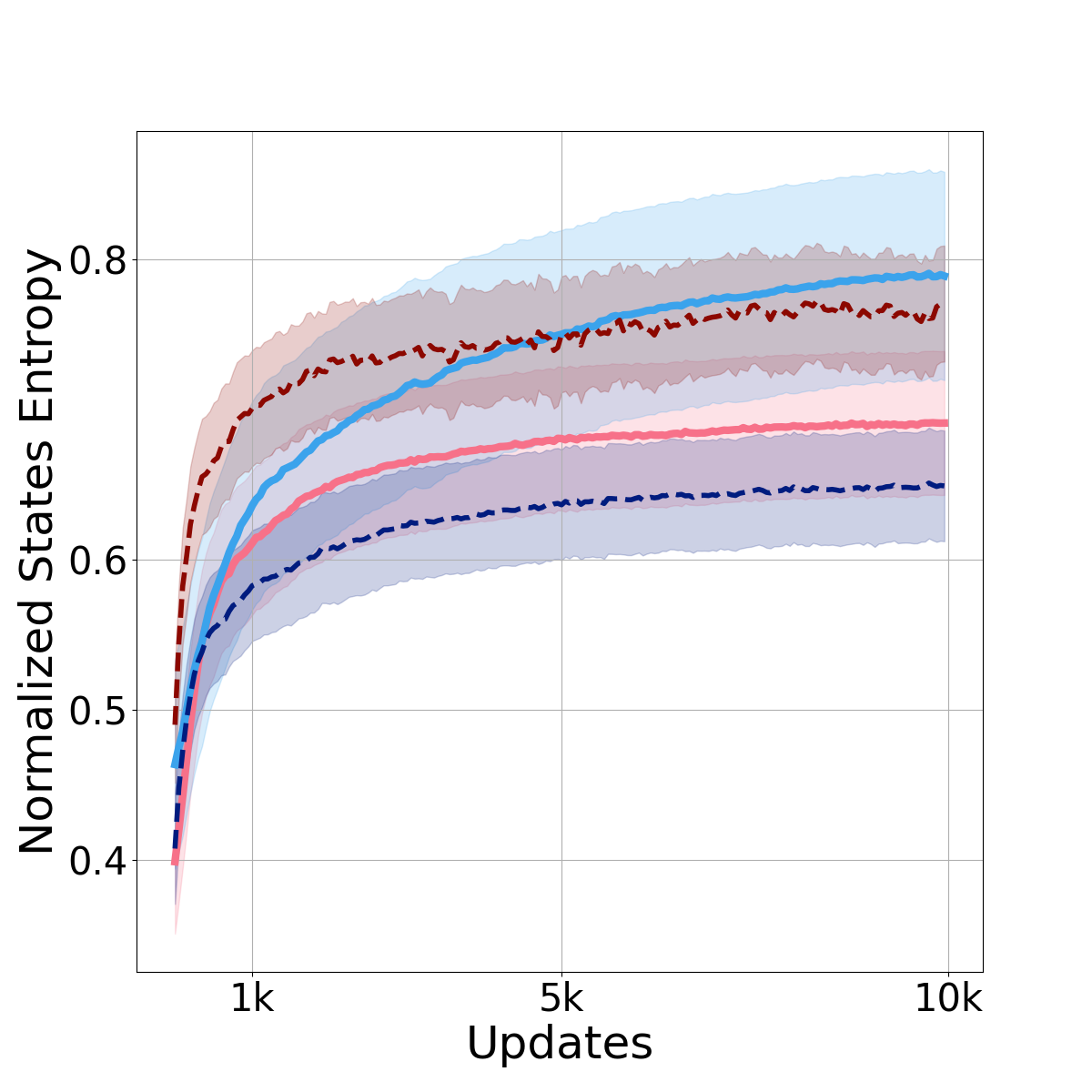}
        \vspace{-15pt}
        \caption{\centering State Entropy (\emph{Maze-stoc})}
        \label{subfig:image5}
    \end{subfigure}
    \vfill
    \begin{subfigure}[b]{0.24\textwidth}
        \includegraphics[trim=20 0 20 60, clip, width=\textwidth]{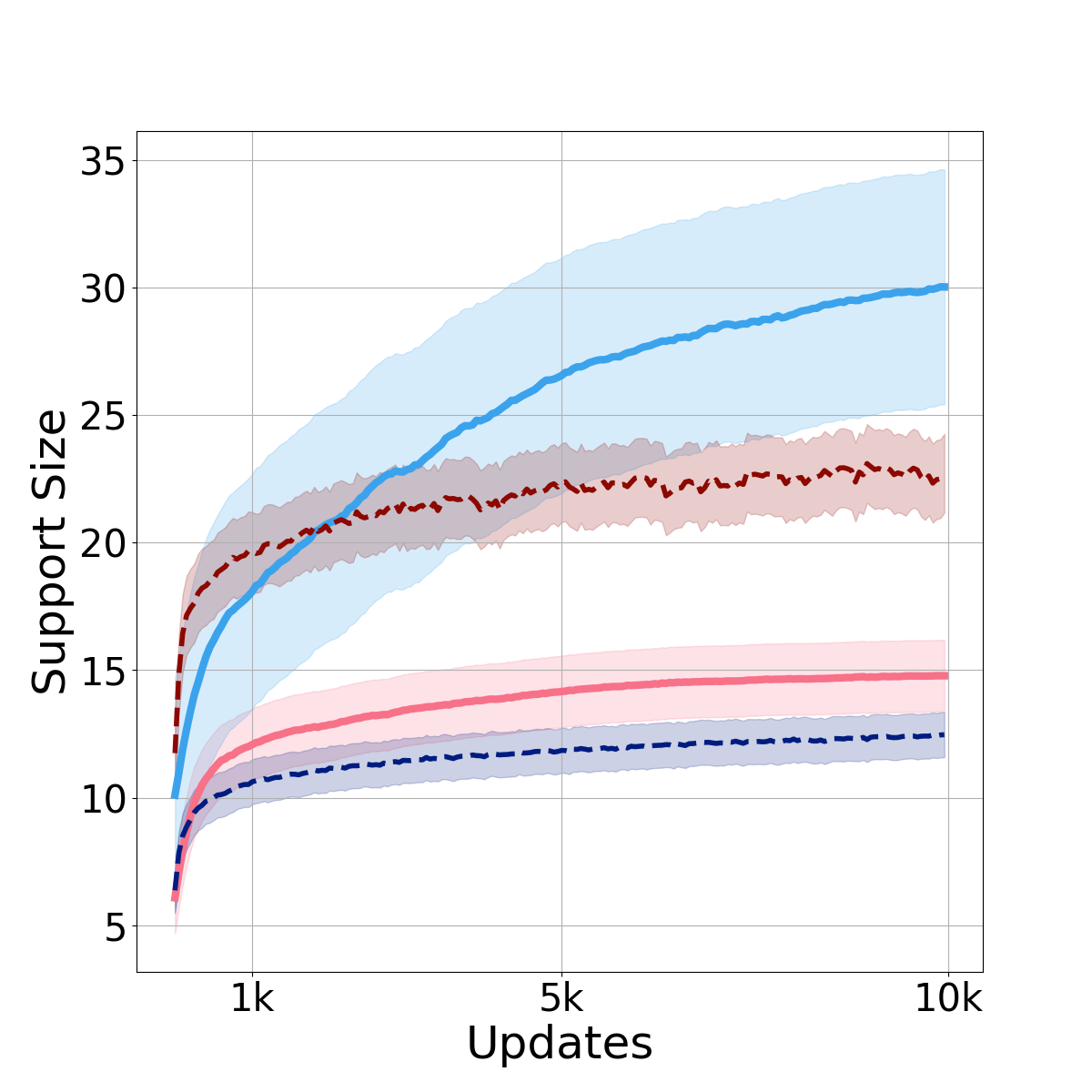}
        \vspace{-15pt}
        \caption{\centering Support size (\emph{Room-det})}
        \label{subfig:image6}
    \end{subfigure}
    \hfill
    \begin{subfigure}[b]{0.245\textwidth}
        \includegraphics[trim=0 0 20 100, clip, width=\textwidth]{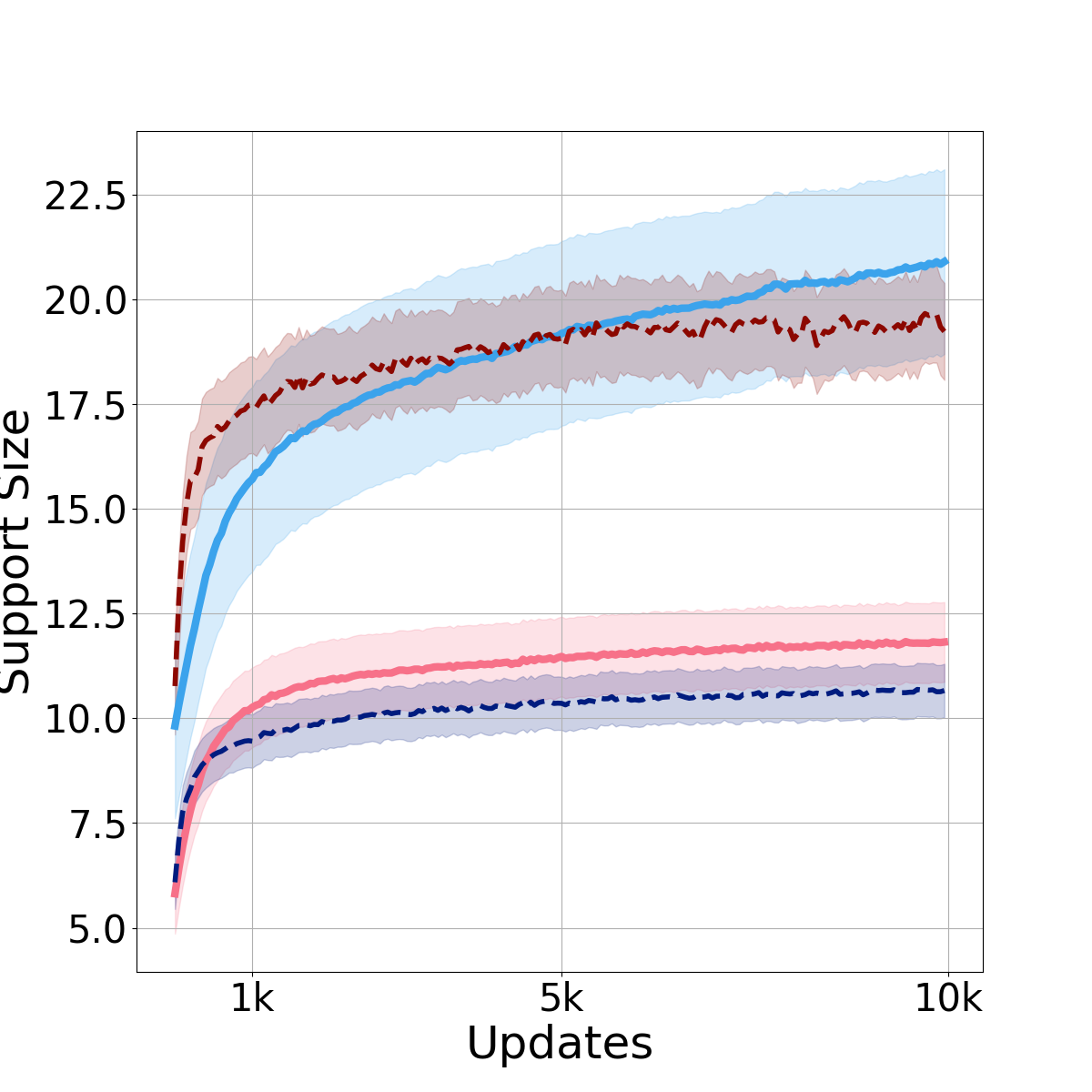}
        \vspace{-15pt}
        \caption{\centering Support size (\emph{Room-stoc})}
        \label{subfig:image7}
    \end{subfigure}
    \hfill
    \begin{subfigure}[b]{0.24\textwidth}
        \includegraphics[trim=20 0 20 60, clip, width=\textwidth]{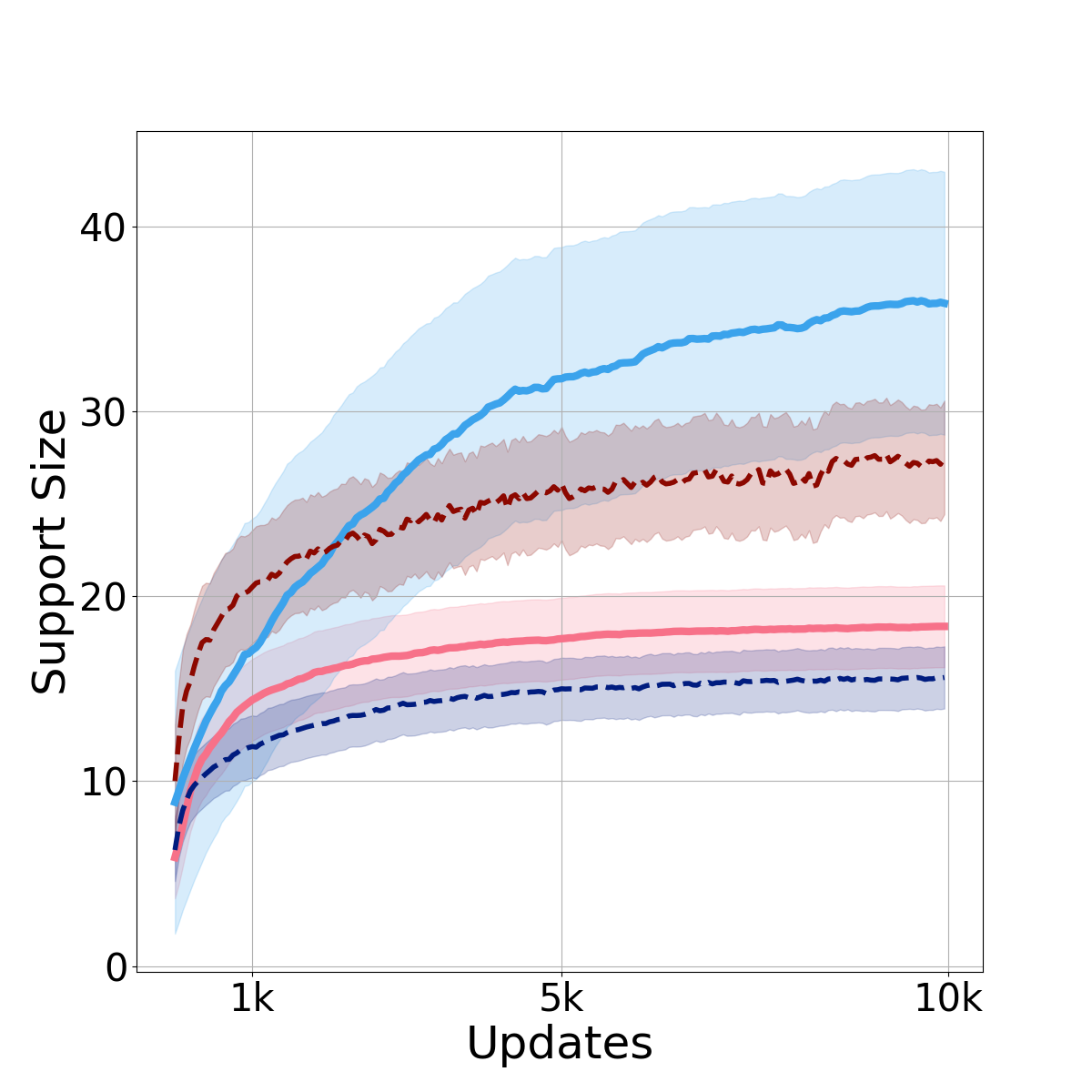}
        \vspace{-15pt}
        \caption{\centering Support size (\emph{Maze-det})}
        \label{subfig:image8}
    \end{subfigure}
    \hfill
    \begin{subfigure}[b]{0.24\textwidth}
        \includegraphics[trim=20 0 20 60, clip, width=\textwidth]{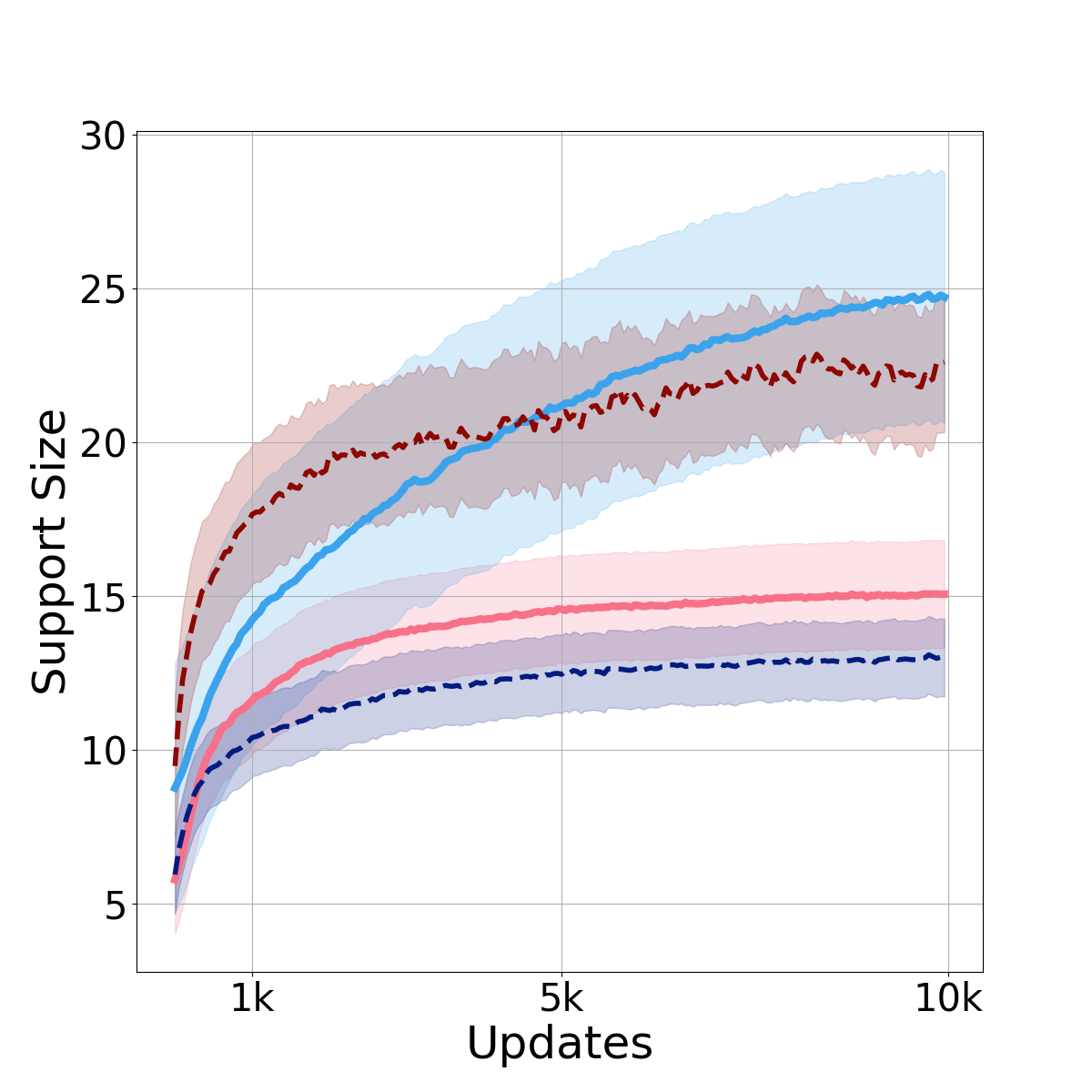}
        \vspace{-15pt}
        \caption{\centering Support size (\emph{Maze-stoc})}
        \label{subfig:image9}
    \end{subfigure}
    \vspace{-5pt}
    \caption{The top row (a–d) shows the progression of normalized state entropy across different environments, while the bottom row (e–h) depicts the corresponding size of the support of the entropy. Each plot shows the performance of parallel agents (2 or 6) against single agents taking 2 and 6 trajectories, respectively. We report average results and standard deviation over 5 independent runs.}
    \label{fig:performances}
\end{figure*}

We report numerical experiments in simple yet illustrative domains to demonstrate the advantage of exploration with distinct parallel agents over single agents. 
The section is organized into three parts:
\begin{itemize}[itemsep=-1pt, leftmargin=*, topsep=2pt]
    \item \textbf{Policy Gradient Optimization}: This part analyzes the results of the learning process based on Algorithm \ref{alg2:reinforce_states}, evaluating the performance of the learned policy in terms of normalized entropy and support size. 
    \item \textbf{Dataset Entropy Evaluation}: This part takes a deeper look at the entropy of a dataset collected from the policies trained as in the previous section. 
    \item \textbf{Offline RL}: This part provides a comparative analysis of how the dataset collected with parallel maximum entropy agents benefits the performance of offline RL. 
\end{itemize}
For all the experiments, we consider variations of discrete grid-world domains, which we depicted in Appendix~\ref{sec:Paralell Environments} (see Figures~\ref{fig:rooms},~\ref{fig:maze}). One family (referred to as \emph{Room}) is composed of two rooms connected by a narrow corridor, with a total of 43 states and 4 actions (one to move in each direction). The other (referred to as \emph{Maze}) is a maze of various connected paths, with same number of total states and actions as before. For each family, we consider a deterministic version (\emph{det}) and a stochastic version (\emph{stoc}) in which the agent's chosen actions are flipped to random actions with some probability. Appendix~\ref{appendix:experiments} reports further experimental details, with additional visualizations and results.\vspace{-10pt}
\paragraph*{Policy Gradient Optimization.}~~First of all, we test the ability of PGPSE (Algorithm~\ref{alg2:reinforce_states}) in addressing maximum state entropy exploration with parallel agents. We run multiple instances of the algorithm with an increasing number of agents \( m \in \{2, 4, 6\} \).\footnote{In the main text we omit the result with 4 agents, which are reported in Figure \ref{fig:performances4agents} of the Appendix.} As a baseline, we consider the performance of the same algorithm training a single agent for the finite-trial objective (Eq.~\ref{eq:mse_finite}), with a number of trials $K'$ matching the number of trajectories taken by the parallel agents. We compare the results in terms of (normalized) state entropy and the size of the entropy support (i.e., the number of unique visited states) in all the domains.

Figure~\ref{fig:performances} reports the resulting learning curves, obtained over five independent runs. For a fair comparison of algorithms with varying number of agents, the performance is reported as a function of the policy updates, which translate to \( B \times m \times K \times T \) total interactions with the environment. Across the board, the performance of parallel agents (solid lines) consistently outperforms the single agent counterparts (dashed lines). Indeed, they achieve higher entropy values, thanks to their ability to explore more states as it is evident from the larger size of the support. This behavior can be attributed to the intrinsic diversity introduced by parallel agents: To optimize the team state distribution, each agent learns an individual behavior that is synergic with the others.
\vspace{-11pt}

\paragraph*{Dataset Entropy Evaluation.}~~Then, we tested the capabilities of the trained policies in collecting diverse experiences, by analyzing the properties of a single dataset. 
Figure~\ref{fig:datasets} illustrates the normalized empirical state entropy of each dataset collected by the $m$ agents, compared against the entropy of the dataset collected by a single agent over $K = m$ trajectories (trained to maximize the entropy on the corresponding number of trajectories), and a random policy.

To construct these datasets, we sample one trajectory per agent from the parallel policies, evaluating them across five seeds. This procedure reveals that  datasets generated through parallel exploration exhibit higher entropy than those collected by a single agent or a random policy. It is noteworthy that experience collected in parallel also exhibits minimal variance across runs, indicating that agents tend to generate trajectories in a quasi-deterministic manner, preserving consistency in the collected data.

\begin{figure}[t]
    \centering
    \begin{tikzpicture}
    \node[draw=black, rounded corners, inner sep=2pt, fill=white] (legend) at (0,0) {
        \begin{tikzpicture}[scale=0.7]
            \def\linelen{0.3}
            \def\linegap{0.05}
            \def\textgap{0.01}
            \def\y{0}

            \def\xA{0}
            \def\xB{1.8}
            \def\xC{4.8}
            \def\xD{6.6}
            \def\xE{9.6}

            \draw[thick, color={rgb,255:red,247; green,112; blue,137}, opacity=0.8] (\xA,\y) -- ({\xA + \linelen},\y);
            \fill[color={rgb,255:red,247; green,112; blue,137}, opacity=0.2] (\xA,{\y - 0.1}) rectangle ({\xA + \linelen},{\y + 0.1});
            \node[anchor=west, font=\scriptsize] at ({\xA + \linelen + \textgap}, \y) {2 Agents};

            \draw[thick,  color={rgb,255:red,0; green,28; blue,127}, opacity=0.8] (\xB,\y) -- ({\xB + \linelen},\y);
            \fill[color={rgb,255:red,0; green,28; blue,127}, opacity=0.2] (\xB,{\y - 0.1}) rectangle ({\xB + \linelen},{\y + 0.1});
            \node[anchor=west, font=\scriptsize] at ({\xB + \linelen}, \y) {1 Agent $K'\!\!\!=\!2K$};

            \draw[thick, color={rgb,255:red,59; green,163; blue,236}, opacity=0.8] (\xC,\y) -- ({\xC + \linelen},\y);
            \fill[color={rgb,255:red,59; green,163; blue,236}, opacity=0.2] (\xC,{\y - 0.1}) rectangle ({\xC + \linelen},{\y + 0.1});
            \node[anchor=west, font=\scriptsize] at ({\xC + \linelen }, \y) {6 Agents};

            \draw[thick,  color={rgb,255:red,140; green,8; blue,0}, opacity=0.8] (\xD,\y) -- ({\xD + \linelen},\y);
            \fill[color={rgb,255:red,140; green,8; blue,0}, opacity=0.2] (\xD,{\y - 0.1}) rectangle ({\xD + \linelen},{\y + 0.1});
            \node[anchor=west, font=\scriptsize] at ({\xD + \linelen }, \y) {1 Agent $K'\!\!\!=\!6K$};

            \draw[thick, color=gray, opacity=0.8] (\xE,\y) -- ({\xE + \linelen},\y);
            \fill[color=gray, opacity=0.2] (\xE,{\y - 0.1}) rectangle ({\xE + \linelen},{\y + 0.1});
            \node[anchor=west, font=\scriptsize] at ({\xE + \linelen }, \y) {Random};
        \end{tikzpicture}
    };
\end{tikzpicture}
    
    \begin{subfigure}[b]{0.49\columnwidth}
        \includegraphics[width=\linewidth]{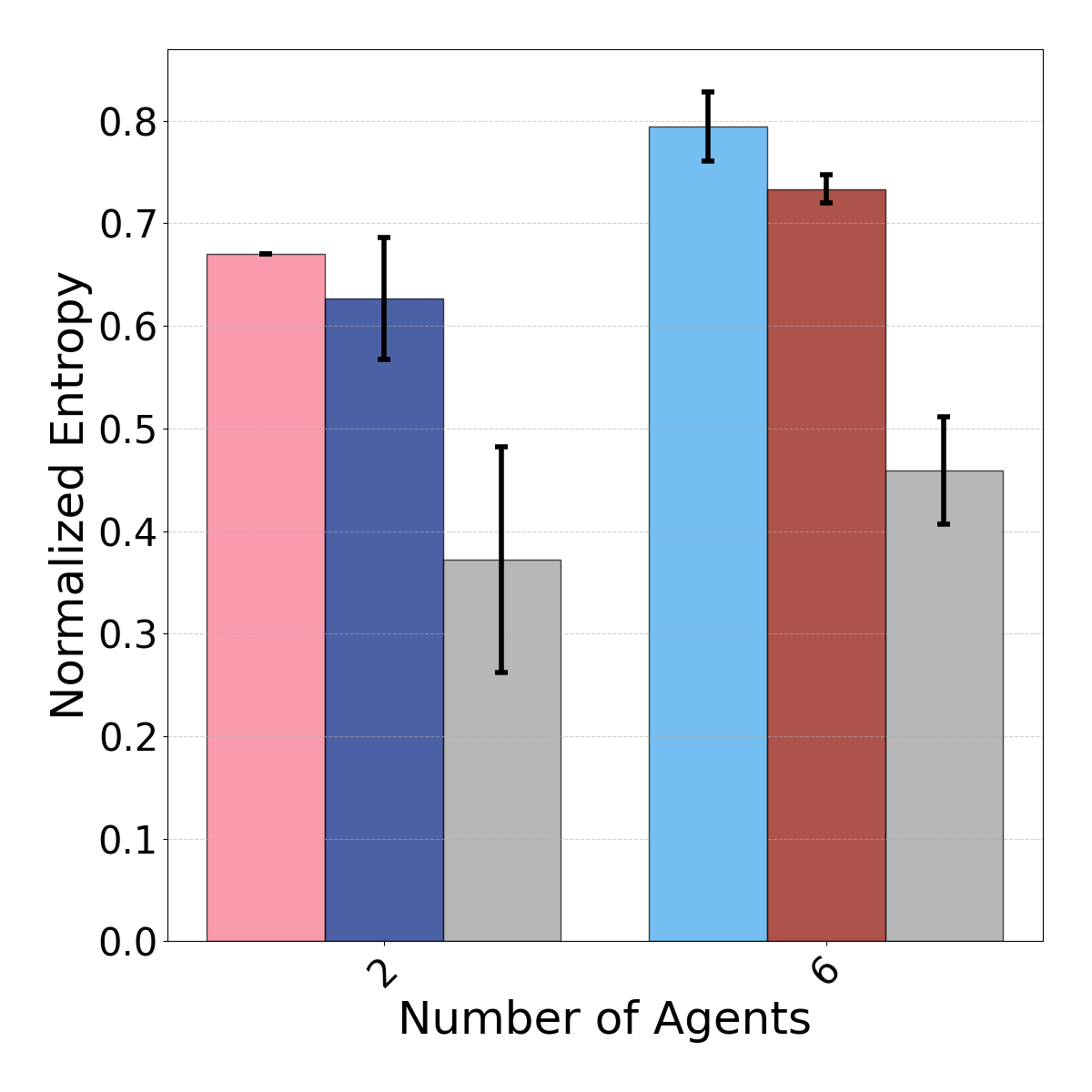}
        \vspace{-15pt}
        \caption{\centering Dataset Entropy (\emph{Room-det})}
        \label{subfig:image10}
    \end{subfigure}
    \begin{subfigure}[b]{0.49\columnwidth}
        \includegraphics[width=\linewidth]{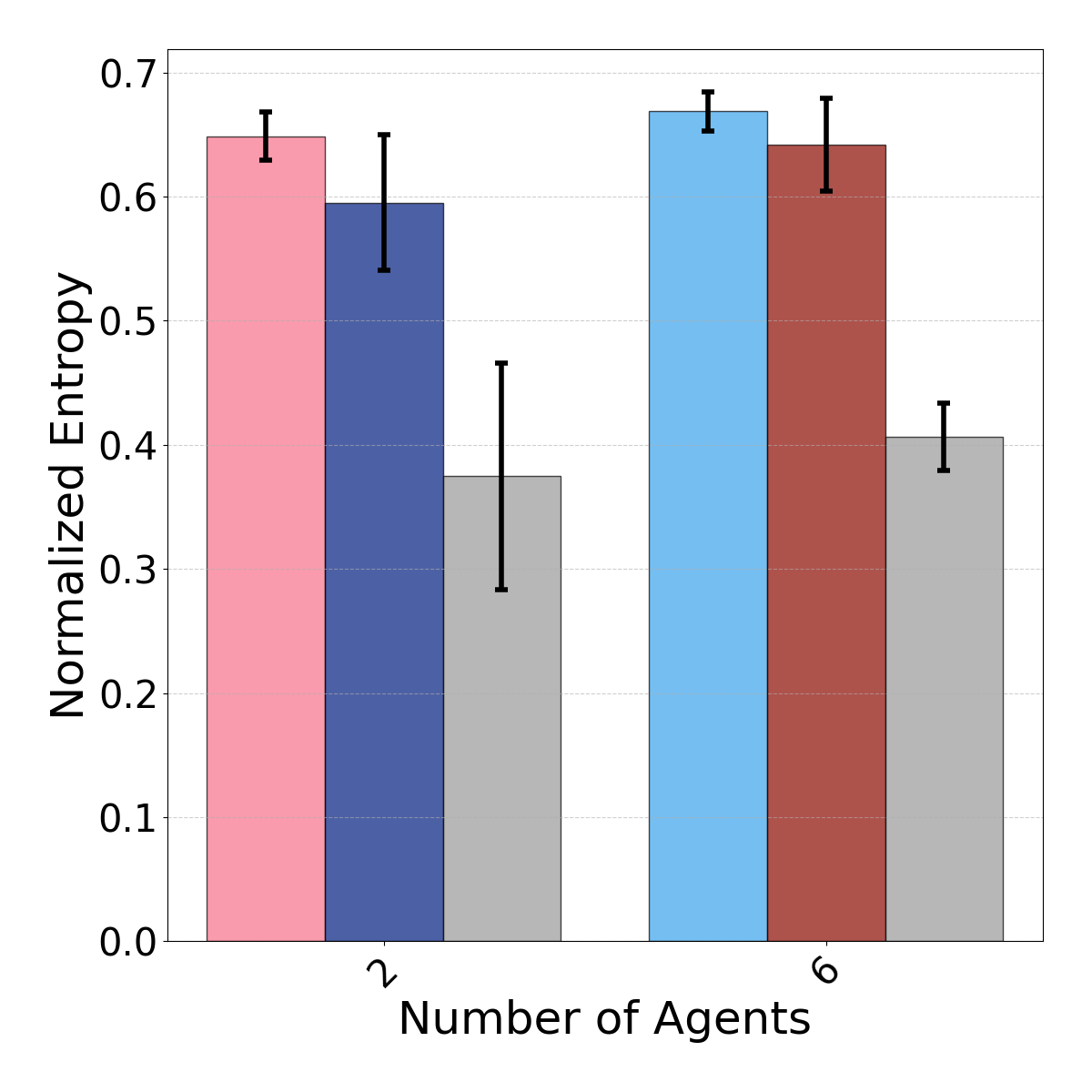}
        \vspace{-15pt}
        \caption{\centering Dataset Entropy (\emph{Room-stoc})}
        \label{subfig:image11}
    \end{subfigure}
    \vfill
    \begin{subfigure}[b]{0.49\columnwidth}
        \includegraphics[width=\linewidth]{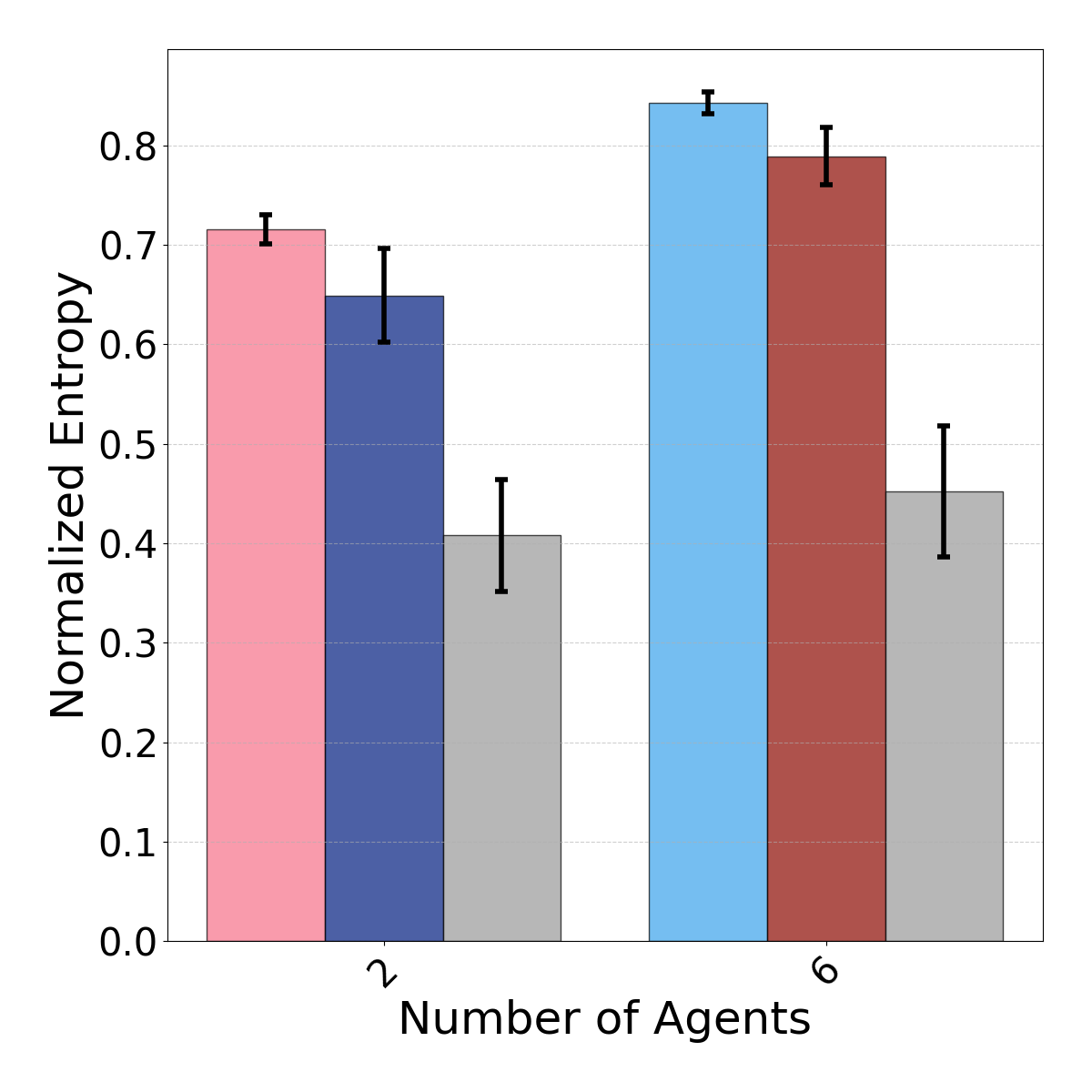}
        \vspace{-15pt}
        \caption{\centering Dataset Entropy (\emph{Maze-det})}
        \label{subfig:image12}
    \end{subfigure}
    \begin{subfigure}[b]{0.49\columnwidth}
        \includegraphics[width=\linewidth]{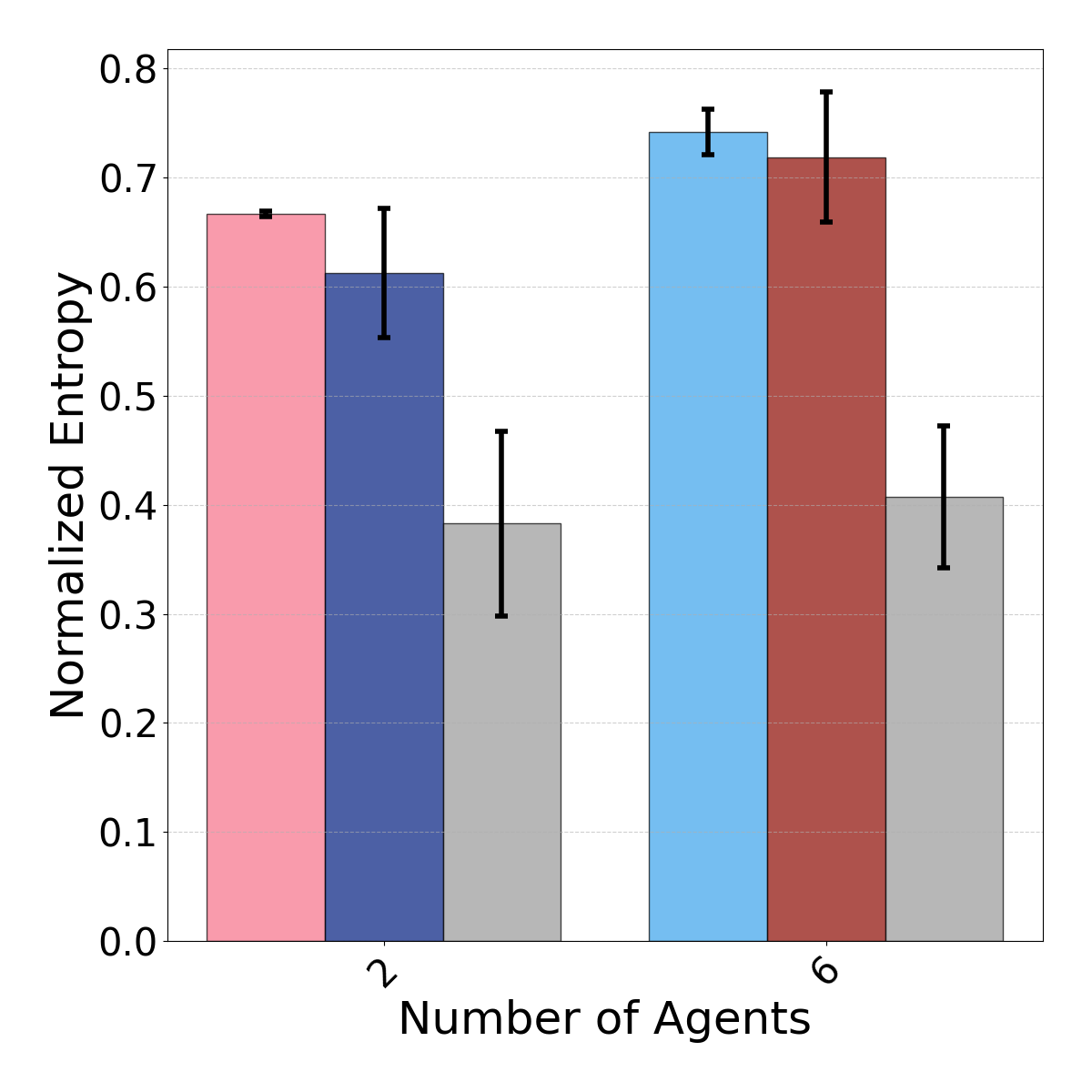}
        \vspace{-15pt}
        \caption{\centering Dataset Entropy (\emph{Maze-stoc})}
        \label{subfig:image13}
    \end{subfigure}
    \vspace{-5pt}
    \caption{Entropy evaluation of the dataset generated by sampling from parallel policies, compared against both random and single-baselines across different environments. With the notation $K' = mK$ we denoted the set in which the single agent had access to a multiple $m$ of trajectories available to the parallel agents. We report average and standard deviation over 5 independent runs.}
    \label{fig:datasets}
\end{figure}

\begin{figure}[t]
    \centering  
    {\captionsetup[subfigure]{skip=1pt}
    \begin{subfigure}[b]{0.49\columnwidth}
        \includegraphics[trim=0 60 0 60, clip,width=\linewidth]{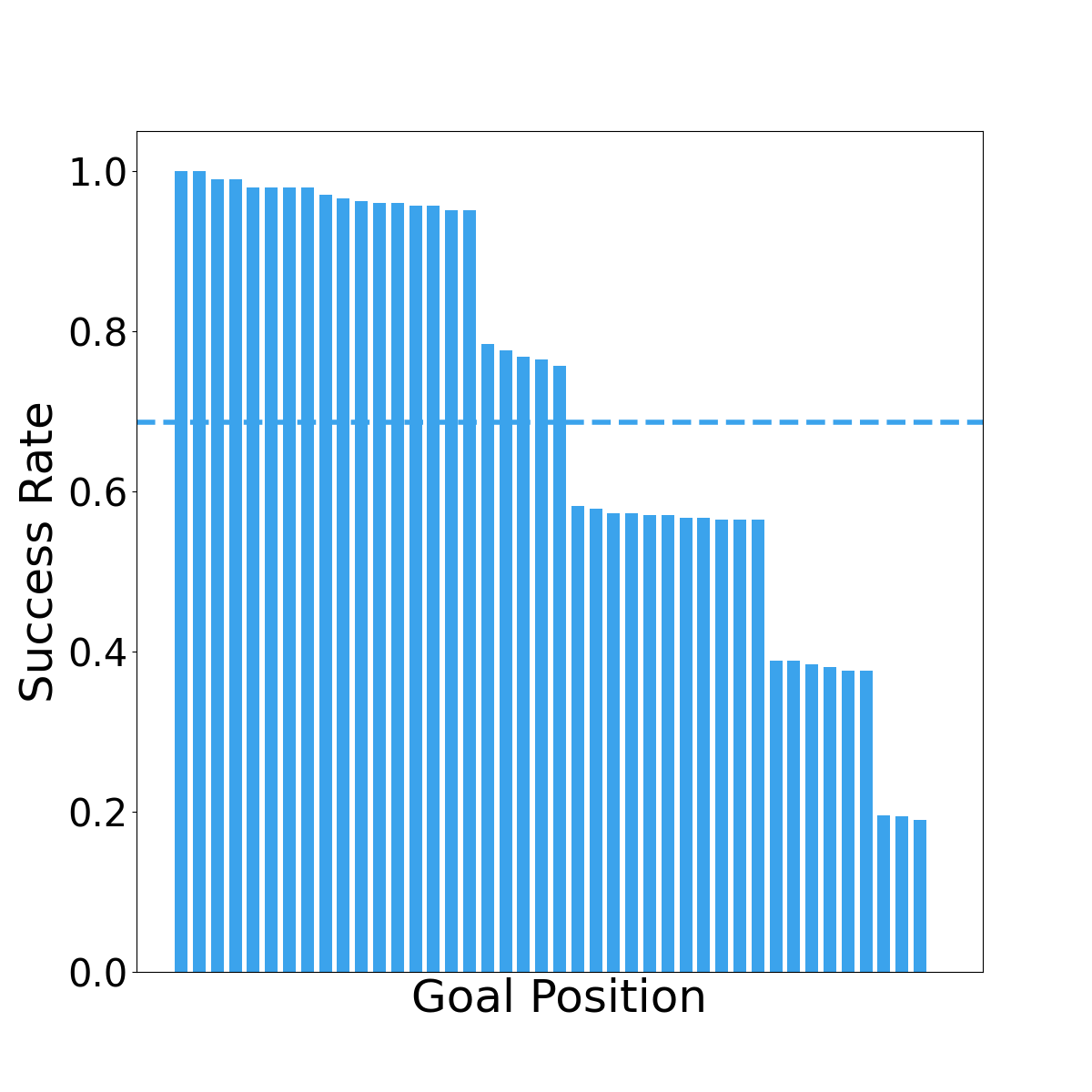}
        \vspace{-4pt}
        \caption{\centering Success Rate, Parallel Dataset (\emph{Room-det})}
        \label{subfig:image10}
    \end{subfigure}
    \hfill
    \begin{subfigure}[b]{0.49\columnwidth}
        \includegraphics[trim=0 60 0 60, clip,width=\linewidth]{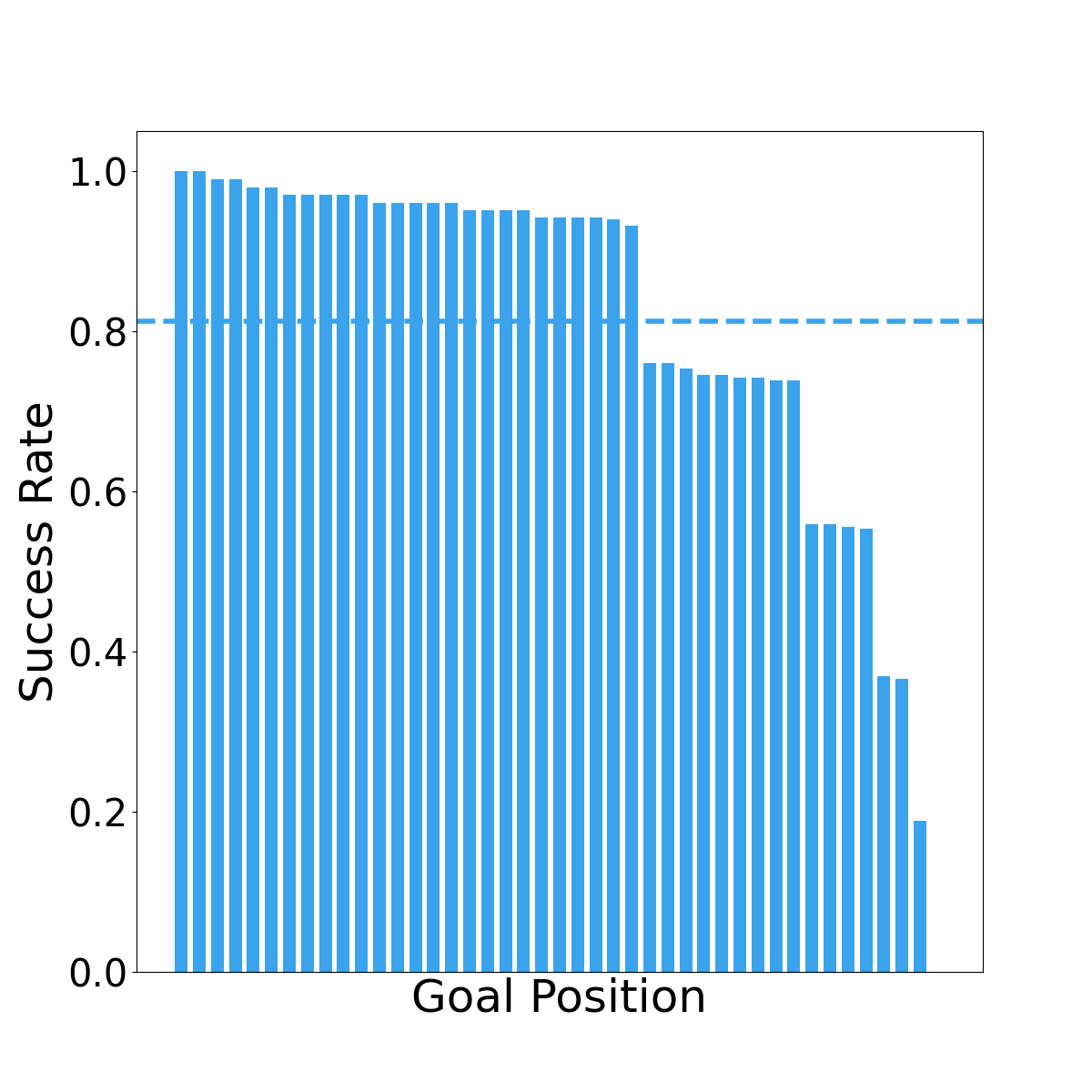}
        \vspace{-4pt}
        \caption{\centering Success Rate, Parallel Dataset (\emph{Maze-det})}
        \label{subfig:image11}
    \end{subfigure}
    \vfill  
    \begin{subfigure}[b]{0.49\columnwidth}
        \includegraphics[trim=0 60 0 60, clip,width=\linewidth]{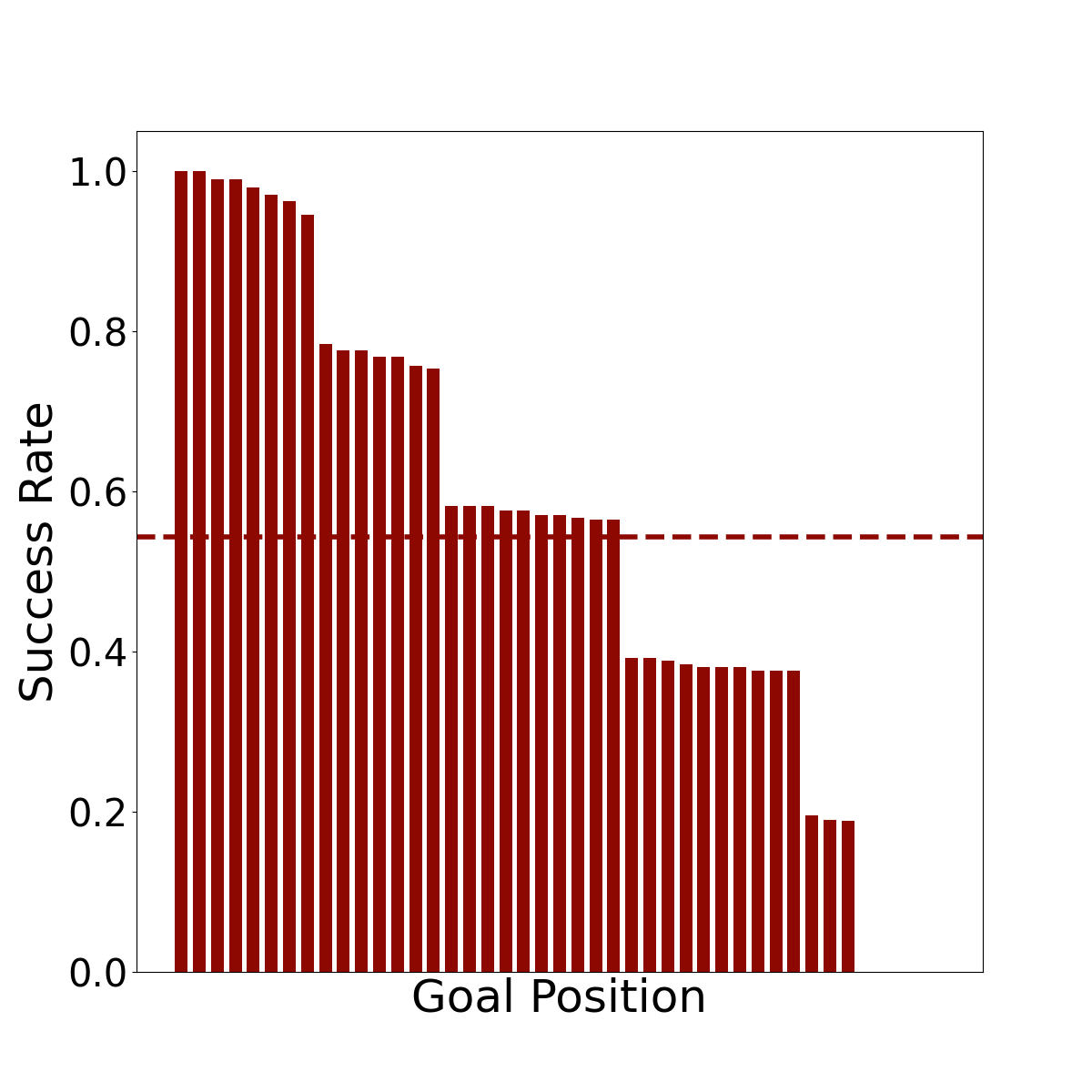}
        \vspace{-4pt}
        \caption{\centering Success Rate, Single Agent Dataset (\emph{Room-det})}
        \label{subfig:image12}
    \end{subfigure}
    \hfill
    \begin{subfigure}[b]{0.49\columnwidth}
        \includegraphics[trim=0 60 0 60, clip,width=\linewidth]{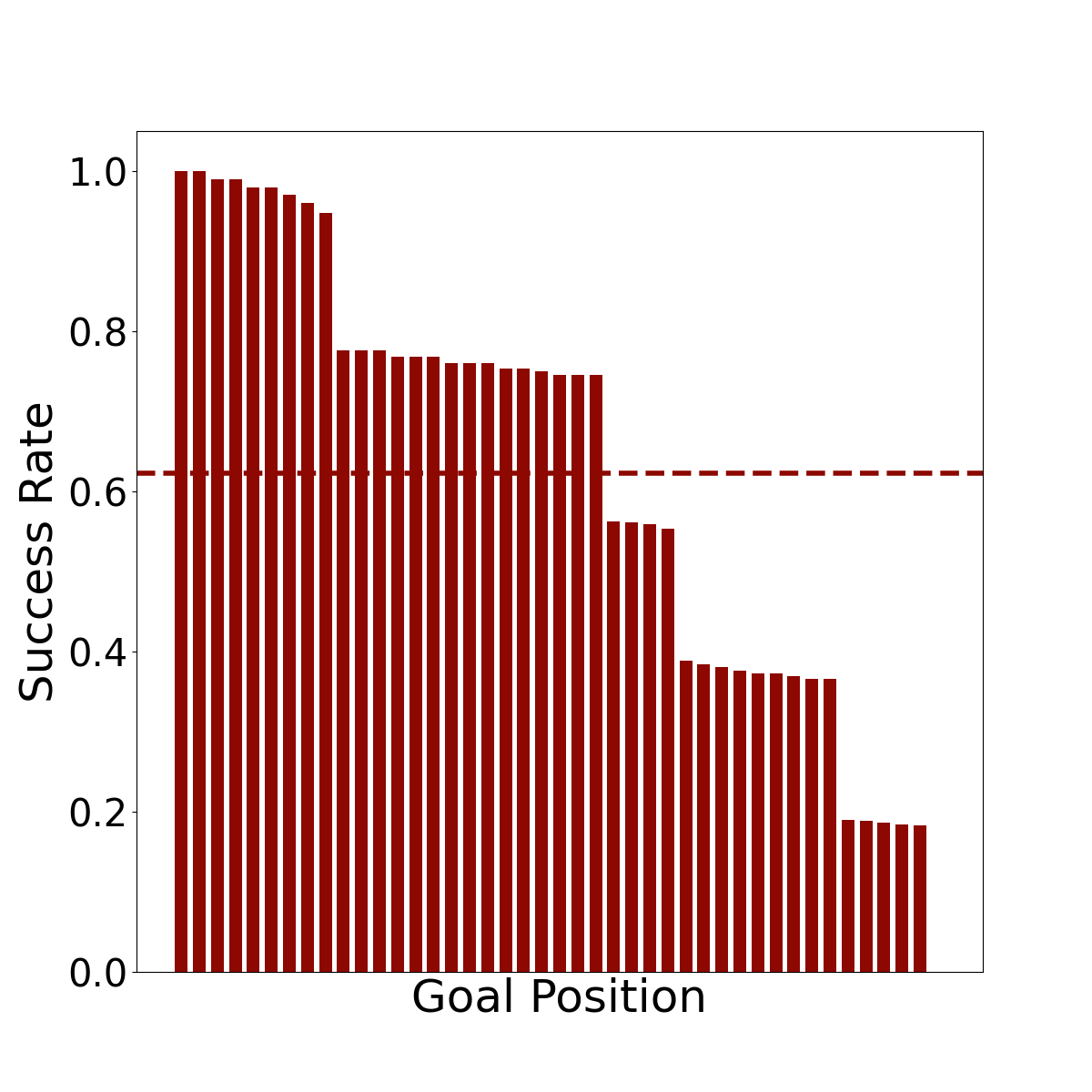}
        \vspace{-4pt}
        \caption{\centering Success Rate, Single Agent Dataset (\emph{Maze-det})}
        \label{subfig:image13}
    \end{subfigure}
    \vfill
    \begin{subfigure}[b]{0.49\columnwidth}
        \includegraphics[trim=0 60 0 60, clip,width=\linewidth]{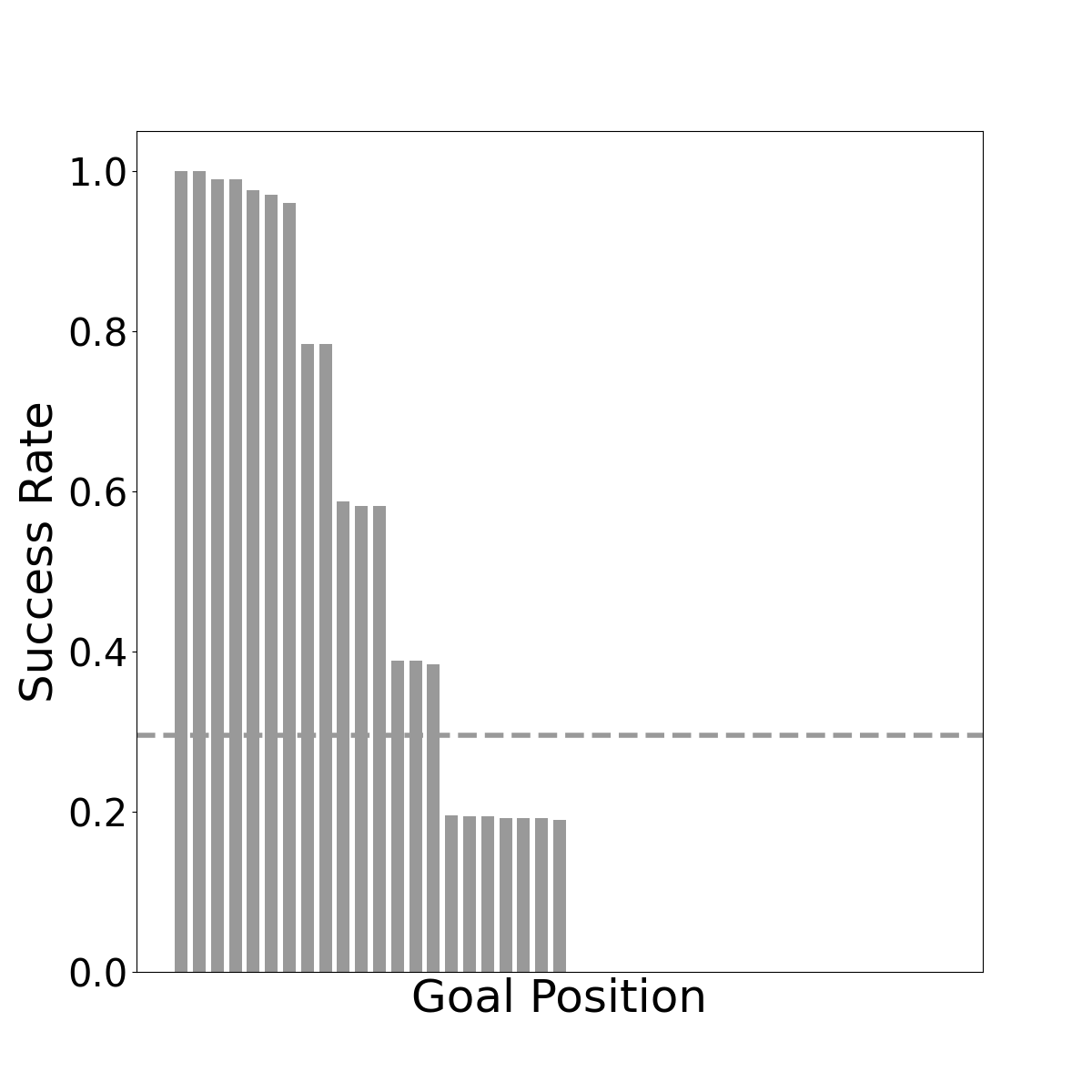}
        \vspace{-4pt}
        \caption{\centering Success Rate, Random Dataset (\emph{Room-det})}
        \label{subfig:image12}
    \end{subfigure}
    \hfill
    \begin{subfigure}[b]{0.49\columnwidth}
        \includegraphics[trim=0 60 0 60, clip,width=\linewidth]{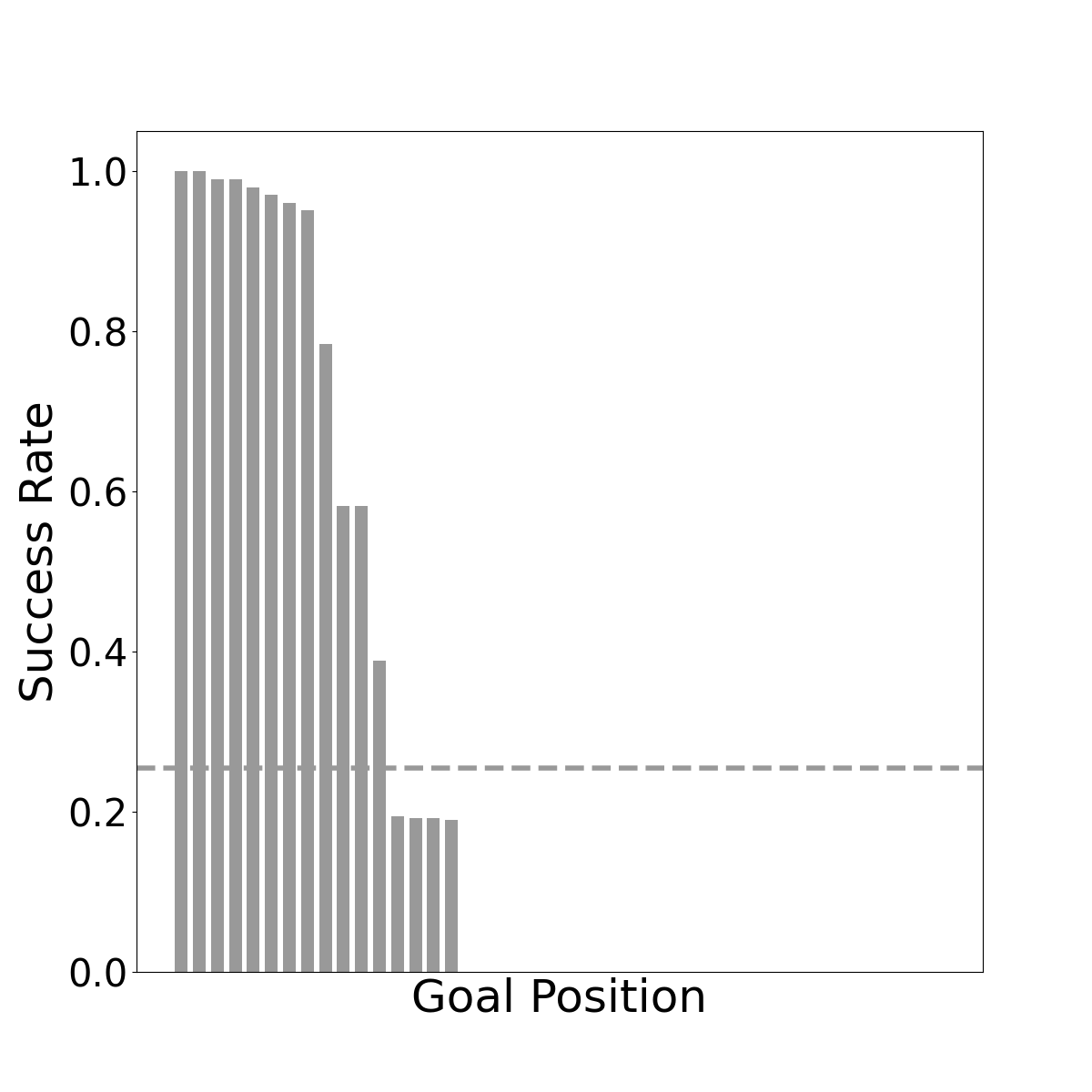}
        \vspace{-4pt}
        \caption{\centering Success Rate, Random Dataset (\emph{Maze-det})}
        \label{subfig:image13}
    \end{subfigure}
    \caption{Offline Q-learning success rate for various goal positions: Each subplot depicts the agent's ability to reach a goal state positioned at a different location (one for each state). The results refer to datasets obtained over 5 independent runs.}
    \label{fig:offlineevaluation}
    }
    \vspace{-5pt}
\end{figure}

\paragraph*{Offline RL.}~~Finally, we address the following question: \emph{Can datasets collected with parallel maximum entropy agents benefit offline RL algorithms?}
One practical application of the policies learn with PGPSE, in line with the motivation of this work, is to utilize the experience gained during the exploration phase to construct informative datasets for offline RL~\cite{yarats2022don}. 

Starting from the datasets collected as for Figure~\ref{fig:datasets}, we re-label the transitions with sparse reward functions, assigning rewards of $+1$ for a specific goal state and $0$ otherwise. We then train an agent with offline Q-learning~\citep{watkins1992q}. At each training step, the agent samples a mini-batch of transitions \( (s, a, r, s') \) from the dataset and updates its Q-value as  
$
Q(s, a) \leftarrow Q(s, a) + \alpha \left[ r + \gamma \max_{a'} Q(s', a') - Q(s, a) \right]
$
where \( \alpha \) is the learning rate, \( \gamma \) is the discount factor, and \( \max_{a'} Q(s', a') \) represents the best estimated discounted future reward.

In Figure~\ref{fig:offlineevaluation}, we report the barplots of the corresponding offline RL success rate on each dataset relabeled, one for every possible goal state, ordering the results according to their respective success rate. Notably, a dataset collected with parallel agents allows to achieve the best offline RL performance across a large number of states, far greater than datasets collected with a single maximum entropy agent or a random policy. In Appendix~\ref{appendix:experiments}, we provide further heatmaps visualizations showing the goal positions that offline RL algorithm is able to reach (Figure~\ref{fig:offlineresults}), and how each dataset is looking in terms of their empirical state distribution (Figure~\ref{fig:datasetextrapolationLab}).

\begin{tcolorbox}[colback=softbluegray, colframe=softbluegray,left=1.5pt, right=1.5pt, boxrule=0.5pt, arc=4pt, width=\linewidth]
\textbf{Takeaways.}
Parallel State Entropy Maximization leads to better exploration, diverse data collection and improved performances in offline learning in PMDPs.
\end{tcolorbox}

\section{Related Work}
\label{sec:related}
In the following, we summarize the most relevant works this paper sets its root in, spanning from entropy maximization, multi-agent exploration to diverse skill discovery.
\vspace{-4pt}
\paragraph*{State Entropy Maximization.}~~State entropy maximization in MDPs has been introduced in~\citet{hazan2019provably}, from which followed a variety of subsequent works focusing on the problem from various perspectives~\citep[][]{lee2019smm, mutti2020intrinsically, mutti2021task, mutti2022importance, mutti2022unsupervised, mutti2023unsupervised, zhang2020exploration, guo2021geometric, liu2021behavior, liu2021aps, seo2021state, yarats2021reinforcement, nedergaard2022k, yang2023cem, tiapkin2023fast, jain2023maximum, kim2023accelerating, zisselman2023explore, zamboni2024explore,zamboni2024limits, zamboni2025towards}. 
\vspace{-4pt}
\paragraph*{Policy Gradient.}~~In the RL literature, first-order methods have been extensively employed to address non-concave policy optimization~\citep{sutton1999policy, peters2008reinforcement}. In this work, we proposed a \emph{vanilla} policy gradient estimator~\citep{williams1992simple} as a first step, yet several further refinements could be made, such as natural gradient~\citep{kakade2001natural}, trust-region schemes~\citep{schulman2015trust}, and importance sampling~\citep{metelli2018policy}.
\vspace{-4pt}
\paragraph*{Exploration with Multiple Agents.}~~The problem of exploration in the presence of multiple agents is a vast and diverse topic. Among others, concurrent exploration ~\citep{guo2015concurrent,parisotto2019concurrent, jung2020population,chen2022society,qin2022analysis} has focused on scenarios where multiple agents operate within the same environment and learn simultaneously. Of particular interest are the works that investigate the role of information sharing between the agents~\citep{alfredo2017efficient, holen2023loss}  or the ones that characterize the theoretical conditions for efficient coordinated exploration to happen and scale~\citep{dimakopoulou2018coordinated,dimakopoulou2018scalable}. Finally, exploration in Multi-Agent RL~\citep[MARL,][]{albrech2024multiagent} has gained attention but was almost limited to reward-shaping techniques based on many heuristics ~\citet{wang2019influence, zhang2021made, xu2024population}. In particular,~\citet{zhang2023self} proposes a term maximizing the deviation from (jointly) explored regions. Yet, we highlight a key distinction: In conventional MARL, multiple agents operate concurrently within a shared environment, requiring coordination due to the interdependence of their trajectories; In contrast, our setting involves multiple agents interacting with independent instances of the environment. While coordination remains beneficial for enabling specialization and aligning agents with distinct objectives, their trajectories are decoupled. This structural difference fundamentally separates our framework from typical MARL formulations like~\citet{DBLP:journals/corr/abs-2106-02195},~\citet{DBLP:journals/corr/abs-2112-11701} and~\citet{pmlr-v139-lupu21a}.

\paragraph*{Diverse Skills Discovery.}~~The problem of learning a set of diverse skills and/or policies is somehow linked to the problem of diversifying exploration via parallelization, and it has been addressed by a plethora of recent works~\citep{hansen2019fast,sharma2020dynamics,campos2020explore,liu2021aps,he2022wasserstein}. The results provided by~\citet{gregor2017variational,eysenbach2018diversity} are particularly relevant for this work, in which the mutual information between visited states and skills is maximized, toghether with the ones in~\citet{zahavy2022discovering}, that explicitly maximizes the diversity between policies.
\section{Conclusions}
\label{sec:conclusions}

This paper targeted the critical challenge of efficient exploration in reinforcement learning, particularly in scenarios with slow environment simulators or limited interaction budgets. We introduced a novel parallel learning framework that leverages the power of multiple agents to maximize state entropy while explicitly promoting diversity in their exploration strategies. Unlike traditional approaches that rely on multiple instances of identical agents, our method, PGPSE based on policy gradient techniques, employs a centralized learning mechanism that balances individual agent entropy with inter-agent diversity, effectively minimizing redundancy and maximizing the information gained from parallel exploration.

Our theoretical analysis demonstrated the significant impact of parallelization on the rate of entropy stabilization and exploration diversity. Specifically, we showed that parallel agents, by focusing on different parts of the state space, can achieve faster convergence to high-entropy distributions compared to a single agent.  These theoretical findings were strongly corroborated by our empirical results on various gridworld environments. PGPSE consistently outperformed single-agent baselines in terms of both state entropy and support size, demonstrating the effectiveness of our diversity-promoting approach. 

Furthermore, we showed that the datasets collected by parallel agents exhibit higher entropy and lead to improved performance in offline settings, highlighting the practical value of our method for data-efficient learning.
While our results are promising, we acknowledge certain limitations. Our current framework is evaluated primarily on discrete gridworld environments. Future work will involve extending PGPSE to continuous and more complex environments, such as those found in robotics and control problems.Future work will extend, the concept of state entropy maximization to the trajectory based one, with centrally, the benefit of parallelizing the exploration strategy to achieve wider and diverse skills acquisition over complicated real world problems. These extensions will be crucial for demonstrating the scalability of our approach to a broader use case.



\section*{Impact Statement}
This paper presents work whose goal is to advance the field of 
Machine Learning. There are many potential societal consequences 
of our work, none which we feel must be specifically highlighted here.

\section*{Acknowledgment}
This paper is supported by FAIR (Future Artificial Intelligence Research) project, funded by the NextGenerationEU program within the PNRR-PE-AI scheme (M4C2, Investment 1.3, Line on Artificial Intelligence). This work is supported by Siemens Digital Industry Italy.

\bibliography{biblio}

\begin{thebibliography}{70}
\providecommand{\natexlab}[1]{#1}
\providecommand{\url}[1]{\texttt{#1}}
\expandafter\ifx\csname urlstyle\endcsname\relax
  \providecommand{\doi}[1]{doi: #1}\else
  \providecommand{\doi}{doi: \begingroup \urlstyle{rm}\Url}\fi

\bibitem[Albrecht et~al.(2024)Albrecht, Christianos, and Sch\"afer]{albrech2024multiagent}
Albrecht, S.~V., Christianos, F., and Sch\"afer, L.
\newblock \emph{{M}ulti-{A}gent {R}einforcement {L}earning: Foundations and modern approaches}.
\newblock MIT Press, 2024.

\bibitem[Alfredo \& Arjun(2017)Alfredo and Arjun]{alfredo2017efficient}
Alfredo, C. and Arjun, C.
\newblock Efficient parallel methods for deep {R}einforcement {L}earning.
\newblock In \emph{The Multi-disciplinary Conference on {R}einforcement {L}earning and Decision Making (RLDM)}, pp.\  1--6, 2017.

\bibitem[Bertsekas(2019)]{bertsekas2019reinforcement}
Bertsekas, D.
\newblock \emph{{R}einforcement {L}earning and optimal control}, volume~1.
\newblock Athena Scientific, 2019.

\bibitem[Campos et~al.(2020)Campos, Trott, Xiong, Socher, Gir{\'o}-i Nieto, and Torres]{campos2020explore}
Campos, V., Trott, A., Xiong, C., Socher, R., Gir{\'o}-i Nieto, X., and Torres, J.
\newblock Explore, discover and learn: Unsupervised discovery of state-covering skills.
\newblock In \emph{International Conference on Machine Learning}, 2020.

\bibitem[Chen et~al.(2022)Chen, Dong, Bai, Dimakopoulou, Xu, and Zhou]{chen2022society}
Chen, Y., Dong, P., Bai, Q., Dimakopoulou, M., Xu, W., and Zhou, Z.
\newblock Society of agents: Regret bounds of concurrent thompson sampling.
\newblock \emph{Advances in Neural Information Processing Systems}, 35:\penalty0 7587--7598, 2022.

\bibitem[Dimakopoulou \& Van~Roy(2018)Dimakopoulou and Van~Roy]{dimakopoulou2018coordinated}
Dimakopoulou, M. and Van~Roy, B.
\newblock Coordinated exploration in concurrent {R}einforcement {L}earning.
\newblock In \emph{International Conference on Machine Learning}, pp.\  1271--1279. PMLR, 2018.

\bibitem[Dimakopoulou et~al.(2018)Dimakopoulou, Osband, and Van~Roy]{dimakopoulou2018scalable}
Dimakopoulou, M., Osband, I., and Van~Roy, B.
\newblock Scalable coordinated exploration in concurrent {R}einforcement {L}earning.
\newblock \emph{Advances in Neural Information Processing Systems}, 31, 2018.

\bibitem[Espeholt et~al.(2018)Espeholt, Soyer, Munos, Simonyan, Mnih, Ward, Doron, Firoiu, Harley, Dunning, Legg, and Kavukcuoglu]{Impala}
Espeholt, L., Soyer, H., Munos, R., Simonyan, K., Mnih, V., Ward, T., Doron, Y., Firoiu, V., Harley, T., Dunning, I., Legg, S., and Kavukcuoglu, K.
\newblock Impala: Scalable distributed deep-rl with importance weighted actor-learner architectures.
\newblock \emph{CoRR}, abs/1802.01561, 2018.
\newblock URL \url{http://arxiv.org/abs/1802.01561}.

\bibitem[Eysenbach et~al.(2019)Eysenbach, Gupta, Ibarz, and Levine]{eysenbach2018diversity}
Eysenbach, B., Gupta, A., Ibarz, J., and Levine, S.
\newblock Diversity is all you need: Learning skills without a reward function.
\newblock In \emph{International Conference on Learning Representations}, 2019.

\bibitem[Genesis(2024)]{Genesis}
Genesis.
\newblock A universal and generative physics engine for robotics and beyond, December 2024.
\newblock URL \url{https://github.com/Genesis-Embodied-AI/Genesis}.

\bibitem[Gregor et~al.(2017)Gregor, Rezende, and Wierstra]{gregor2017variational}
Gregor, K., Rezende, D.~J., and Wierstra, D.
\newblock Variational intrinsic control.
\newblock In \emph{International Conference on Learning Representations}, 2017.

\bibitem[Guo \& Brunskill(2015)Guo and Brunskill]{guo2015concurrent}
Guo, Z. and Brunskill, E.
\newblock Concurrent {PAC} {RL}.
\newblock In \emph{Proceedings of the AAAI Conference on Artificial Intelligence}, volume~29, 2015.

\bibitem[Guo et~al.(2021)Guo, Azar, Saade, Thakoor, Piot, Pires, Valko, Mesnard, Lattimore, and Munos]{guo2021geometric}
Guo, Z.~D., Azar, M.~G., Saade, A., Thakoor, S., Piot, B., Pires, B.~A., Valko, M., Mesnard, T., Lattimore, T., and Munos, R.
\newblock Geometric entropic exploration.
\newblock \emph{arXiv preprint arXiv:2101.02055}, 2021.

\bibitem[Hansen et~al.(2019)Hansen, Dabney, Barreto, Warde-Farley, Van~de Wiele, and Mnih]{hansen2019fast}
Hansen, S., Dabney, W., Barreto, A., Warde-Farley, D., Van~de Wiele, T., and Mnih, V.
\newblock Fast task inference with variational intrinsic successor features.
\newblock In \emph{International Conference on Learning Representations}, 2019.

\bibitem[Hazan et~al.(2019)Hazan, Kakade, Singh, and Van~Soest]{hazan2019provably}
Hazan, E., Kakade, S., Singh, K., and Van~Soest, A.
\newblock Provably efficient {M}aximum {E}ntropy {E}xploration.
\newblock In \emph{International Conference on Machine Learning}, pp.\  2681--2691. PMLR, 2019.

\bibitem[He et~al.(2022)He, Jiang, Zhang, Shao, and Ji]{he2022wasserstein}
He, S., Jiang, Y., Zhang, H., Shao, J., and Ji, X.
\newblock Wasserstein unsupervised {R}einforcement {L}earning.
\newblock In \emph{AAAI Conference on Artificial Intelligence}, 2022.

\bibitem[Holen et~al.(2023)Holen, Andersen, Knausg{\aa}rd, and Goodwin]{holen2023loss}
Holen, M., Andersen, P.-A., Knausg{\aa}rd, K.~M., and Goodwin, M.
\newblock Loss and reward weighing for increased learning in distributed {R}einforcement {L}earning.
\newblock \emph{arXiv preprint arXiv:2304.12778}, 2023.

\bibitem[Jain et~al.(2023)Jain, Lehnert, Rish, and Berseth]{jain2023maximum}
Jain, A.~K., Lehnert, L., Rish, I., and Berseth, G.
\newblock Maximum state entropy exploration using predecessor and successor representations.
\newblock In \emph{Advances in Neural Information Processing Systems}, 2023.

\bibitem[Jung et~al.(2020)Jung, Park, and Sung]{jung2020population}
Jung, W., Park, G., and Sung, Y.
\newblock Population-guided parallel policy search for {R}einforcement {L}earning.
\newblock \emph{arXiv preprint arXiv:2001.02907}, 2020.

\bibitem[Kakade(2001)]{kakade2001natural}
Kakade, S.~M.
\newblock A natural policy gradient.
\newblock In \emph{Advances in Neural Information Processing Systems}, 2001.

\bibitem[Kim et~al.(2023)Kim, Shin, Abbeel, and Seo]{kim2023accelerating}
Kim, D., Shin, J., Abbeel, P., and Seo, Y.
\newblock Accelerating {R}einforcement {L}earning with value-conditional state entropy exploration.
\newblock In \emph{Advances in Neural Information Processing Systems}, 2023.

\bibitem[Lee et~al.(2019)Lee, Eysenbach, Parisotto, Xing, Levine, and Salakhutdinov]{lee2019smm}
Lee, L., Eysenbach, B., Parisotto, E., Xing, E., Levine, S., and Salakhutdinov, R.
\newblock Efficient exploration via state marginal matching.
\newblock \emph{arXiv preprint arXiv:1906.05274}, 2019.

\bibitem[Levine et~al.(2020)Levine, Kumar, Tucker, and Fu]{offlinerl}
Levine, S., Kumar, A., Tucker, G., and Fu, J.
\newblock Offline {R}einforcement {L}earning: Tutorial, review, and perspectives on open problems.
\newblock \emph{CoRR}, abs/2005.01643, 2020.
\newblock URL \url{https://arxiv.org/abs/2005.01643}.

\bibitem[Li et~al.(2021)Li, Wu, Wang, Yang, Zhao, and Zhang]{DBLP:journals/corr/abs-2106-02195}
Li, C., Wu, C., Wang, T., Yang, J., Zhao, Q., and Zhang, C.
\newblock Celebrating diversity in shared multi-agent {R}einforcement {L}earning.
\newblock \emph{CoRR}, abs/2106.02195, 2021.
\newblock URL \url{https://arxiv.org/abs/2106.02195}.

\bibitem[Liu \& Abbeel(2021{\natexlab{a}})Liu and Abbeel]{liu2021aps}
Liu, H. and Abbeel, P.
\newblock {APS}: Active pretraining with successor features.
\newblock In \emph{International Conference on Machine Learning}, 2021{\natexlab{a}}.

\bibitem[Liu \& Abbeel(2021{\natexlab{b}})Liu and Abbeel]{liu2021behavior}
Liu, H. and Abbeel, P.
\newblock Behavior from the void: Unsupervised active pre-training.
\newblock In \emph{Advances in Neural Information Processing Systems}, 2021{\natexlab{b}}.

\bibitem[Lupu et~al.(2021)Lupu, Cui, Hu, and Foerster]{pmlr-v139-lupu21a}
Lupu, A., Cui, B., Hu, H., and Foerster, J.
\newblock Trajectory diversity for zero-shot coordination.
\newblock In Meila, M. and Zhang, T. (eds.), \emph{Proceedings of the 38th International Conference on Machine Learning}, volume 139 of \emph{Proceedings of Machine Learning Research}, pp.\  7204--7213. PMLR, 18--24 Jul 2021.
\newblock URL \url{https://proceedings.mlr.press/v139/lupu21a.html}.

\bibitem[Metelli et~al.(2018)Metelli, Papini, Faccio, and Restelli]{metelli2018policy}
Metelli, A.~M., Papini, M., Faccio, F., and Restelli, M.
\newblock Policy optimization via importance sampling.
\newblock In \emph{Advances in Neural Information Processing Systems}, 2018.

\bibitem[Mnih et~al.(2013)Mnih, Kavukcuoglu, Silver, Graves, Antonoglou, Wierstra, and Riedmiller]{Atari}
Mnih, V., Kavukcuoglu, K., Silver, D., Graves, A., Antonoglou, I., Wierstra, D., and Riedmiller, M.~A.
\newblock Playing {ATARI} with deep {R}einforcement {L}earning.
\newblock \emph{CoRR}, abs/1312.5602, 2013.
\newblock URL \url{http://arxiv.org/abs/1312.5602}.

\bibitem[Mutti(2023)]{mutti2023unsupervised}
Mutti, M.
\newblock \emph{Unsupervised {R}einforcement {L}earning via state entropy maximization}.
\newblock PhD Thesis, Universit{\`a} di Bologna, 2023.

\bibitem[Mutti \& Restelli(2020)Mutti and Restelli]{mutti2020intrinsically}
Mutti, M. and Restelli, M.
\newblock An intrinsically-motivated approach for learning highly exploring and fast mixing policies.
\newblock In \emph{Proceedings of the AAAI Conference on Artificial Intelligence}, volume~34, pp.\  5232--5239, 2020.

\bibitem[Mutti et~al.(2021)Mutti, Pratissoli, and Restelli]{mutti2021task}
Mutti, M., Pratissoli, L., and Restelli, M.
\newblock Task-agnostic exploration via policy gradient of a non-parametric state entropy estimate.
\newblock In \emph{Proceedings of the AAAI Conference on Artificial Intelligence}, volume~35, pp.\  9028--9036, 2021.

\bibitem[Mutti et~al.(2022{\natexlab{a}})Mutti, De~Santi, and Restelli]{mutti2022importance}
Mutti, M., De~Santi, R., and Restelli, M.
\newblock The importance of non-{M}arkovianity in maximum state entropy exploration.
\newblock In \emph{International Conference on Machine Learning}, 2022{\natexlab{a}}.

\bibitem[Mutti et~al.(2022{\natexlab{b}})Mutti, Mancassola, and Restelli]{mutti2022unsupervised}
Mutti, M., Mancassola, M., and Restelli, M.
\newblock Unsupervised {R}einforcement {L}earning in multiple environments.
\newblock In \emph{AAAI Conference on Artificial Intelligence}, 2022{\natexlab{b}}.

\bibitem[Mutti et~al.(2023)Mutti, Santi, Bartolomeis, and Restelli]{mutti2023challengingcommonassumptionsconvex}
Mutti, M., Santi, R.~D., Bartolomeis, P.~D., and Restelli, M.
\newblock Challenging common assumptions in convex {R}einforcement {L}earning, 2023.
\newblock URL \url{https://arxiv.org/abs/2202.01511}.

\bibitem[Nedergaard \& Cook(2022)Nedergaard and Cook]{nedergaard2022k}
Nedergaard, A. and Cook, M.
\newblock k-{M}eans {M}aximum {E}ntropy {E}xploration.
\newblock \emph{arXiv preprint arXiv:2205.15623}, 2022.

\bibitem[NVIDIA Isaac Sim()]{isaac-sim}
NVIDIA Isaac Sim.
\newblock NVIDIA. 2021.https://developer.nvidia.com/isaac-sim.

\bibitem[Parisotto et~al.(2019)Parisotto, Ghosh, Yalamanchi, Chinnaobireddy, Wu, and Salakhutdinov]{parisotto2019concurrent}
Parisotto, E., Ghosh, S., Yalamanchi, S.~B., Chinnaobireddy, V., Wu, Y., and Salakhutdinov, R.
\newblock Concurrent meta {R}einforcement {L}earning.
\newblock \emph{arXiv preprint arXiv:1903.02710}, 2019.

\bibitem[Park et~al.(2023)Park, Ghosh, Eysenbach, and Levine]{park2024hiqlofflinegoalconditionedrl}
Park, S., Ghosh, D., Eysenbach, B., and Levine, S.
\newblock {HIQL}: Offline goal-conditioned {RL} with latent states as actions.
\newblock \emph{Advances in Neural Information Processing Systems}, 36:\penalty0 34866--34891, 2023.

\bibitem[Peters \& Schaal(2008)Peters and Schaal]{peters2008reinforcement}
Peters, J. and Schaal, S.
\newblock {R}einforcement {L}earning of motor skills with policy gradients.
\newblock \emph{Neural Networks}, 2008.

\bibitem[Qin et~al.(2022)Qin, Wen, Lu, and Van~Roy]{qin2022analysis}
Qin, C., Wen, Z., Lu, X., and Van~Roy, B.
\newblock An analysis of ensemble sampling.
\newblock \emph{Advances in Neural Information Processing Systems}, 35:\penalty0 21602--21614, 2022.

\bibitem[Rudin et~al.(2022)Rudin, Hoeller, Reist, and Hutter]{rudin2022learning}
Rudin, N., Hoeller, D., Reist, P., and Hutter, M.
\newblock Learning to walk in minutes using massively parallel deep reinforcement learning.
\newblock In \emph{Conference on Robot Learning}, pp.\  91--100. PMLR, 2022.

\bibitem[Schulman et~al.(2015)Schulman, Levine, Abbeel, Jordan, and Moritz]{schulman2015trust}
Schulman, J., Levine, S., Abbeel, P., Jordan, M., and Moritz, P.
\newblock Trust region policy optimization.
\newblock In \emph{International conference on machine learning}, 2015.

\bibitem[Seo et~al.(2021)Seo, Chen, Shin, Lee, Abbeel, and Lee]{seo2021state}
Seo, Y., Chen, L., Shin, J., Lee, H., Abbeel, P., and Lee, K.
\newblock State entropy maximization with random encoders for efficient exploration.
\newblock In \emph{International Conference on Machine Learning}, 2021.

\bibitem[Sharma et~al.(2020)Sharma, Gu, Levine, Kumar, and Hausman]{sharma2020dynamics}
Sharma, A., Gu, S., Levine, S., Kumar, V., and Hausman, K.
\newblock Dynamics-aware unsupervised discovery of skills.
\newblock In \emph{International Conference on Learning Representations}, 2020.

\bibitem[Sucar(2007)]{sucar2007parallel}
Sucar, L.~E.
\newblock Parallel markov decision processes.
\newblock \emph{Advances in Probabilistic Graphical Models}, pp.\  295--309, 2007.

\bibitem[Sutton \& Barto(2018)Sutton and Barto]{sutton}
Sutton, R.~S. and Barto, A.~G.
\newblock \emph{{R}einforcement {L}earning: An introduction}.
\newblock MIT press, 2018.

\bibitem[Sutton et~al.(1999)Sutton, McAllester, Singh, and Mansour]{sutton1999policy}
Sutton, R.~S., McAllester, D., Singh, S., and Mansour, Y.
\newblock Policy gradient methods for {R}einforcement {L}earning with function approximation.
\newblock In \emph{Advances in Neural Information Processing Systems}, 1999.

\bibitem[Tarbouriech \& Lazaric(2019)Tarbouriech and Lazaric]{tarbouriech2019active}
Tarbouriech, J. and Lazaric, A.
\newblock Active exploration in {M}arkov decision processes.
\newblock In \emph{International Conference on Artificial Intelligence and Statistics}, 2019.

\bibitem[Tarbouriech et~al.(2020)Tarbouriech, Shekhar, Pirotta, Ghavamzadeh, and Lazaric]{tarbouriech2020active}
Tarbouriech, J., Shekhar, S., Pirotta, M., Ghavamzadeh, M., and Lazaric, A.
\newblock Active model estimation in {M}arkov decision processes.
\newblock In \emph{Conference on Uncertainty in Artificial Intelligence}, 2020.

\bibitem[Tiapkin et~al.(2023)Tiapkin, Belomestny, Calandriello, Moulines, Munos, Naumov, Perrault, Tang, Valko, and Menard]{tiapkin2023fast}
Tiapkin, D., Belomestny, D., Calandriello, D., Moulines, E., Munos, R., Naumov, A., Perrault, P., Tang, Y., Valko, M., and Menard, P.
\newblock Fast rates for maximum entropy exploration.
\newblock In \emph{International Conference on Machine Learning}, pp.\  34161--34221. PMLR, 2023.

\bibitem[Todorov et~al.(2012)Todorov, Erez, and Tassa]{todorov2012mujoco}
Todorov, E., Erez, T., and Tassa, Y.
\newblock Mujoco: A physics engine for model-based control.
\newblock In \emph{2012 IEEE/RSJ International Conference on Intelligent Robots and Systems}, pp.\  5026--5033. IEEE, 2012.
\newblock \doi{10.1109/IROS.2012.6386109}.

\bibitem[Towers et~al.(2024)Towers, Kwiatkowski, Terry, Balis, Cola, Deleu, Goulão, Kallinteris, Krimmel, KG, Perez-Vicente, Pierré, Schulhoff, Tai, Tan, and Younis]{towers2024gymnasiumstandardinterfacereinforcement}
Towers, M., Kwiatkowski, A., Terry, J., Balis, J.~U., Cola, G.~D., Deleu, T., Goulão, M., Kallinteris, A., Krimmel, M., KG, A., Perez-Vicente, R., Pierré, A., Schulhoff, S., Tai, J.~J., Tan, H., and Younis, O.~G.
\newblock Gymnasium: A standard interface for {R}einforcement {L}earning environments, 2024.
\newblock URL \url{https://arxiv.org/abs/2407.17032}.

\bibitem[Wang et~al.(2019)Wang, Wang, Wu, and Zhang]{wang2019influence}
Wang, T., Wang, J., Wu, Y., and Zhang, C.
\newblock Influence-based multi-agent exploration.
\newblock \emph{arXiv preprint arXiv:1910.05512}, 2019.

\bibitem[Watkins \& Dayan(1992)Watkins and Dayan]{watkins1992q}
Watkins, C.~J. and Dayan, P.
\newblock Q-learning.
\newblock \emph{Machine learning}, 8:\penalty0 279--292, 1992.

\bibitem[Williams(1992)]{williams1992simple}
Williams, R.~J.
\newblock Simple statistical gradient-following algorithms for connectionist {R}einforcement {L}earning.
\newblock \emph{Machine Learning}, 8:\penalty0 229--256, 1992.

\bibitem[Xu et~al.(2024)Xu, Zhang, and Huang]{xu2024population}
Xu, P., Zhang, J., and Huang, K.
\newblock Population-based diverse exploration for sparse-reward multi-agent tasks.
\newblock In \emph{Proceedings of the Thirty-Third International Joint Conference on Artificial Intelligence}, pp.\  283--291, 2024.

\bibitem[Yang \& Spaan(2023)Yang and Spaan]{yang2023cem}
Yang, Q. and Spaan, M.~T.
\newblock {CEM}: Constrained entropy maximization for task-agnostic safe exploration.
\newblock In \emph{AAAI Conference on Artificial Intelligence}, 2023.

\bibitem[Yarats et~al.(2021)Yarats, Fergus, Lazaric, and Pinto]{yarats2021reinforcement}
Yarats, D., Fergus, R., Lazaric, A., and Pinto, L.
\newblock {R}einforcement {L}earning with prototypical representations.
\newblock In \emph{International Conference on Machine Learning}, 2021.

\bibitem[Yarats et~al.(2022)Yarats, Brandfonbrener, Liu, Laskin, Abbeel, Lazaric, and Pinto]{yarats2022don}
Yarats, D., Brandfonbrener, D., Liu, H., Laskin, M., Abbeel, P., Lazaric, A., and Pinto, L.
\newblock Don't change the algorithm, change the data: Exploratory data for offline {R}einforcement {L}earning.
\newblock \emph{arXiv preprint arXiv:2201.13425}, 2022.

\bibitem[Zahavy et~al.(2022)Zahavy, Schroecker, Behbahani, Baumli, Flennerhag, Hou, and Singh]{zahavy2022discovering}
Zahavy, T., Schroecker, Y., Behbahani, F., Baumli, K., Flennerhag, S., Hou, S., and Singh, S.
\newblock Discovering policies with {DOM}i{NO}: Diversity optimization maintaining near optimality.
\newblock \emph{arXiv preprint arXiv:2205.13521}, 2022.

\bibitem[Zamboni et~al.(2024{\natexlab{a}})Zamboni, Cirino, Restelli, and Mutti]{zamboni2024explore}
Zamboni, R., Cirino, D., Restelli, M., and Mutti, M.
\newblock How to explore with belief: state entropy maximization in {POMDP}s.
\newblock In \emph{Proceedings of the 41st International Conference on Machine Learning}, ICML'24. JMLR.org, 2024{\natexlab{a}}.

\bibitem[Zamboni et~al.(2024{\natexlab{b}})Zamboni, Cirino, Restelli, and Mutti]{zamboni2024limits}
Zamboni, R., Cirino, D., Restelli, M., and Mutti, M.
\newblock The limits of pure exploration in {POMDP}s: When the observation entropy is enough.
\newblock \emph{{RLJ}}, 2:\penalty0 676--692, 2024{\natexlab{b}}.
\newblock URL \url{https://rlj.cs.umass.edu/2024/papers/Paper95.html}.

\bibitem[Zamboni et~al.(2025)Zamboni, Mutti, and Restelli]{zamboni2025towards}
Zamboni, R., Mutti, M., and Restelli, M.
\newblock Towards principled multi-agent task agnostic exploration.
\newblock \emph{arXiv preprint arXiv:2502.08365}, 2025.

\bibitem[Zhang et~al.(2021{\natexlab{a}})Zhang, Cai, Huang, and Li]{zhang2020exploration}
Zhang, C., Cai, Y., Huang, L., and Li, J.
\newblock Exploration by maximizing {R}{\'e}nyi entropy for reward-free {RL} framework.
\newblock In \emph{AAAI Conference on Artificial Intelligence}, 2021{\natexlab{a}}.

\bibitem[Zhang et~al.(2023)Zhang, Cao, Yuan, Yu, and Zhan]{zhang2023self}
Zhang, S., Cao, J., Yuan, L., Yu, Y., and Zhan, D.-C.
\newblock Self-motivated multi-agent exploration.
\newblock \emph{arXiv preprint arXiv:2301.02083}, 2023.

\bibitem[Zhang et~al.(2021{\natexlab{b}})Zhang, Rashidinejad, Jiao, Tian, Gonzalez, and Russell]{zhang2021made}
Zhang, T., Rashidinejad, P., Jiao, J., Tian, Y., Gonzalez, J.~E., and Russell, S.
\newblock {MADE}: Exploration via maximizing deviation from explored regions.
\newblock \emph{Advances in Neural Information Processing Systems}, 34:\penalty0 9663--9680, 2021{\natexlab{b}}.

\bibitem[Zhao et~al.(2021)Zhao, Song, Hu, Gao, Wu, Sun, and Wei]{DBLP:journals/corr/abs-2112-11701}
Zhao, R., Song, J., Hu, H., Gao, Y., Wu, Y., Sun, Z., and Wei, Y.
\newblock Maximum entropy population based training for zero-shot human-ai coordination.
\newblock \emph{CoRR}, abs/2112.11701, 2021.
\newblock URL \url{https://arxiv.org/abs/2112.11701}.

\bibitem[Zhong et~al.(2024)Zhong, Yang, and Zhao]{redundant}
Zhong, D., Yang, Y., and Zhao, Q.
\newblock No prior mask: Eliminate redundant action for deep reinforcement learning.
\newblock In \emph{Proceedings of the AAAI Conference on Artificial Intelligence}, volume~38, pp.\  17078--17086, 2024.

\bibitem[Zisselman et~al.(2023)Zisselman, Lavie, Soudry, and Tamar]{zisselman2023explore}
Zisselman, E., Lavie, I., Soudry, D., and Tamar, A.
\newblock Explore to generalize in zero-shot {RL}.
\newblock In \emph{Advances in Neural Information Processing Systems}, 2023.

\end{thebibliography}
\bibliographystyle{icml2025}

\newpage
\appendix
\onecolumn
\section{Main Proofs and Additional Results}
\label{sec:appendixproof}

\paragraph*{Concentration Properties.}~In the following, we present the proofs of the theoretical results that highlight how the entropy of the distribution of state or trajectory visits concentrates in the single-agent and parallel-agent cases.

\CIH
\begin{proof}
The proof consists of three main steps. In order to keep the derivation agnostic from the state or trajectory-based setting, we will now introduce a different yet equivalent notation: let $p$ be a categorical distribution over a finite set $\mathcal{X}$ with $|\mathcal{X}| = K$, and let $\hat{p}$ be the empirical distribution obtained from $n$.
\begin{description}
    \item[Decomposing the Problem via Union Bound.] First, we expand the entropy terms to highlight the contribution of the single components:
    $$\mathbb P(\mathcal{H}(p) -\mathcal{H}(\hat p) > \epsilon) \leq \mathbb P\left(\sum_{i=1}^K p_i\log\left(\frac{1}{p_i}\right) - \hat p_i\log\left(\frac{1}{\hat p_i}\right)> \epsilon\right)=\mathbb P\left(\sum_{i=1}^K h(p_i) - h(\hat p_i)> \epsilon\right),$$
    where $p_i = \mathbb P(X=x_i)$ and $h(x) = x\log\left(\frac{1}{x}\right)$.
    
    Applying the union bound to the previous result, we get:
    \begin{equation}\label{eq:UB}
    \mathbb P(\mathcal{H}(p) -\mathcal{H}(\hat p) > \epsilon) \leq \sum_{i=1}^K \mathbb P\left(h(p_i) - h(\hat p_i) > \frac{\epsilon}{K}\right).        
    \end{equation}

    \item[Bounding the Entropy of the Components using a Linear Approximation.]
    Now, we focus on finding an upper bound to $h(p_i)-h(\hat p_i)$. We introduce a lower bound to $h(\hat p_i)$ obtained by a combination of functions that are linear in the deviation $|p_i-\hat p_i|$:
    $$h(\hat p_i) \geq h(p_i) - \frac{h(p_i)|p_i-\hat p_i|}{\min(p_i,1-p_i)} \geq h(p_i) - \frac{h(p_i)|p_i-\hat p_i|}{p_i(1-p_i)}.$$

    As a consequence
    \begin{equation}\label{eq:linear}
    \mathbb P\left(h(p_i)-h(\hat p_i) > \epsilon \right) \leq \mathbb P\left(\frac{h(p_i)|p_i-\hat p_i|}{p_i(1-p_i)} > \epsilon\right) \leq \mathbb P\left(|p_i-\hat p_i| > \frac{p_i(1-p_i)}{h(p_i)}\epsilon\right).    
    \end{equation}

    Thanks to this last inequality, we can now focus on the concentration inequality of the Bernoulli distributions associated with the parameters $p_i$.

    \item[Applying a Concentration Inequality for Bernoulli Distributions.] Finally, we use a concentration inequality on the estimation of a Bernoulli-distributed parameter to express this probability bound in terms of the variance of $p_i$ ($\mathrm{Var}(p_i)=p_i(1-p_i)$). 

    Leveraging Chernoff bound for Bernoulli distributions, we get:
    \begin{equation}\label{eq:bernoulli}
        \mathbb P(|p_i-\hat p_i|>\epsilon)\leq e^{-nD_{KL}(p_i+\epsilon||p_i)} + e^{-nD_{KL}(p_i-\epsilon||p_i)}\leq 2 e^{-\frac{n\epsilon^2}{2p_i(1-p_i)}} = 2 e^{-\frac{n\epsilon^2}{2\mathrm{Var}(p_i)}}. 
    \end{equation}
\end{description}

We now complete the proof by combining the results in Eqs \eqref{eq:UB}, \eqref{eq:linear}, and \eqref{eq:bernoulli}:

\begin{equation}\label{eq:CI}
    \mathbb P(\mathcal{H}(p) -\mathcal{H}(\hat p) > \epsilon) \leq \sum_{i=1}^K\mathbb P\left( h(p_i) -h(\hat p_i)> \frac{\epsilon}{K}\right) \leq \sum_{i=1}^K\mathbb P\left(|p_i-\hat p_i| > \frac{p_i(1-p_i)}{Kh(p_i)}\epsilon\right)\leq 2 \sum_{i=1}^Ke^{-\frac{n\epsilon^2p_i(1-p_i)}{2K^2h^2(p_i)}}.  
\end{equation}

In order to remove the summation over the $K$ components of the distribution, we need to find a lower bound to the term $\frac{p_i(1-p_i)}{\mathcal H^2(p_i)}$ that is independent of the specific component parameter $p_i$. Here, we show the chain of passages that achieve this goal:
$$\min_i \frac{p_i(1-p_i)}{h^2(p_i)} \geq\min_i \frac{p_i(1-p_i)}{\mathcal{H}^2(p_i)} = \frac{\max_i p_i(1-p_i)}{\max_i \mathcal{H}^2(p_i)} \geq \frac{\sum_i p_i(1-p_i)}{K\max_i \mathcal{H}^2(p_i)} \geq\frac{\mathrm{Var}(p)}{K\mathcal{H}^2(p)}.$$
The motivations for each step are:
\begin{enumerate}
    \item $\mathcal{H}(p_i) \geq h(p_i)$.
    \item The value of $p_i$ that minimizes $\frac{p_i(1-p_i)}{\mathcal{H}^2(p_i)}$ is the one with the highest entropy (see Figure\ref{fig:entropy-variance}). 
    The ration between the variance $p(1-p)$ and the squared entropy $\mathcal{H}^2(p)$ is symmetric about $p=\frac{1}{2}$ and it has negative derivative for $p<\frac{1}{2}$ and positive derivative for $p>\frac{1}{2}$.
    Since the higher the entropy $\mathcal H(p_i)$, the higher is also the variance $p_i(1-p_i)$, we can restate the minimization problem as the ratio of two maximization problems.
    \item The term in the numerator is the maximum variance, which can be lower bounded by the average variance.
    \item The maximum entropy among the Bernoulli distributions associated with all the components is upper bounded by the entropy of the categorical distribution $p$.
\end{enumerate}

Leveraging this result in Eq. \eqref{eq:CI} concludes the proof.

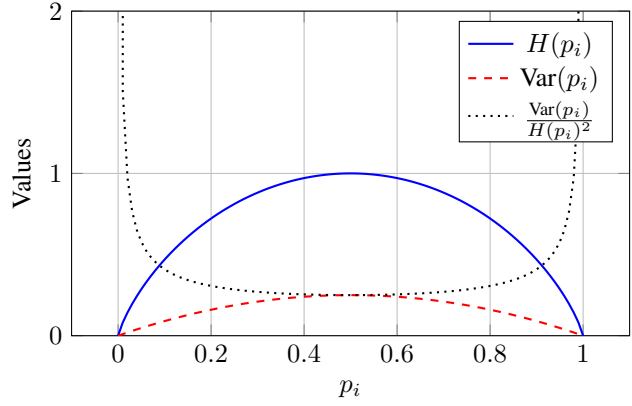
\begin{figure}
    \centering
    \begin{tikzpicture}
\begin{axis}[
    width=12cm, height=6cm,
    domain=0.01:0.99, 
    samples=200,
    xlabel={$p_i$},
    ylabel={Values},
    legend pos=north east,
    grid=major,
    ymin=0, ymax=2,
    xtick={0,0.2,0.4,0.6,0.8,1},
    ytick={0,1,2},
    every axis plot/.append style={thick}
]

\addplot[color=blue] {-(x)*ln(x)/ln(2) - (1-x)*ln(1-x)/ln(2)};
\addlegendentry{$H(p_i)$}

\addplot[color=red, dashed] {x*(1-x)};
\addlegendentry{$\text{Var}(p_i)$}

\addplot[color=black, dotted, restrict y to domain=0:10] 
    {x*(1-x)/((-(x)*ln(x)/ln(2) - (1-x)*ln(1-x)/ln(2))^2)};
\addlegendentry{$\frac{\text{Var}(p_i)}{H(p_i)^2}$}

\end{axis}
\end{tikzpicture}
    \caption{The plot shows that $H(p_i)$ and $\mathrm{Var}(p_i)$ are concave symmetric function with their maximum located at $p_i=0.5$, while $\frac{\text{Var}(p_i)}{H(p_i)^2}$ is a convex symmetric function with its minimum located at $p_i=0.5$.}
    \label{fig:entropy-variance}
\end{figure}

\end{proof}







\newpage
\paragraph*{Frank-Wolfe for Parallel Exploration with Infinite Trials}~Frank-Wolfe Algorithms have gained much attention for their strong theoretical guarantees in MDPs~\cite{hansen2019fast}. In this section, we show that by performing Frank-Wolfe-like updates in a parallel fashion over \emph{infinite-trials} objectives as reported in Algorithm~\ref{alg:frankwolfe}, it is possible to obtain \emph{faster convergence rates} with respect to the non-parallel formulation, through a convenient scaling factor of $1/N$ with $N$ being the number of parallel instantiations. In order to allow for a simpler derivation, we will assume access to two kinds of oracles. First, some \textbf{approximate planning oracles} (one per each agent) that given a reward function (on states) $r : S \rightarrow \mathbb R$ and a sub-optimality gap $\epsilon_1$, returns a policy $\pi = \text{APPROXPLAN}(r, \epsilon_1)$ with the guarantee that $\mathcal H(\pi) \geq \max_{\bar \pi} \mathcal H(\bar \pi) - \epsilon_1$. In addition, some \textbf{state distribution estimate oracles} (one per each agent) that estimate the state distribution $\hat d =
\text{DENSITYEST}(\pi, \epsilon_0)$ of any given (non-stationary) policy $\pi$, guaranteeing that $\|d^\pi - \hat d\|_\infty \leq \epsilon_1$. In addition, we will assume that the entropy functional $\mathcal H$ is $\beta$-smooth, $B$-bounded, and that it satisfies the following inequality for all $X, Y$:
\begin{align*}
\|\nabla\mathcal H(X) - \nabla\mathcal H(Y)\| &\leq \beta \|X-Y\|_\infty\\
- \beta \mathbb I \leq  \nabla^2 \mathcal H(X) &\leq  \beta \mathbb I; \|\nabla \mathcal H(X)\|_\infty \leq B
\end{align*}

Under these assumptions, it follows that Algorithm~\ref{alg:frankwolfe} enjoys the following:

\begin{theorem}[Convergence Rate] \label{mainT} For any $\varepsilon>0$, set
  $\varepsilon_1=0.1\varepsilon$,
  $\varepsilon_0 = 0.1\beta^{-1}\varepsilon$, and
  $\eta = 0.1|\mathcal S|^{-1}\beta^{-1}N\varepsilon$, where
  Algorithm~\ref{alg:frankwolfe} is run for $T$ iterations over $N$ agents in parallel where:
  \[
T \geq 10 \beta |\mathcal S| N^{-1}\varepsilon^{-1} \log 10 B \varepsilon^{-1} \, ,
\]
we have that:
\[
H(\pi_{\textrm{mix},T}) \geq \max_\pi H(d_\pi) - \varepsilon \, .
\] 
\end{theorem}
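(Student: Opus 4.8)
The plan is to follow the standard Frank--Wolfe analysis for maximizing a smooth concave functional over the set of achievable state distributions, but to track carefully how the parallel structure enters the per-iteration progress. The key object is the \emph{mixture} state distribution $d_{\pi_{\mathrm{mix},t}}$ produced after $t$ iterations. Because $\mathcal H$ is concave (it is the Shannon entropy of the mixture), $\beta$-smooth, and $B$-bounded, the canonical Frank--Wolfe descent lemma gives, for a step size $\eta$,
\[
\mathcal H(d_{\pi_{\mathrm{mix},t+1}}) \geq \mathcal H(d_{\pi_{\mathrm{mix},t}}) + \eta\,\langle \nabla \mathcal H(d_{\pi_{\mathrm{mix},t}}), g_t - d_{\pi_{\mathrm{mix},t}}\rangle - \tfrac{\beta}{2}\eta^2 \|g_t - d_{\pi_{\mathrm{mix},t}}\|_\infty^2,
\]
where $g_t$ is the update direction supplied by the oracles. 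First I would make precise that with $N$ parallel agents each contributing a Frank--Wolfe direction, the effective step taken toward the linear maximizer is amplified: the mixture is updated by aggregating $N$ directions, which is what produces the $1/N$ scaling in the required iteration count. Concretely, I would show that the aggregated direction yields an inner-product (duality-gap) term that, after using the two oracle guarantees, is at least the true optimality gap $\mathcal H(d^\star) - \mathcal H(d_{\pi_{\mathrm{mix},t}})$ minus the controlled errors $\varepsilon_0,\varepsilon_1$.

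Next I would convert the per-step improvement into a convergence rate. Writing the optimality gap as $\Delta_t := \max_\pi \mathcal H(d_\pi) - \mathcal H(d_{\pi_{\mathrm{mix},t}})$, the descent lemma combined with the duality-gap bound gives a recursion of the form $\Delta_{t+1} \leq (1 - c\,\eta N)\Delta_t + (\text{error terms})$, where the factor $N$ appears precisely because the parallel updates collectively move the mixture $N$ times faster along the chosen direction. With the prescribed choices $\eta = 0.1|\mathcal S|^{-1}\beta^{-1}N\varepsilon$, $\varepsilon_1 = 0.1\varepsilon$, and $\varepsilon_0 = 0.1\beta^{-1}\varepsilon$, I would check that the error terms are each bounded by a constant fraction of $\varepsilon$ (this is the role of the numerical constants $0.1$), and that the contraction factor is nondegenerate. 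Unrolling the geometric recursion then yields $\Delta_T \leq (1 - c\,\eta N)^T B + O(\varepsilon)$, and solving $(1-c\eta N)^T B \leq O(\varepsilon)$ for $T$ gives the stated threshold $T \geq 10\beta|\mathcal S|N^{-1}\varepsilon^{-1}\log(10B\varepsilon^{-1})$, where the $\log(B/\varepsilon)$ arises from the geometric decay and the $B$-boundedness initializes $\Delta_0 \leq B$.

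The main obstacle I anticipate is making rigorous the claim that the $N$ parallel Frank--Wolfe directions combine additively to give an $N$-fold speedup, rather than merely an averaging that would wash out the gain. This requires a careful argument about how the mixture distribution $d_{\pi_{\mathrm{mix},t}} = \tfrac{1}{m}\sum_i d_{\pi^i}$ evolves when each of the $N$ agents is updated by its own oracle-supplied direction, and in particular that the smoothness penalty (the $\eta^2$ term) does not also scale up with $N$ in a way that cancels the benefit. I would handle this by exploiting the linearity of the mixture in the individual distributions, so that the gradient inner product decomposes across agents while the second-order term remains controlled by $\|g_t - d_{\pi_{\mathrm{mix},t}}\|_\infty \leq 1$; the $\beta$-smoothness bound in $\|\cdot\|_\infty$ is what keeps this term at order $\eta^2$ regardless of $N$. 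A secondary technical point is propagating the two approximation errors $\varepsilon_0$ (density estimation) and $\varepsilon_1$ (approximate planning) through the duality-gap estimate so that together they contribute at most $O(\varepsilon)$; this is routine once the oracle guarantees are substituted, but it must be done in the $\ell_\infty$ norm to match the smoothness assumption.
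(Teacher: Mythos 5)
Your overall skeleton --- smoothness descent lemma, propagation of the two oracle errors into the duality-gap term, concavity to convert the gap term into $\Delta_t$, geometric recursion, and telescoping from $\Delta_0 \le B$ --- matches the paper's proof. The genuine gap is in \emph{where the factor $N$ comes from}. You claim the recursion contracts as $\Delta_{t+1} \le (1 - c\,\eta N)\Delta_t + \text{errors}$ because the parallel updates ``collectively move the mixture $N$ times faster along the chosen direction.'' This is false: the update is $d_{\pi_{\mathrm{mix},t+1}} = (1-\eta)\,d_{\pi_{\mathrm{mix},t}} + \eta \cdot \frac{1}{N}\sum_i d_{\pi^i_{t+1}}$, so the step along the aggregated direction has length $\eta$, not $N\eta$; and since every agent's direction is a near-maximizer of the \emph{same} linearized objective, the averaged inner product $\frac{1}{N}\sum_i \langle d_{\pi^i_{t+1}} - d_{\pi_{\mathrm{mix},t}}, \nabla H(d_{\pi_{\mathrm{mix},t}})\rangle$ is at least $\Delta_t$ minus errors but can never exceed the single-agent linear maximum --- averaging near-optimal directions cannot beat the linear maximizer. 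Hence the contraction factor is $(1-\eta)$, exactly as in the paper. Your version is also internally inconsistent: with contraction $(1-c\,\eta N)$ and the prescribed $\eta = 0.1|\mathcal S|^{-1}\beta^{-1}N\varepsilon$, solving $(1-c\,\eta N)^T B \le O(\varepsilon)$ would give $T \propto \beta|\mathcal S| N^{-2}\varepsilon^{-1}$, not the stated $N^{-1}$ rate.

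The actual source of the parallel speedup in the paper is the second-order term --- precisely the term you chose to keep ``at order $\eta^2$ regardless of $N$.'' The paper bounds
\[
\eta^2 \beta \Bigl\| \tfrac{1}{N}\textstyle\sum_i \bigl(d_{\pi^i_{t+1}} - d_{\pi_{\mathrm{mix},t}}\bigr) \Bigr\|_2^2
\;\le\; \frac{\eta^2\beta}{N^2}\sum_i \bigl\| d_{\pi^i_{t+1}} - d_{\pi_{\mathrm{mix},t}} \bigr\|_2^2
\;\le\; \frac{\eta^2\beta|\mathcal S|}{N},
\]
i.e., the smoothness penalty shrinks by a factor of $N$. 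This is what licenses a step size that is $N$ times larger (the choice $\eta = 0.1|\mathcal S|^{-1}\beta^{-1}N\varepsilon$ keeps the residual term $\eta\beta|\mathcal S|/N$ equal to $0.1\varepsilon$ after unrolling), and $T \ge \eta^{-1}\log(10B\varepsilon^{-1})$ then inherits the $N^{-1}$. With your treatment (penalty of order $\beta\eta^2$ with no $1/N$), the step size cannot scale with $N$ and the resulting iteration bound carries no $N^{-1}$ factor at all, so your proposal as written does not prove the theorem. As an aside, should you pursue the paper's route: the displayed $1/N^2$ step is stronger than what Jensen or Cauchy--Schwarz give (those yield $\frac{1}{N}\sum_i \|\cdot\|_2^2$), and holds as written only when the per-agent deviation vectors are essentially orthogonal (e.g., agents covering disjoint regions), so that step deserves scrutiny in its own right --- but it, or something playing its role, is indispensable for the claimed rate.
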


\begin{proof}[Proof of Theorem~\ref{mainT}]
Let $\pi^*$ be a maximum-entropy policy, ie. $\pi^* \in \text{argmax}_\pi
H(d_\pi)$.
\ifdefined\arxiv
\begin{align*}
&H(d_{\pi_{\textrm{mix},t+1}}) = H((1-\eta)d_{\pi_{\textrm{mix},t}}+\eta d_{\pi_{t+1}})\\
&\geq H(d_{\pi_{\textrm{mix},t}})+\eta \langle d_{\pi_{t+1}}-d_{\pi_{\textrm{mix},t}}, \nabla H(d_{\pi_{\textrm{mix}mixture,t}})\rangle - \eta^2\beta \|d_{\pi_{t+1}}-d_{\pi_{\textrm{mix},t}}\|_2^2 \\
&\geq H(d_{\pi_{\textrm{mix},t}})+\frac{1}{N}\sum_i \eta \langle d_{\pi^i_{t+1}}-d_{\pi_{\textrm{mix},t}}, \nabla H(d_{\pi_{\textrm{mix},t}})\rangle - \frac{1}{N^2}\sum_i\eta^2\beta \|d_{\pi^i_{t+1}}-d_{\pi_{\textrm{mix},t}}\|_2^2
\end{align*}
\else
\begin{align*}
&H(d_{\pi_{\textrm{mix},t+1}}) = H((1-\eta)d_{\pi_{\textrm{mix},t}}+\eta d_{\pi_{t+1}})\\
\geq& H(d_{\pi_{\textrm{mix},t}})+\eta \langle d_{\pi_{t+1}}-d_{\pi_{\textrm{mix},t}}, \nabla H(d_{\pi_{\textrm{mix},t}})\rangle - \eta^2\beta \|d_{\pi_{t+1}}-d_{\pi_{\textrm{mix},t}}\|_2^2 \\
\geq& H(d_{\pi_{\textrm{mix},t}})+\frac{\eta}{N}\sum_i \langle d_{\pi^i_{t+1}}-d_{\pi_{\textrm{mix},t}}, \nabla H(d_{\pi_{\textrm{mix},t}})\rangle - \frac{\eta^2\beta}{N^2}\sum_i \|d_{\pi^i_{t+1}}-d_{\pi_{\textrm{mix},t}}\|_2^2
\end{align*}

The second inequality follows from the smoothness of $H$, the third applies the definition of distributions induced by mixture policies. Now, to incorporate the error due to the two oracles, observe that for each agent it holds

\begin{align*}
    \langle d_{\pi^i_{t+1}}, \nabla H(d_{\pi_{\textrm{mix},t}})\rangle &\geq \langle d_{\pi^i_{t+1}}, \nabla H(\hat{d}^i_{\pi_{\textrm{mix},t}})\rangle - \beta \|d_{\pi_{\textrm{mix},t}}-\hat{d}^i_{\pi_{\textrm{mix},t}}\|_\infty \\
    &\geq \langle d_{\pi^*}, \nabla H(\hat{d}^i_{\pi_{\textrm{mix},t}})\rangle - \beta\varepsilon_0 -\varepsilon_1 \\    &
    \geq \langle d_{\pi^*}, \nabla H(d_{\pi_{\textrm{mix},t}})\rangle - 2\beta\varepsilon_0 -\varepsilon_1 
\end{align*}

The first and last inequalities invoke the assumptions on the entropy functional. Note that the second inequality above follows from the defining character of the planning oracle. Using the above fact and continuing on

\begin{align*}
    H(d_{\pi_{\textrm{mix},t+1}}) \geq& H(d_{\pi_{\textrm{mix},t}})+\frac{\eta}{N}\sum_i \langle d_{\pi^i_{t+1}}-d_{\pi_{\textrm{mix},t}}, \nabla H(d_{\pi_{\textrm{mix},t}})\rangle - \frac{\eta^2\beta}{N^2}\sum_i \|d_{\pi^i_{t+1}}-d_{\pi_{\textrm{mix},t}}\|_2^2 \\
     \geq& H(d_{\pi_{\textrm{mix},t}}) + \eta \langle d_{\pi^\star}-d_{\pi_{\textrm{mix},t}}, \nabla H(d_{\pi_{\textrm{mix},t}})\rangle -  2\beta\eta\varepsilon_0 -\eta\varepsilon_1  - \frac{\eta^2\beta}{N}|\mathcal S| \\
    \geq& (1-\eta)H(d_{\pi_{\textrm{mix},t}})+\eta H(d_{\pi^*})- 2\eta\beta\varepsilon_0 -\eta\varepsilon_1- \frac{\eta^2\beta|\mathcal S| }{N}
\end{align*}

The last step here utilizes the concavity of $H$. It follows that

\begin{align*}
& H(d_{\pi^*})-H(d_{\pi_{\textrm{mix},t+1}}) \leq (1-\eta) (H(d_{\pi^*})-H(d_{\pi_{\textrm{mix},t}})) + 2\eta\beta\varepsilon_0 +\eta\varepsilon_1+ \frac{\eta^2\beta |\mathcal S|}{N} .
\end{align*}

Telescoping the inequality, this simplifies to 

\begin{align*}
H(d_{\pi^*})-H(d_{\pi_{\textrm{mix},T}}) & \leq (1-\eta)^T (H(d_{\pi^*})-H(d_{\pi_{\textrm{mix},0}})) + 2\beta\varepsilon_0 +\varepsilon_1+ \eta\beta \\
&\leq B e^{-T\eta} + 2\beta\varepsilon_0 +\varepsilon_1 +\frac{\eta^2\beta |\mathcal S|}{N}.
\end{align*}
\begin{align*}
&H(d_{\pi^*})-H(d_{\pi_{\textrm{mix},T}}) \\
& \leq (1-\eta)^T (H(d_{\pi^*})-H(d_{\pi_{\textrm{mix},0}})) + 2\beta\varepsilon_0 +\varepsilon_1+\frac{\eta\beta |\mathcal S|}{N} \\
&\leq B e^{-T\eta} + 2\beta\varepsilon_0 +\varepsilon_1+ \frac{\eta\beta |\mathcal S|}{N}.
\end{align*}

Setting $\varepsilon_1=0.1\varepsilon$, $\varepsilon_0 = 0.1 \beta^{-1}\varepsilon$, $\eta = 0.1N|\mathcal S|^{-1}\beta^{-1}\varepsilon$, $T= \eta^{-1} \log 10B\varepsilon^{-1}$ leads to the final result.
\end{proof}

\begin{algorithm}[]
\caption{{Parallel Frank-Wolfe}.}
\label{alg:frankwolfe}
\begin{algorithmic}[1]
\STATE \textbf{Input:} Step size $\eta$, number
of iterations $T$, number
of agents $N$, planning oracle tolerance $\varepsilon_1>0$,
distribution estimation oracle tolerance $\varepsilon_0>0$.
\STATE Set $\{C^i_0=\{\pi^i_0\}\}_{i \in N}$ where $\pi^i_0$ is an arbitrary policy, $\alpha^i_0=1$.
\FOR{$t = 0, \ldots, T-1$}
\STATE Each agent call the state distribution oracle on
$\pi_{\textrm{mix},t}= \frac{1}{N} \sum_i(\alpha^i_t, C^i_t)$:
\[  \hat{d}^i_{\pi_{\textrm{mix},t}}=\textsc{DensityEst}\left(\pi_{\textrm{mix},t},\varepsilon_0\right)\]
\vspace{-0.5cm}
\STATE Define the reward function $r^i_t$ for each agent $i$ as 
\[ r^i_t(s) = \nabla H(\hat{d}^i_{\pi_{\textrm{mix},t}}) := \frac{d\mathcal H(X)}{dX}\Bigg\vert_{X=\hat{d}^i_{\pi_{\textrm{mix},t}}}.\]
\vspace{-0.5cm}
\STATE Each agent computes the (approximately) optimal policy on $r_t$:
\[
  \pi^i_{t+1} = \textsc{ApproxPlan}\left(r^i_t, \varepsilon_1 \right)
  \, .
\]
\vspace{-0.5cm}
\STATE Each agent updates
\begin{align}
C^i_{t+1} &= (\pi^i_0, \dots, \pi^i_t,\pi^i_{t+1}), \\
\alpha^i_{t+1} &= ((1 - \eta) \alpha^i_{t},\eta).
\end{align}
\ENDFOR
\STATE $\pi_{\textrm{mix},T} = \frac{1}{N} \sum_i(\alpha^i_T, C^i_T)$. 
\end{algorithmic}
\end{algorithm}

\newpage
\section{Experimental Details}
\label{sec:Paralell Environments}

\paragraph*{Environments.}~In this paper, we present two distinct grid-world environments designed to illustrate the effectiveness of parallel exploration in enabling agents to overcome exploration bottlenecks and efficiently discover optimal trajectories in both structured and intricate scenarios.. In the first, two rooms are connected through a corridor with the goal positioned in the left room. In the second one, we considered a maze-like space, with a more complex layout with many bifurcations but only one path leading to a goal. The two environments are shown in Figures~\ref{fig:rooms} and~\ref{fig:maze}. In the following, we briefly describe the two environments more thorough.

\begin{figure}[]
    \centering
    \begin{minipage}{0.4\textwidth} 
        \centering
        \includegraphics[width=0.8\linewidth]{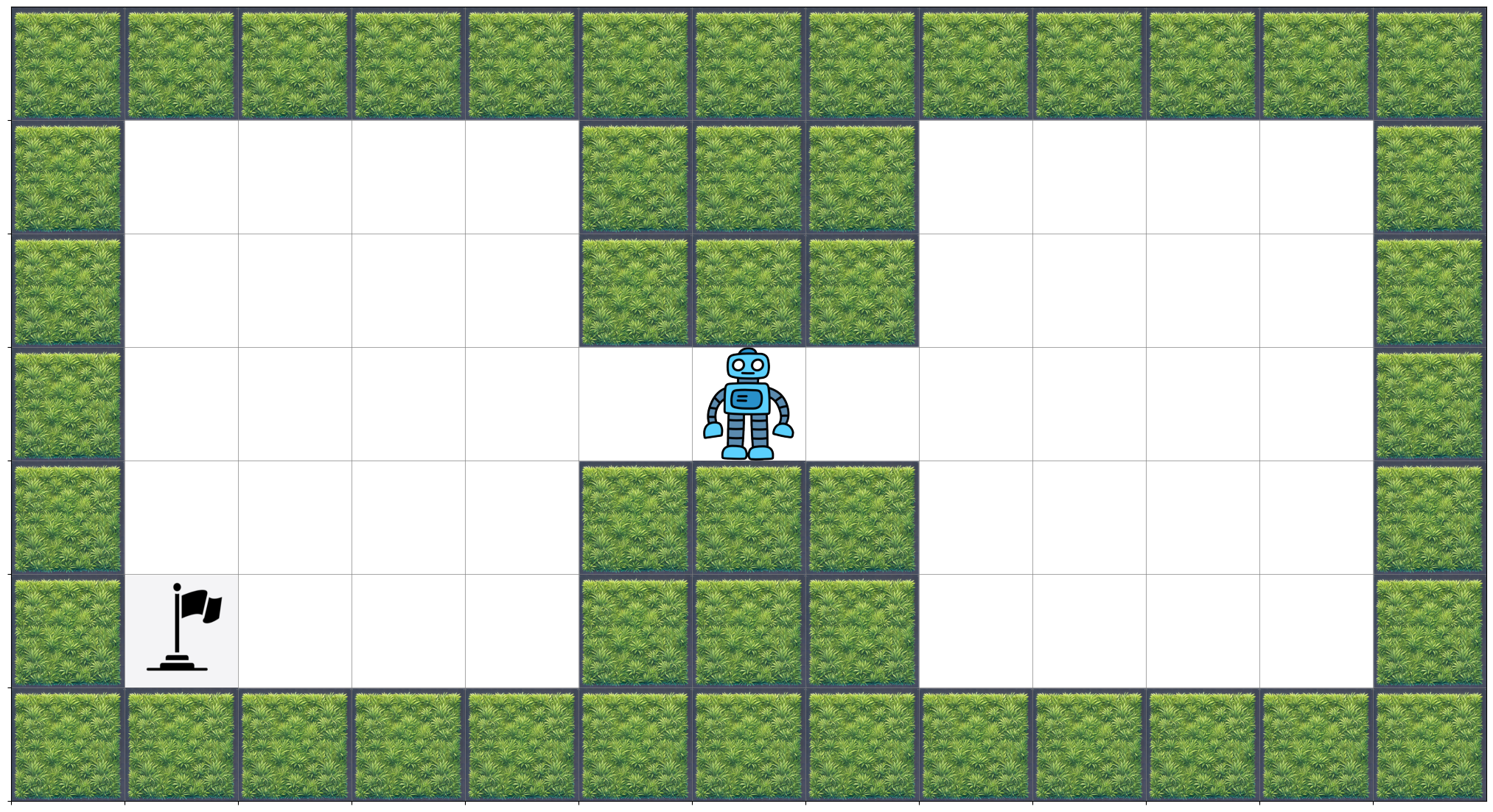}
        \caption{Two rooms gridworld, with starting position in the corridor and goal state in the left bottom corner.}
        \label{fig:rooms}
    \end{minipage}
    \hspace{2pt}
    \begin{minipage}{0.3\textwidth} 
        \centering
        \includegraphics[width=0.8\linewidth]{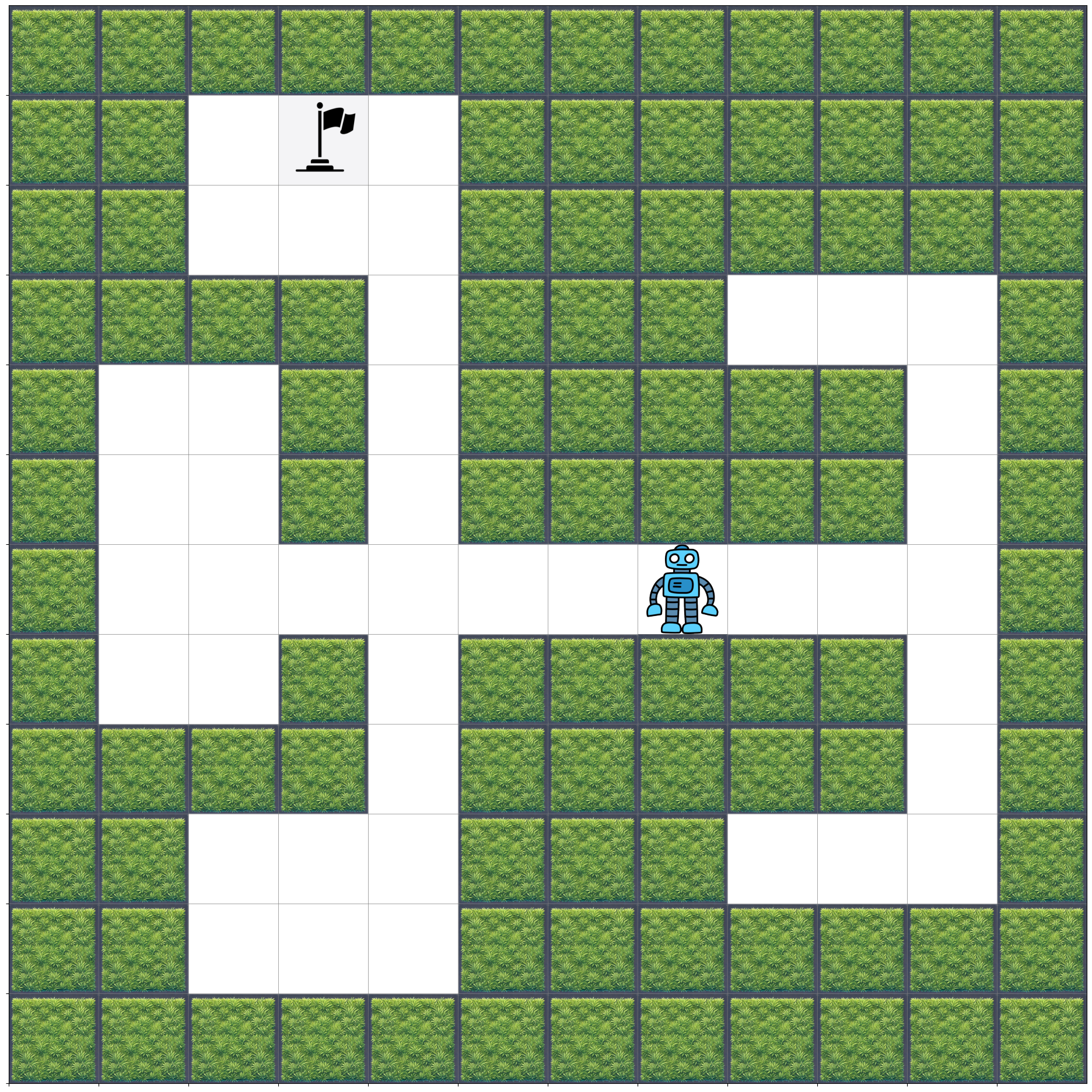}
        \caption{Maze gridworld, with multiple path, goal state in the upper section}
        \label{fig:maze}
    \end{minipage}
    \label{fig:envs}
\end{figure}

\subsection{Room} \label{env:ROOM}
This is a $5 \times 11$ grid world inspired by the toy example from \citet{towers2024gymnasiumstandardinterfacereinforcement}. It consists of two rooms connected by a single horizontal corridor.  
The agent moves until reaching the goal or the episode terminates.  
The starting position is fixed at the center of the corridor, $(2,5)$. The agent navigates the grid using actions from $\{0, 1, 2, 3\}$, where the index indicate $0$: \textit{Move left},  $1$: \textit{Move down}, $2$: \textit{Move right}, $3$: \textit{Move up}.
The observation space represents the player's current position as an index computed by $
\text{obs} = \text{row} \times \text{ncols} + \text{col}$
where rows and columns are zero-indexed. For instance, the goal position $(4,0)$ is mapped to $
4 \times 11 + 0 = 44.$.
The total number of observations depends on the grid size.  
The environment is evaluated in both \textit{reward-free} and \textit{sparse-reward} settings, where in the latter case reward function $r: S \times A \to \mathbb{R}$ with $r = 1$ upon reaching the goal and $0$ otherwise.
Throughout the paper, we consider two configurations:
\begin{itemize}
    \item \emph{Room-det}: A deterministic version where executing an action $a$ under policy $\pi_{\theta}$ always results in the intended movement.
    \item \emph{Room-stoc}: A stochastic variant where the agent plays the intended action with probability $1-p$ and deviate from it with $p$.
\end{itemize}

\subsection{Maze} \label{env:LAB}
This is a grid world $10 \times 10$. The environment consists of a discrete grid-based navigation task, where an agent must reach a goal position while navigating through a structured maze. The grid is structured to restricting movement to specific corridors. The agent, starts from an initial position $(5,6)$ and can move in four directions: up, down, left, or right, as in the \ref{env:ROOM} case. The goal serves as the terminal state, where the episode ends upon successful navigation or after a number of maximum steps.
Upon a different structure, the environment shares with the previous environment the reward function, the action space and the state index conversion.
In the paper we present two different version of the environment as following defined:
\begin{itemize}
    \item \emph{Maze-det}: A deterministic version where executing an action $a$ under policy $\pi_{\theta}$ always results in the intended movement.
    \item \emph{Maze-stoc}: A stochastic variant where the agent plays the intended action with probability $1-p$ and deviate from it with $p$.
\end{itemize}

\paragraph*{Implementation Details of Algorithm~\ref{alg2:reinforce_states}.}~
 As outlined in the pseudocode of Algorithm ~\ref{alg2:reinforce_states}, in each episode, a batch $|\mathcal{B}| = 40$, of $K$ trajectories are experienced by parallel learners. $K$ is fixed  during the experiment always to 1, due to the assumption of having access to the minimum size of possible interactions that the agents can have with respect to the environment. To reproduce the policy gradient performance of the single agent, this $K$ should coincide to the number of agent $m$, at which the single is compared to. Indeed, in the \ref{fig:performances} with respectively $\{2,6\}$ agents involved, the single agent experiences $K = \{2,6\}$ trajectories, from which the state distribution $d_n$ is derived.
 Over the $10k$ episodes, the parallel agents interact with the environment with finite trajectories of lenght $T$ that in the \emph{Room-det} and \emph{Room-stoc} environments is set to $8$,
while in \emph{Maze-det} and \emph{Maze-stoc}, due to a bigger state space, is set to $10$.

The parallel policy, as collection of parametric single policy on $\theta$, is a Softmax function defined as:
\begin{equation*}
    \pi_{\theta}(a|s) = \frac{e^{\theta \cdot s}}{\sum_{a' \in \mathcal{A}} e^{\theta \cdot s}}
\end{equation*}

$\theta$ parameters are updated via gradient ascent maximization, with learning rate $\alpha$ set initially to $0.1$ with exponential decay rate $\lambda$ of 0.999. The training is performed over $5$ different seeds $\{0,1,2,42,133\}$.

\section{Further Experimental Details and Analysis}
\label{appendix:experiments}

The code to reproduce the experiments is made available at the following \href{https://github.com/enzodepaola/Enhancing-Diversity-in-Parallel-Agents-A-Maximum-State-Entropy-Exploration-Story.git}{link}.

\subsection{Performance of Four Parallel Agents}
\label{appendix:four_agents}

To further validate our approach, we extend the experimental analysis presented in the main body by including results for four parallel agents. Figure \ref{fig:performances4agents} illustrates the performance of this configuration in terms of normalized state entropy (top row) and support size (bottom row) across the four environments: \emph{Room-det}, \emph{Room-stoc}, \emph{Maze-det}, and \emph{Maze-stoc}.

As observed, the four-agent configuration consistently outperforms the single-agent baseline, achieving significantly higher normalized entropy and support size. This demonstrates the continued effectiveness of our parallelized exploration strategy, with the centralized update mechanism of Algorithm \ref{alg2:reinforce_states} enabling the agents to collectively learn policies that lead to a more diverse and comprehensive exploration of the state space.

However, it is also evident that the marginal benefit of adding more agents diminishes as the number of agents increases. This phenomenon is likely attributable to the relatively small size of the state space in the considered environments. With more agents, the overlap in their explored trajectories increases, leading to a less pronounced improvement for each additional agent.

\begin{figure*}[h!]
    \centering
    \begin{tikzpicture}
    \node[draw=black, rounded corners, inner sep=2pt, fill=white] (legend) at (0,0) {
        \begin{tikzpicture}[scale=0.8]
            \draw[thick, color={rgb,255:red,247; green,112; blue,137}, opacity=0.8] (0,0) -- (1,0);
            \fill[color={rgb,255:red,247; green,112; blue,137}, opacity=0.2] (0,-0.1) rectangle (1,0.1);
            \node[anchor=west, font=\scriptsize] at (1.2,0) {2 Agents};

            \draw[thick, dashed, color={rgb,255:red,0; green,28; blue,127}, opacity=0.8](3,0) -- (4,0);
            \fill[color={rgb,255:red,0; green,28; blue,127}, opacity=0.2] (3,-0.1) rectangle (4,0.1);
            \node[anchor=west, font=\scriptsize] at (4.2,0) {1 Agent $K' = 2K$};

            \draw[thick, color={rgb,255:red,31; green,69; blue,19}, opacity=0.8] (0,-0.5) -- (1,-0.5);
            \fill[color={rgb,255:red,31; green,69; blue,9}, opacity=0.2] (0,-0.6) rectangle (1,-0.4);
            \node[anchor=west, font=\scriptsize] at (1.2,-0.5) {4 Agents};

            \draw[thick, dashed, color={rgb,255:red,31; green,69; blue,19}, opacity=0.8](3,-0.5) -- (4,-0.5);
            \fill[color={rgb,255:red,31; green,69; blue,19}, opacity=0.2] (3,-0.6) rectangle (4,-0.4);
            \node[anchor=west, font=\scriptsize] at (4.2,-0.5) {1 Agent $K' = 4K$};

            \draw[thick, color={rgb,255:red,59; green,163; blue,236}, opacity=0.8] (0,-1) -- (1,-1);
            \fill[color={rgb,255:red,59; green,163; blue,236}, opacity=0.2] (0,-1.1) rectangle (1,-0.9);
            \node[anchor=west, font=\scriptsize] at (1.2,-1) {6 Agents};

            \draw[thick, dashed, color={rgb,255:red,140; green,8; blue,0}, opacity=0.8] (3,-1) -- (4,-1);
            \fill[color={rgb,255:red,140; green,8; blue,0}, opacity=0.2] (3,-1.1) rectangle (4,-0.9);
            \node[anchor=west, font=\scriptsize] at (4.2,-1) {1 Agent $K' = 6K$};
            
        \end{tikzpicture}
    };
\end{tikzpicture}
    \vfill
    \begin{subfigure}[b]{0.24\textwidth}
        \includegraphics[width=\textwidth]{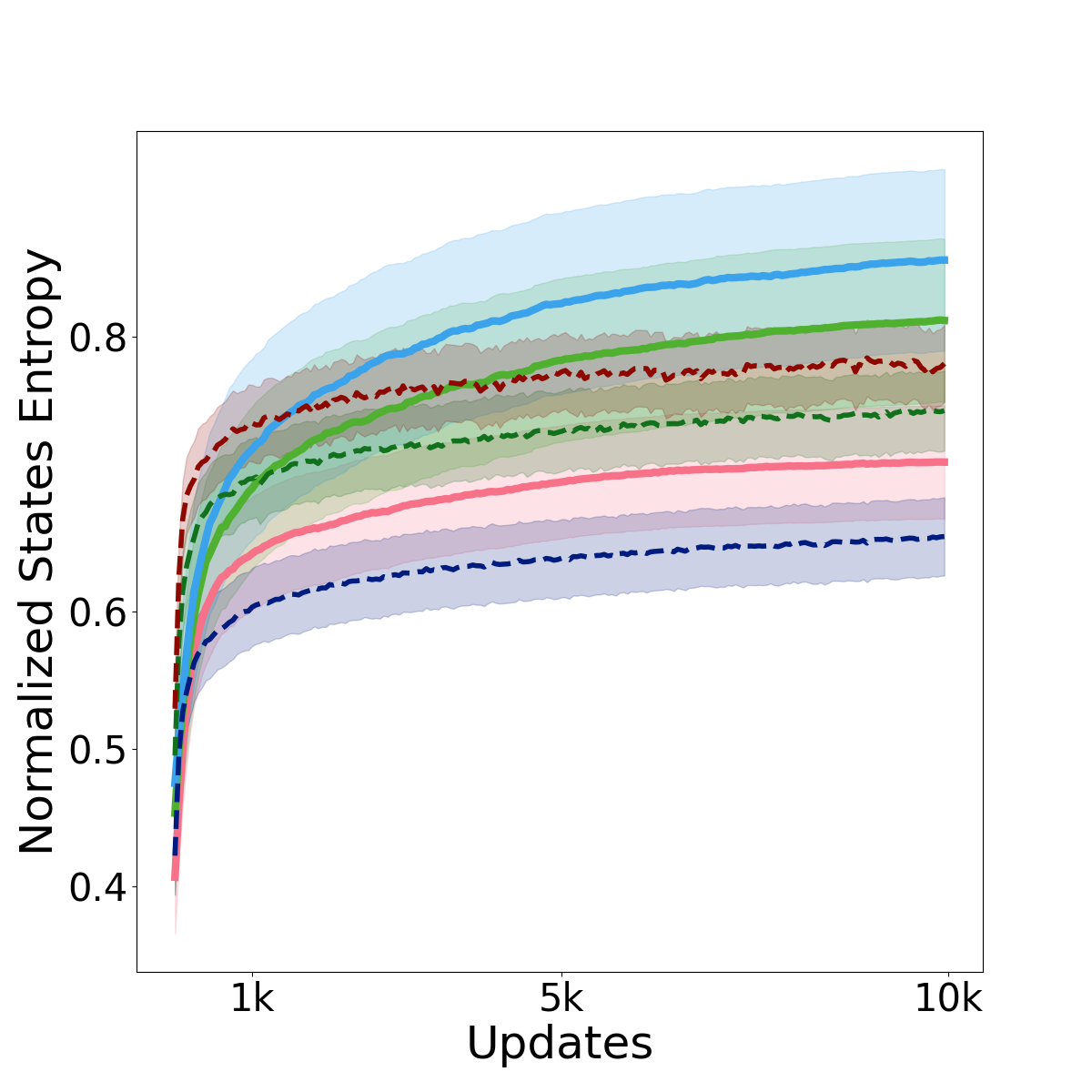}
        \vspace{-15pt}
        \caption{\centering State Entropy (\emph{Room-det}).}
        \label{subfig:app_roomdet_entropy}
    \end{subfigure}
    \hfill
    \begin{subfigure}[b]{0.24\textwidth}
        \includegraphics[width=\textwidth]{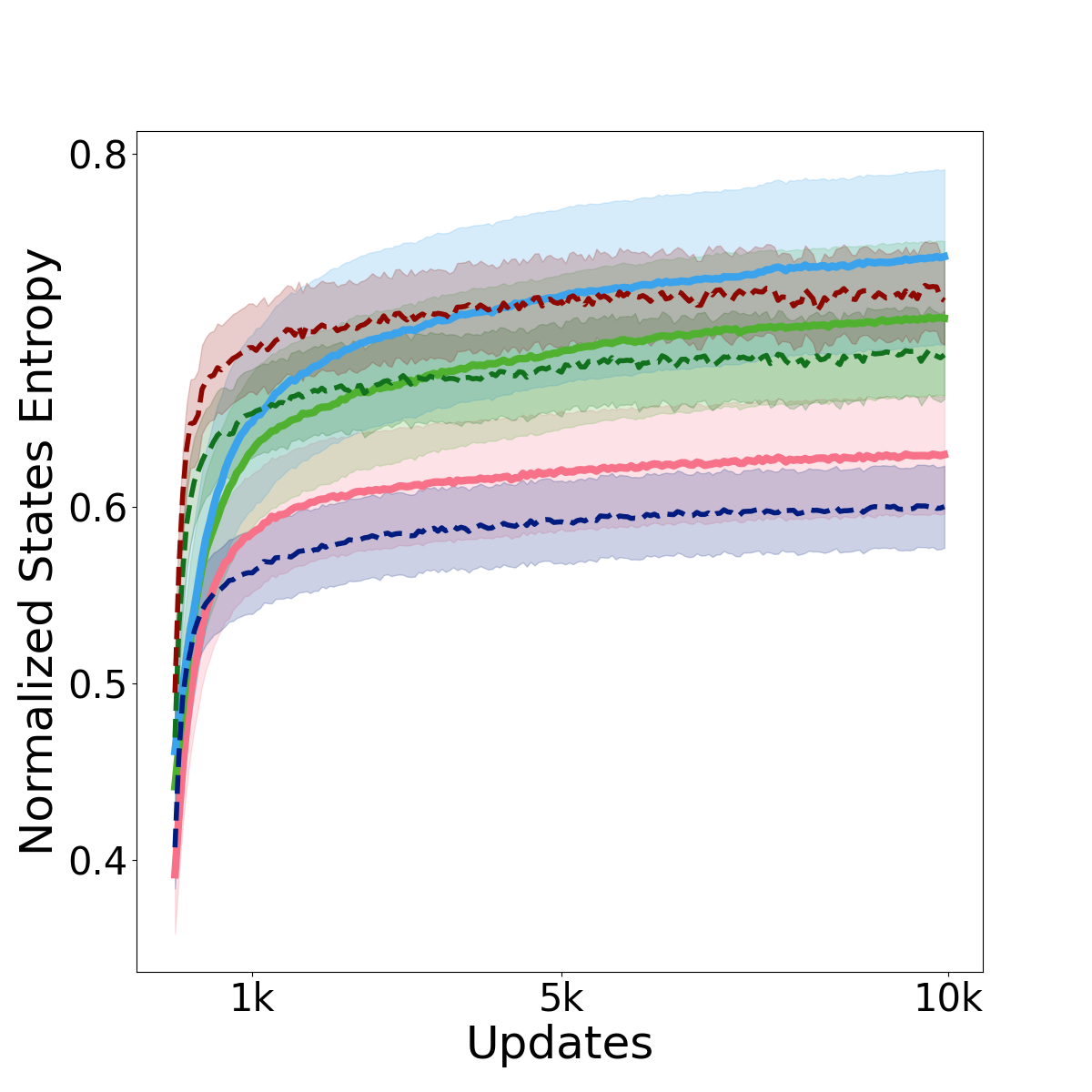}
        \vspace{-15pt}
        \caption{\centering State Entropy (\emph{Room-stoc}).}
        \label{subfig:app_rooms_entropy}
    \end{subfigure}
    \hfill
    \begin{subfigure}[b]{0.24\textwidth}
        \includegraphics[width=\textwidth]{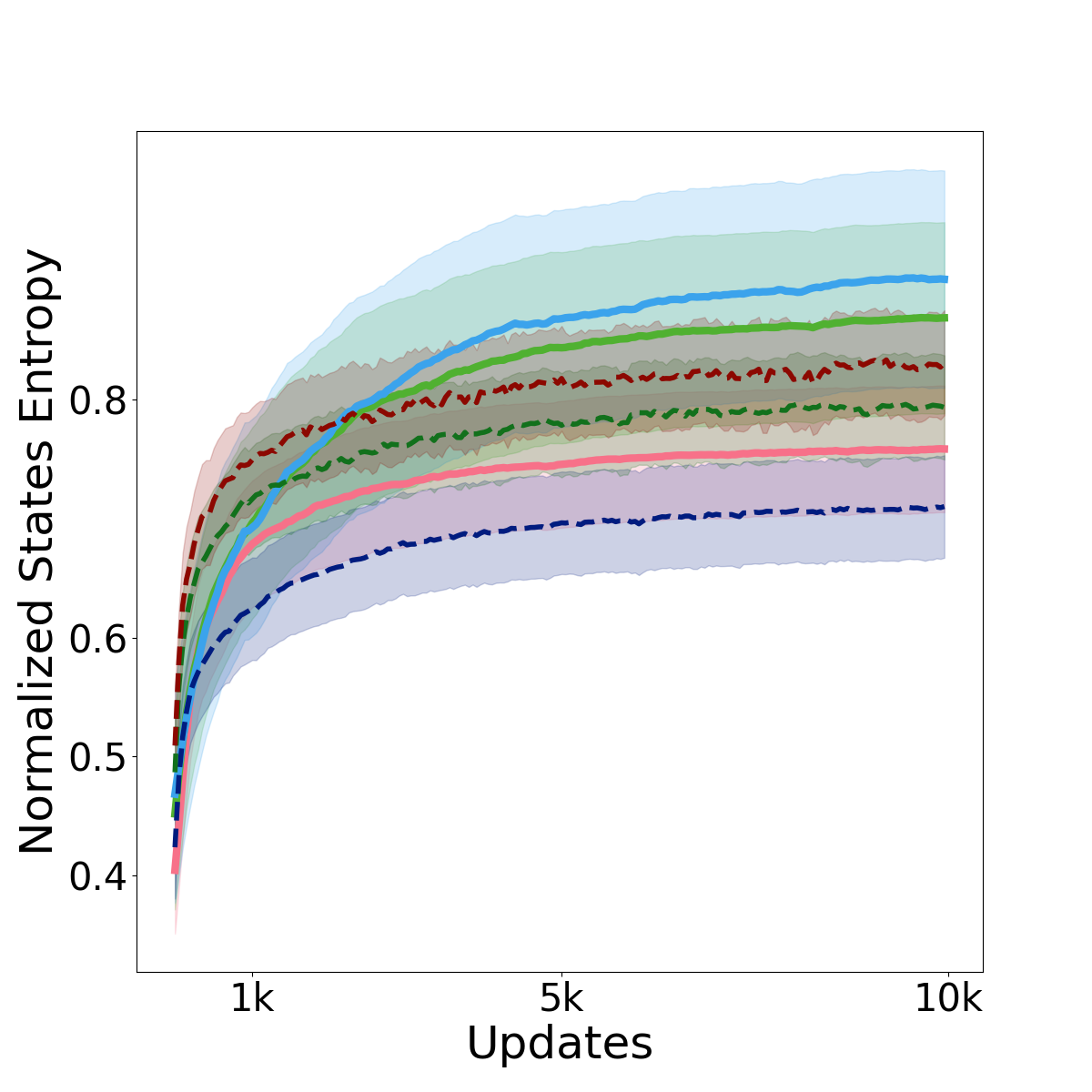}
        \vspace{-15pt}
        \caption{\centering State Entropy (\emph{Maze-det}).}
        \label{subfig:app_labdet_entropy}
    \end{subfigure}
    \hfill
    \begin{subfigure}[b]{0.24\textwidth}
        \includegraphics[width=\textwidth]{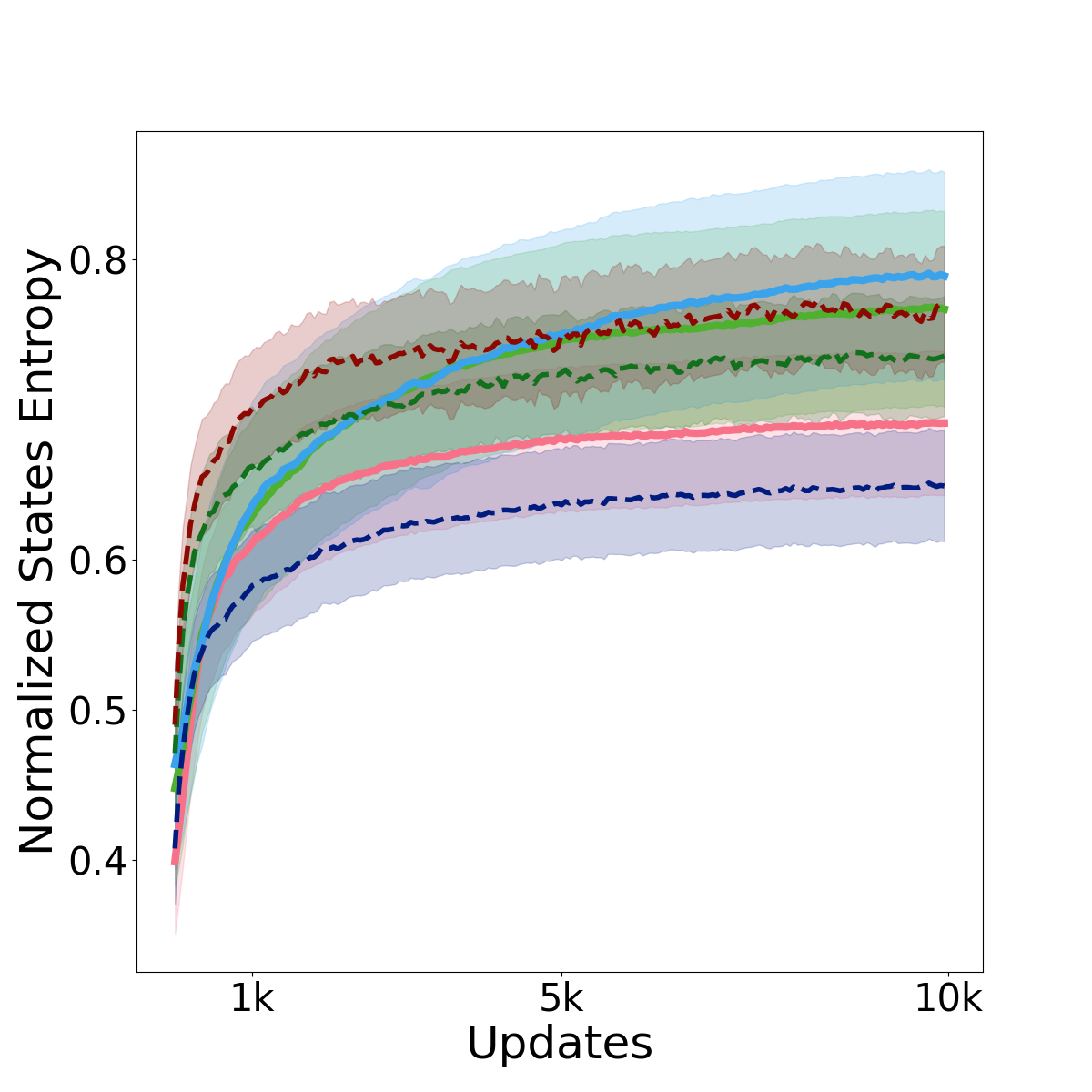}
        \vspace{-15pt}
        \caption{\centering State Entropy (\emph{Maze-stoc}).}
        \label{subfig:app_labstoch_entropy}
    \end{subfigure}
    \vfill
    \begin{subfigure}[b]{0.24\textwidth}
        \includegraphics[width=\textwidth]{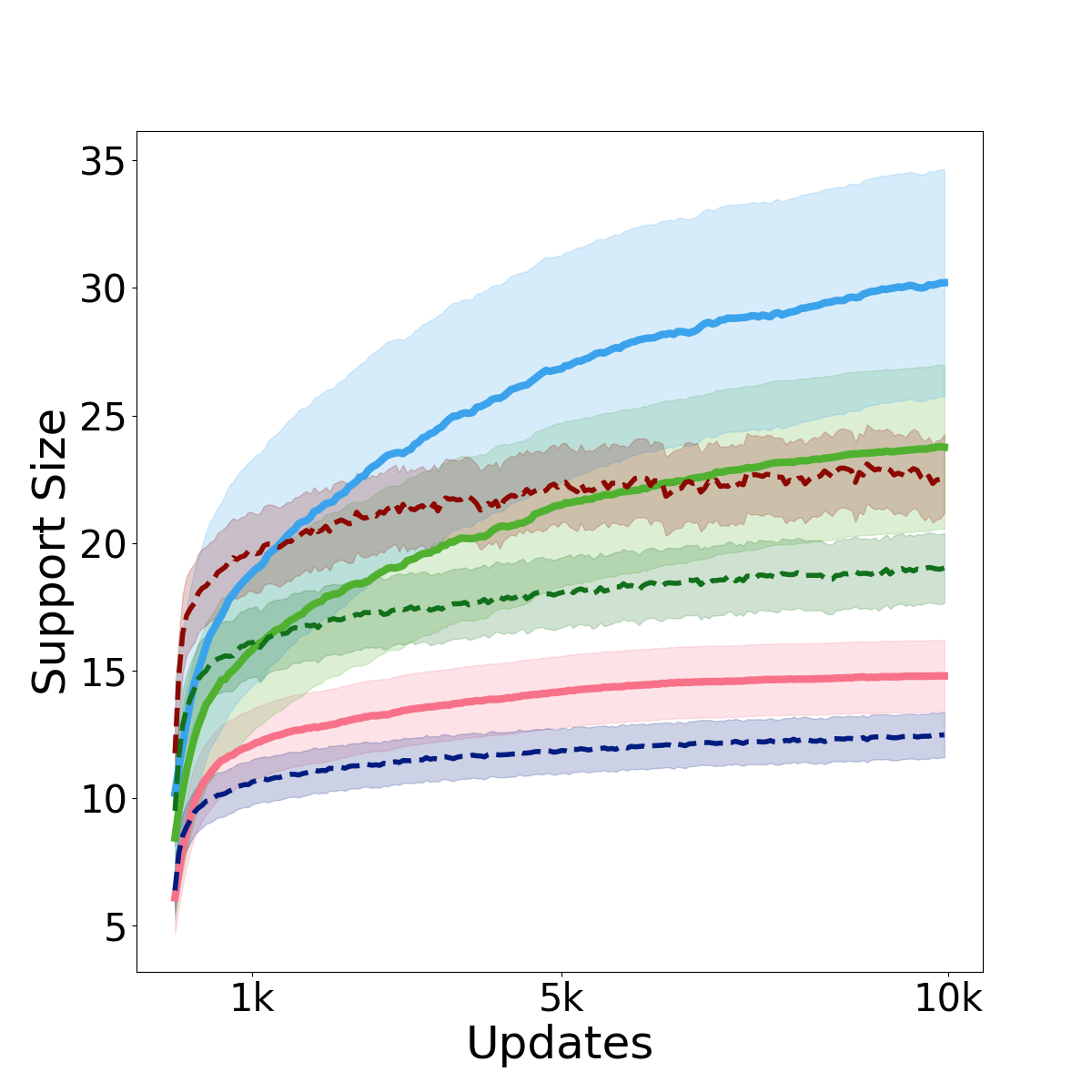}
        \vspace{-15pt}
        \caption{\centering Support (\emph{Room-det}).}
        \label{subfig:app_roomdet_support}
    \end{subfigure}
    \hfill
    \begin{subfigure}[b]{0.24\textwidth}
        \includegraphics[width=\textwidth]{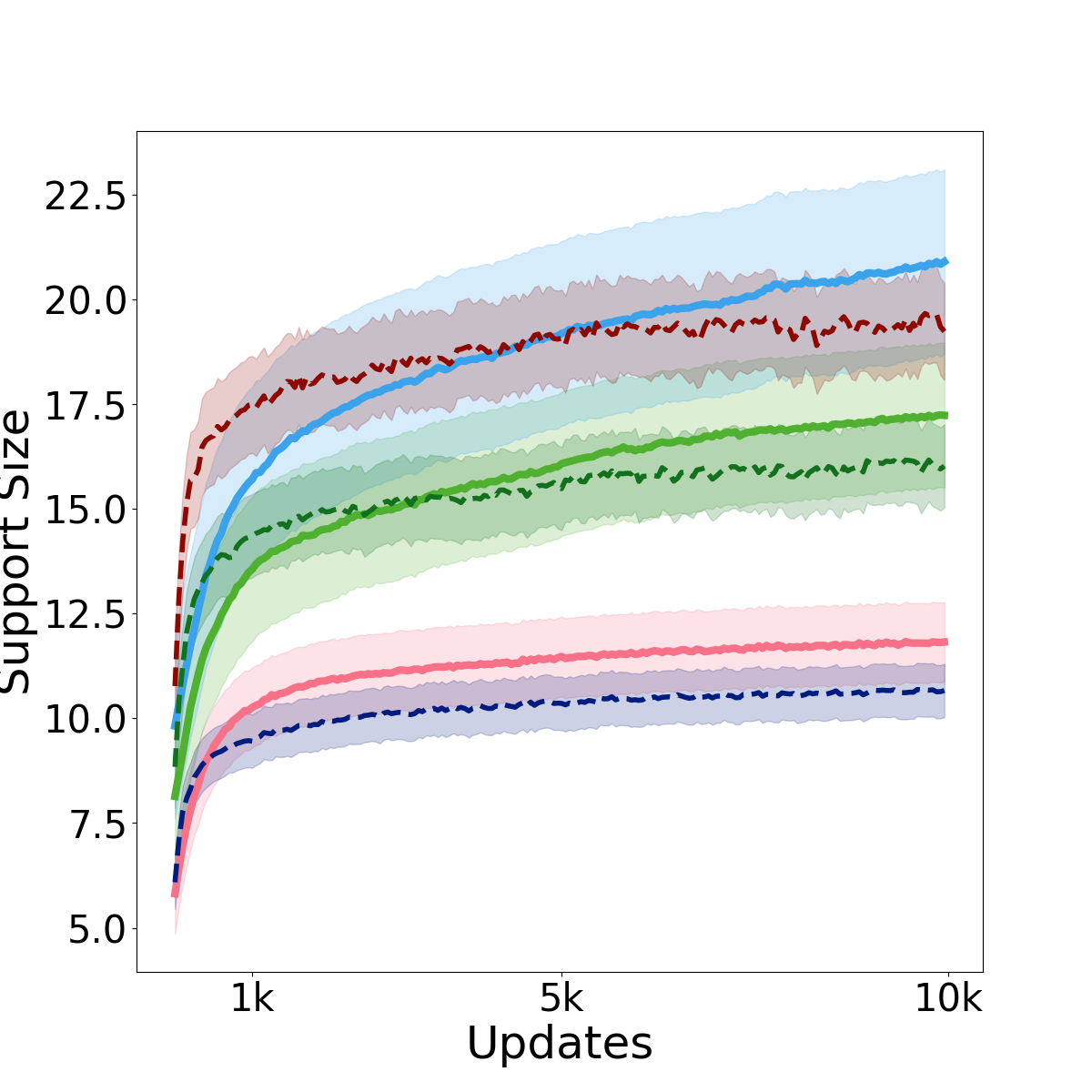}
        \vspace{-15pt}
        \caption{\centering Support (\emph{Room-stoc}).}
        \label{subfig:app_rooms_support}
    \end{subfigure}
    \hfill
    \begin{subfigure}[b]{0.24\textwidth}
        \includegraphics[width=\textwidth]{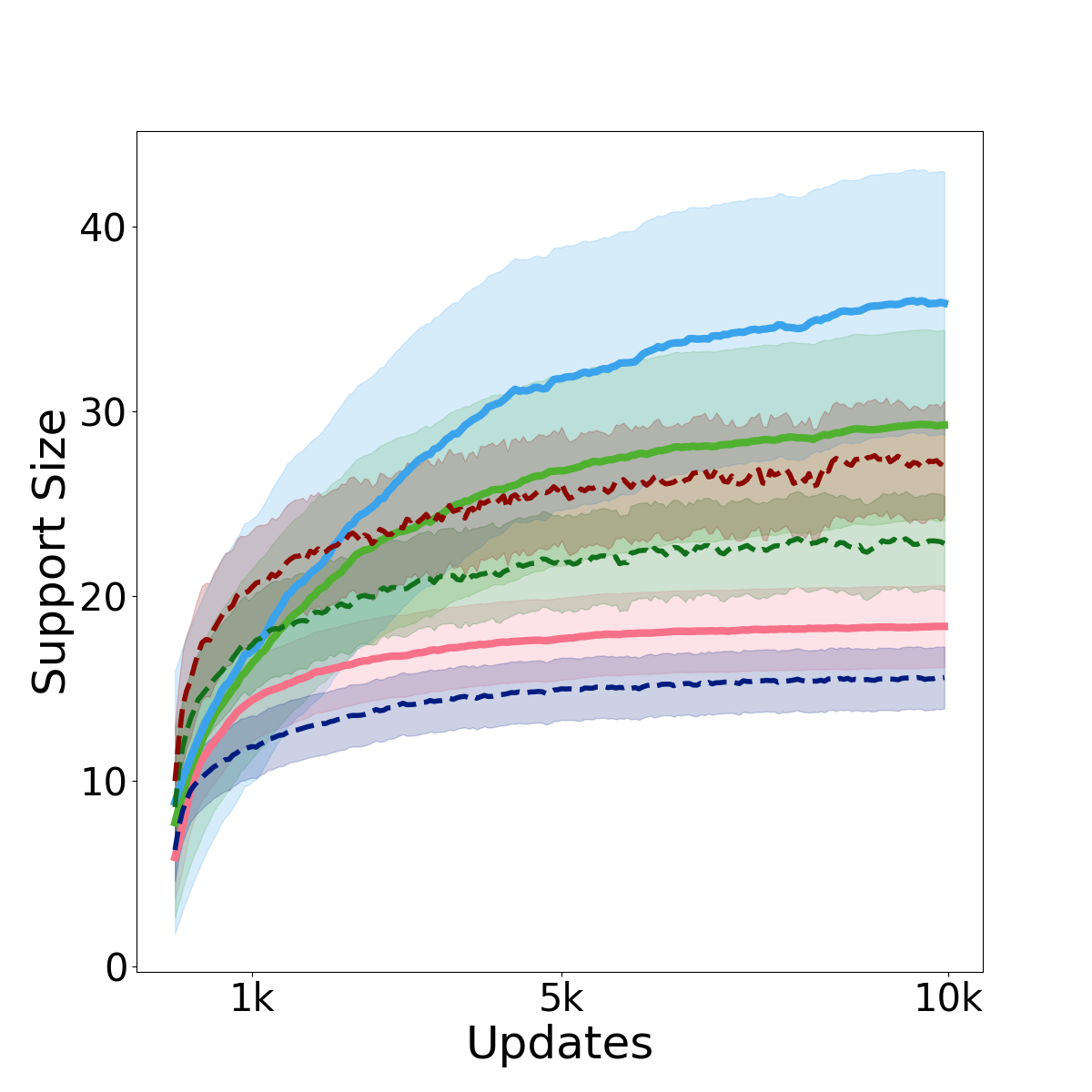}
        \vspace{-15pt}
        \caption{\centering Support (\emph{Maze-det}).}
        \label{subfig:app_labdet_support}
    \end{subfigure}
    \hfill
    \begin{subfigure}[b]{0.24\textwidth}
        \includegraphics[width=\textwidth]{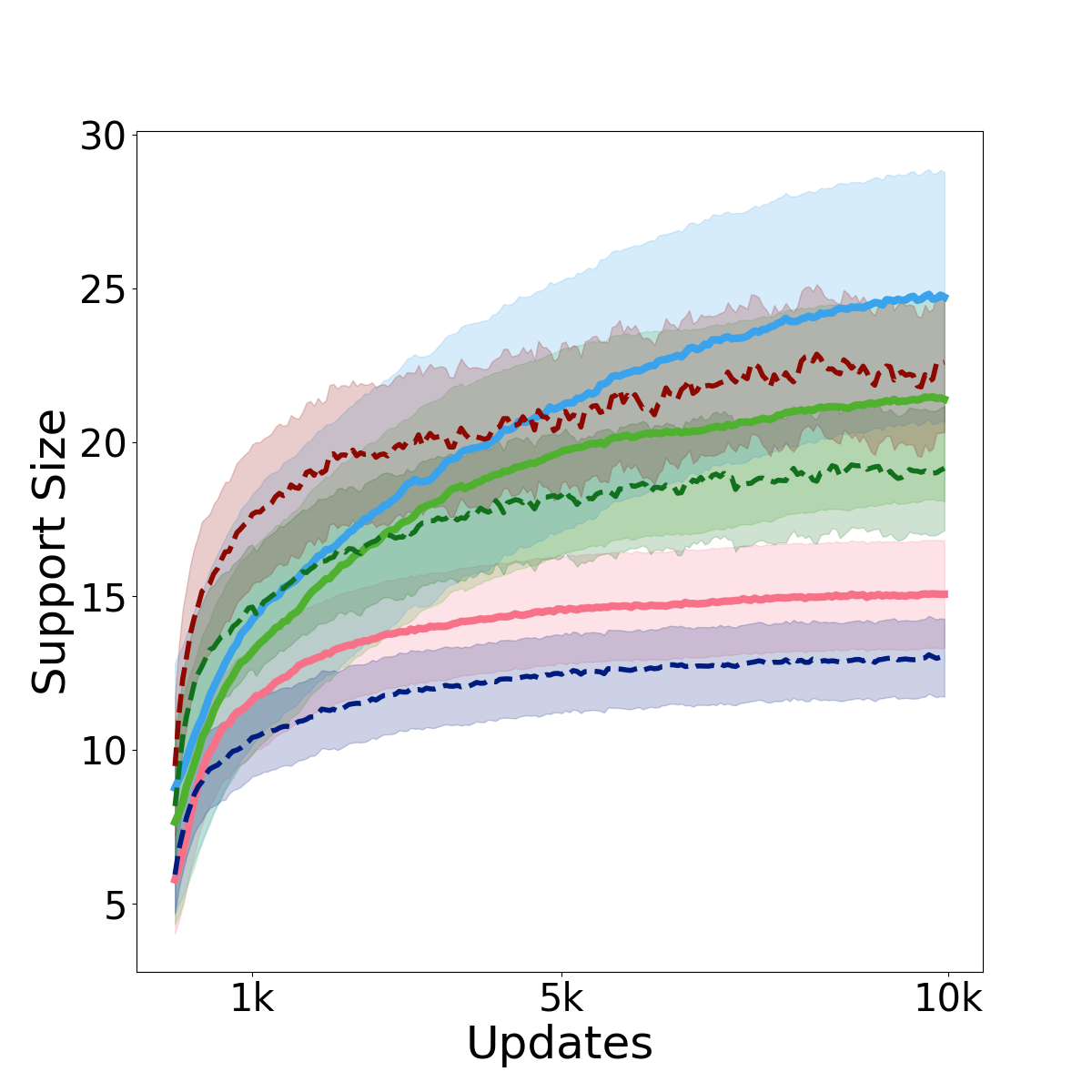}
        \vspace{-15pt}
        \caption{\centering Support (\emph{Maze-stoc}).}
        \label{subfig:app_labstoch_support}
    \end{subfigure}
    \vspace{-5pt}
    \caption{Performance of four parallel agents. The top row (a–d) shows the progression of normalized state entropy across different environments, while the bottom row (e–h) depicts the corresponding support size dynamics. Each plot illustrates the performance of parallel agents against a single instance visiting a number of trajectories corresponding to the number of parallel agents.}
    \label{fig:performances4agents}
\end{figure*}

\subsection{Dataset Analysis}
\label{appendix:dataset_analysis}

To further understand the benefits of parallel exploration, we analyze the datasets generated by the learned parametric policies. These datasets are constructed analogously to those in the single-agent case and the random policy baseline. Specifically, for each parallel-agent configuration, we sample $K = m$ trajectories to ensure a consistent comparison across settings. The mean and standard deviation of the state entropy are computed over these trajectories and averaged across the five random seeds used during training.
Let's look deeply at the dataset created from the paralell agents, when interacting with the more complex \emph{Maze} environment.
Figures \ref{fig:datasetextrapolationLab} present heatmaps visualizing the state occupancy distributions for the \emph{Maze} environments, respectively the deterministic and stochastic case. These heatmaps provide a qualitative comparison of the exploration patterns achieved by single agents, parallel agents, in the two implemented versions.

\begin{figure}[h!]
    \centering
    \begin{subfigure}[b]{0.45\columnwidth}
        \includegraphics[width=\linewidth]{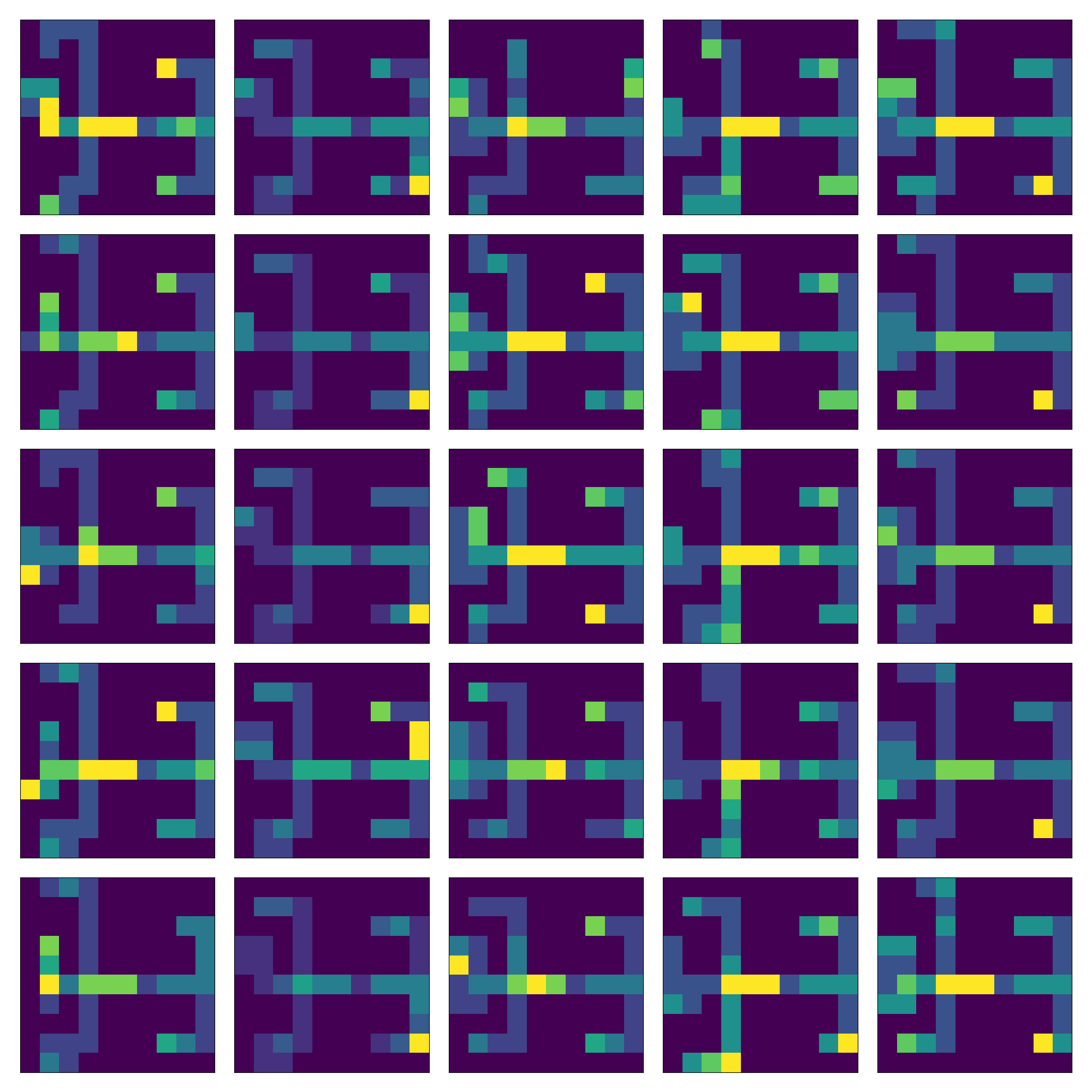}
        \vspace{-15pt}
        \caption{Parallel Agents (\emph{Maze-det}).}
        \label{subfig:app_par_labdet}
    \end{subfigure}
    \hfill
    \begin{subfigure}[b]{0.45\columnwidth}
        \includegraphics[width=\linewidth]{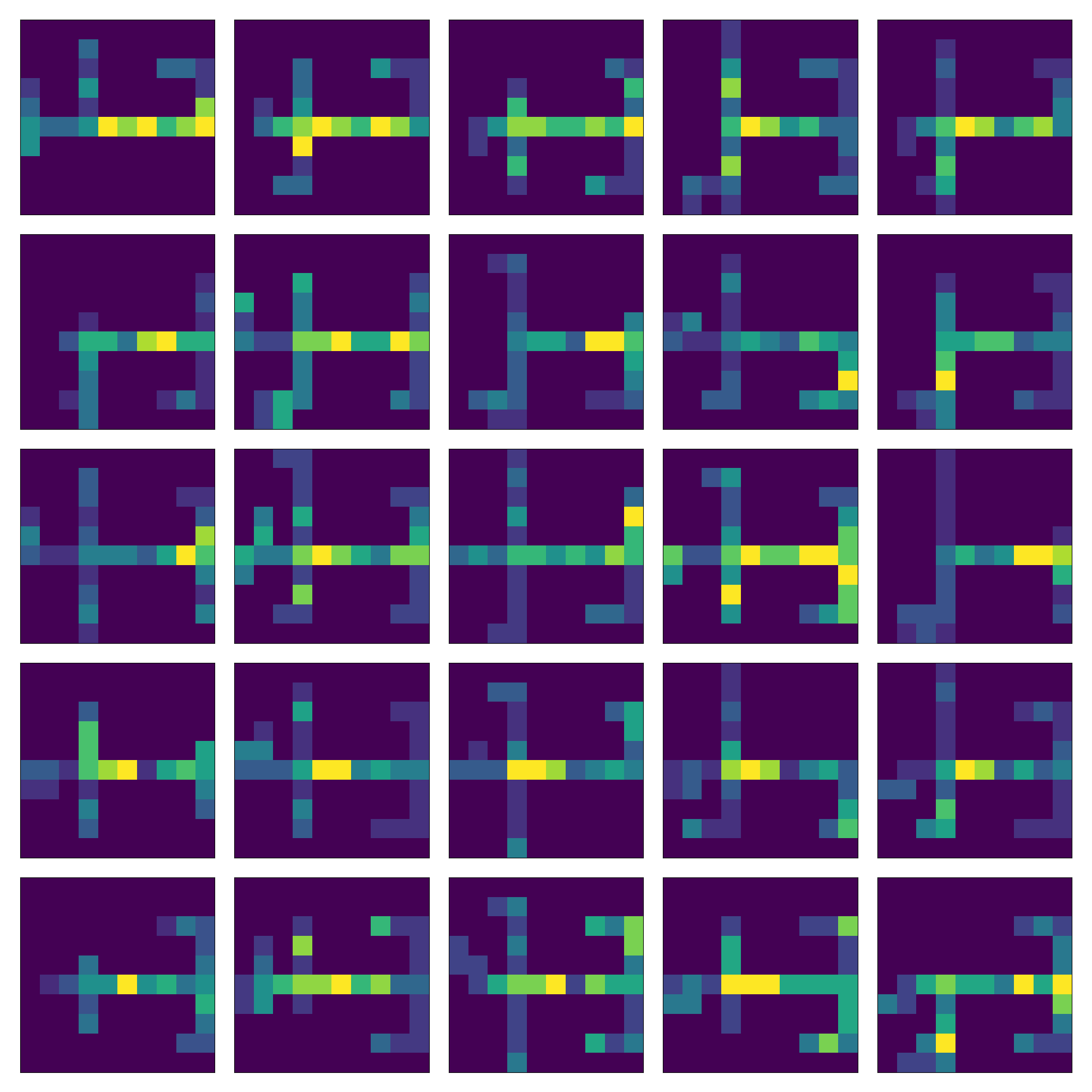}
        \vspace{-15pt}
        \caption{Parallel Agents (\emph{Maze-stoc}).}
        \label{subfig:app_par_labstoch}
    \end{subfigure}
        \begin{subfigure}[b]{0.45\columnwidth}
        \includegraphics[width=\linewidth]{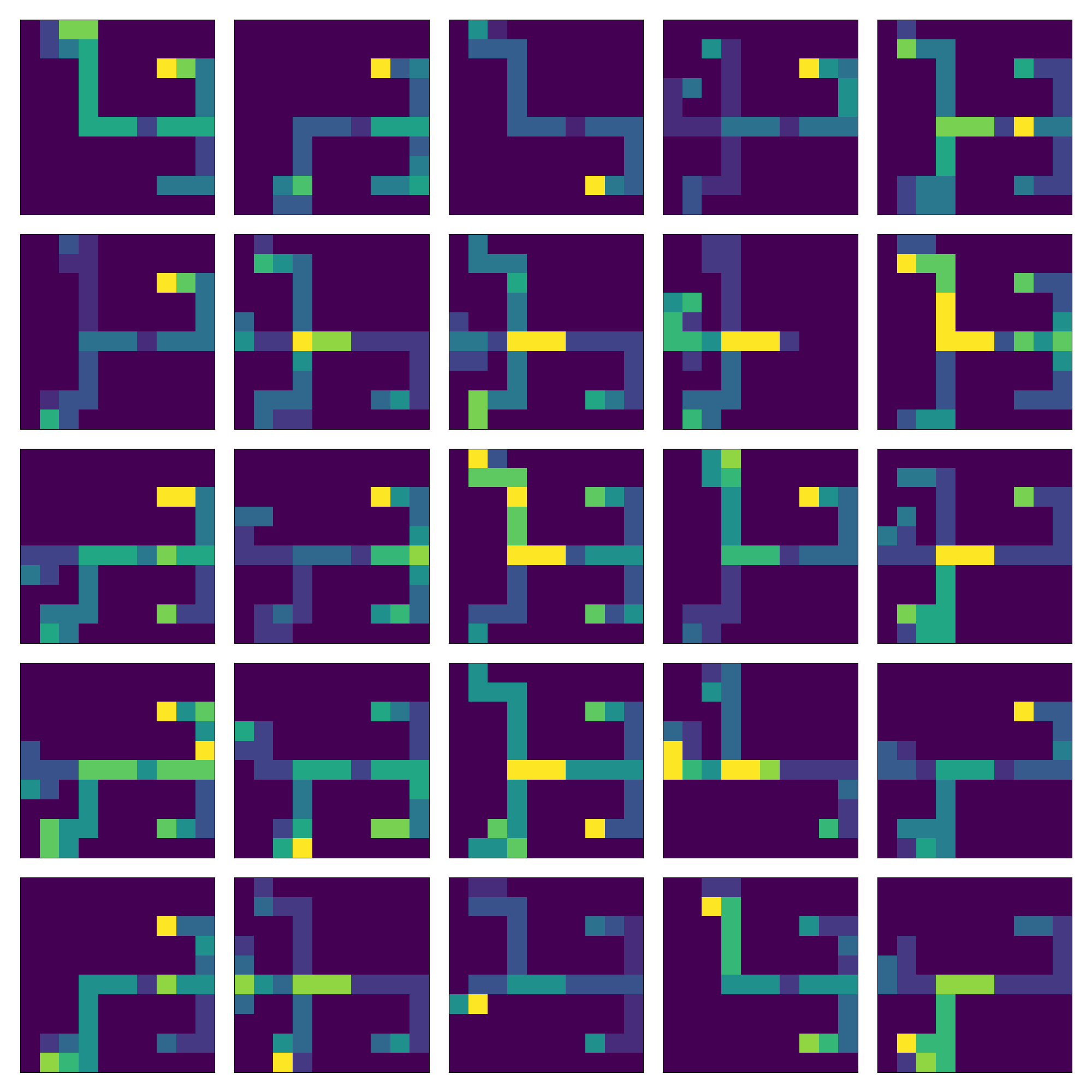}
        \vspace{-15pt}
        \caption{Single Agent (\emph{Maze-det}).}
        \label{subfig:app_single_labdet}
    \end{subfigure}
    \hfill
    \begin{subfigure}[b]{0.45\columnwidth}
        \includegraphics[width=\linewidth]{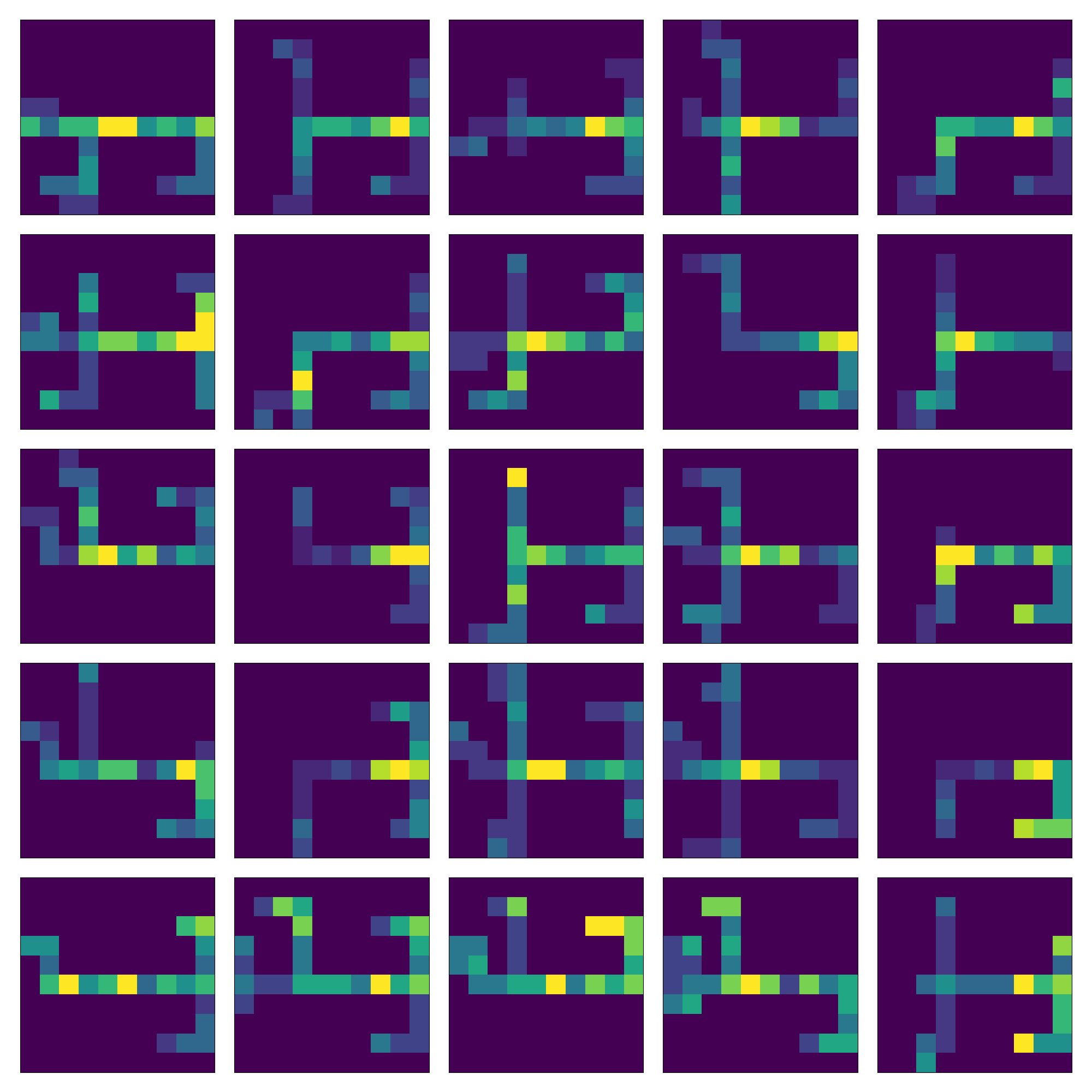}
        \vspace{-15pt}
        \caption{Single Agent (\emph{Maze-stoc}).}
        \label{subfig:app_single_labstoch}
    \end{subfigure}
    \caption{Dataset extrapolation based on the \emph{Maze} environments.}
    \label{fig:datasetextrapolationLab}
\end{figure}

\subsection{Offline Q-Learning Implementation}
\label{appendix:q_learning}

While single agents might discover effective exploration strategies, it is the diversity inherent to our parallel policy induction that ensures robust and consistent state space coverage, even in stochastic environments. This diversity, validated across multiple random seeds, translates into a highly informative dataset for downstream learning tasks. The effectiveness of our approach is further highlighted by the fact that this valuable dataset is constructed from a mere six trajectories per agent, a significant improvement over the data generated by untrained policies.
Combined with an offline procedure, the parallel approach incredibly speedup the learning procedure.
For the construction of Figure \ref{fig:offlineevaluation}, we implement an offline Q-learning algorithm. The agent learns from a replay buffer with a batch size of 20 $(s, a, s', r)$ interactions. The experiences collected from the parallel agents are incorporated into the replay buffer, without interacting with the environment. The Q-learning agent is trained for 100 episodes, with the Q-value function $Q(s, a)$ being updated using a learning rate of $\alpha = 0.1$ and a discount factor of $\gamma = 0.99$ according to the Bellman equation update rule.

\begin{figure}[h]
    \centering
    \begin{subfigure}[b]{0.33\columnwidth}
        \includegraphics[width=\linewidth]{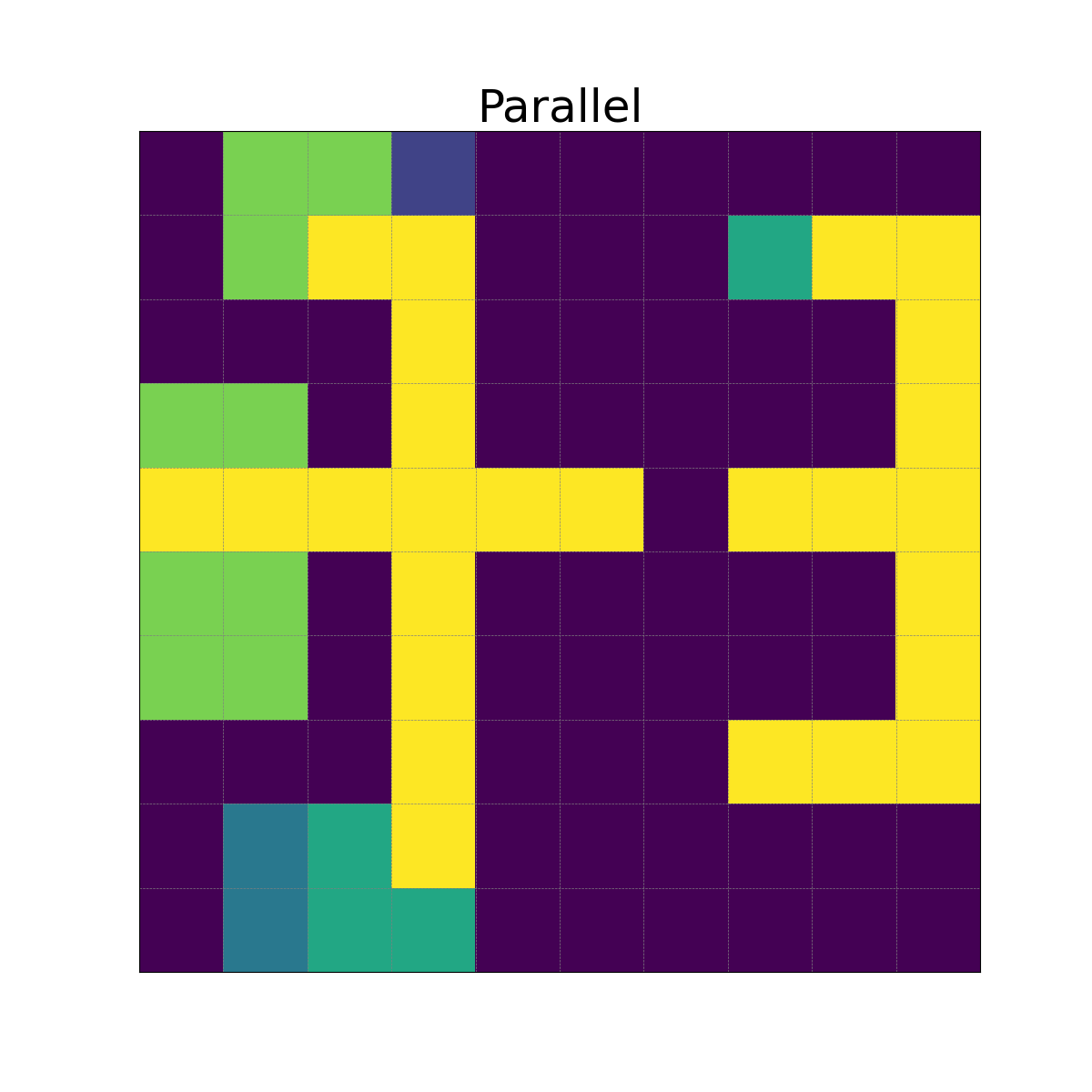}
        \vspace{-10pt}
        \caption{Offline Goal Reachability, learning from the parallel agents experience.}
        \label{subfig:app_single_pol_roomdet}
    \end{subfigure}
    \hfill
    \begin{subfigure}[b]{0.33\columnwidth}
        \includegraphics[width=\linewidth]{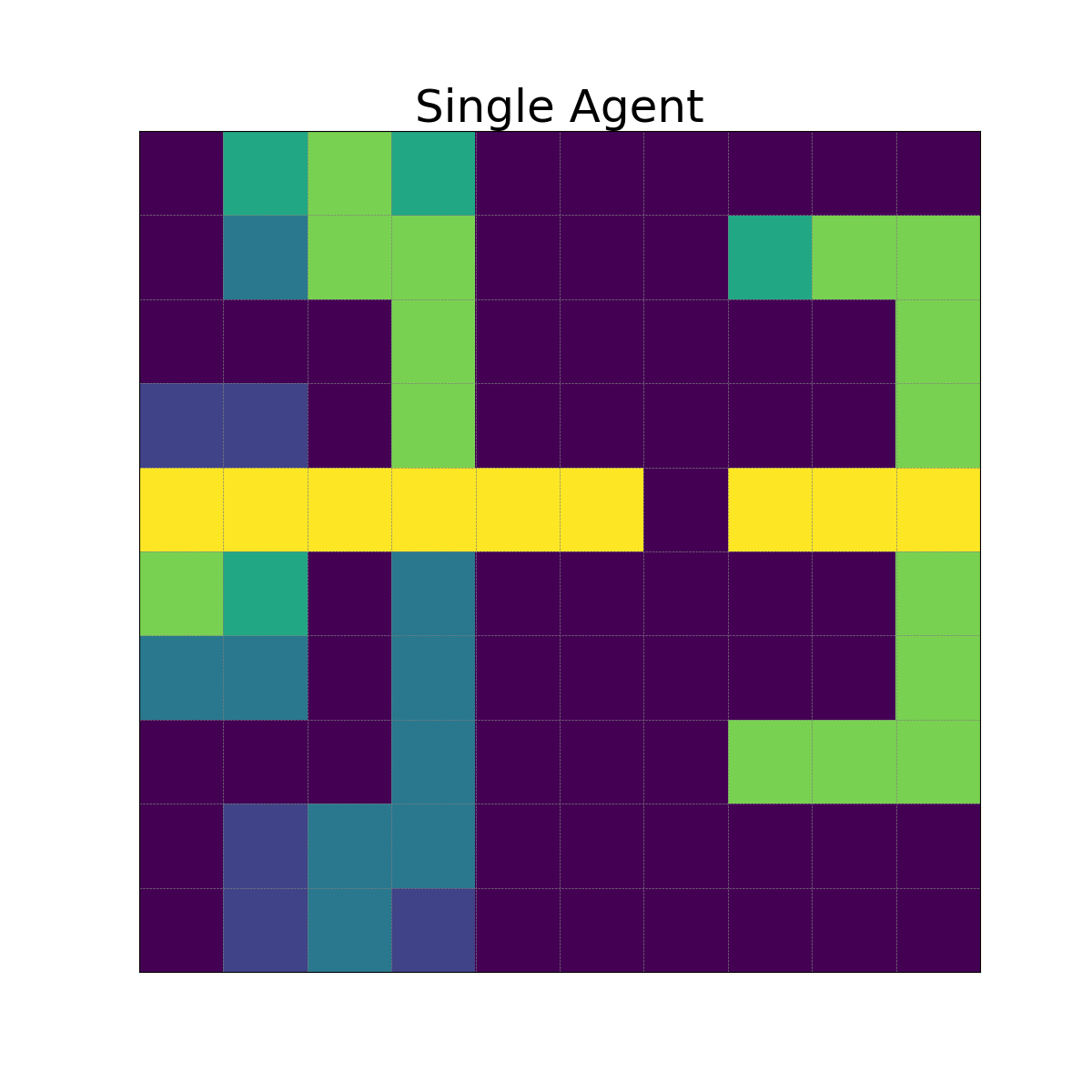}
        \vspace{-10pt}
        \caption{Offline Goal Reachability, learning from the single agents experience.}
        \label{subfig:app_single_pol_labdet}
    \end{subfigure}
    \hfill
    \begin{subfigure}[b]{0.33\columnwidth}
        \includegraphics[width=\linewidth]{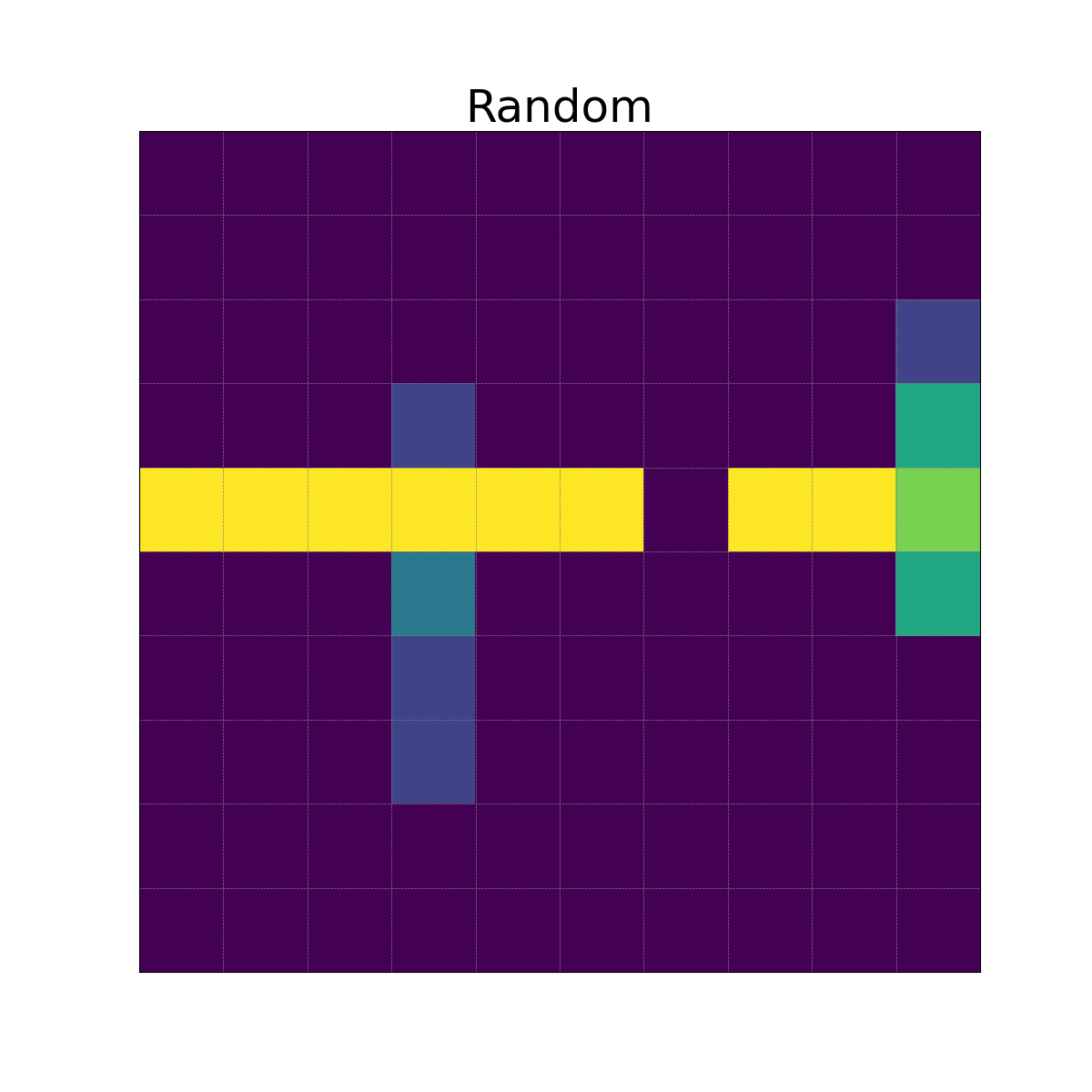}
        \vspace{-10pt}
        \caption{Offline Goal Reachability, learning from a random policy experience.}
        \label{subfig:app_par_pol_roomdet}
    \end{subfigure}
    \caption{Offline Result of Gained Experience}
    \label{fig:offlineresults}
\end{figure}

The high quality of the dataset generated by our parallel agents is further evidenced in Figure \ref{fig:offlineresults}. The offline agent, trained solely on this data, demonstrates significantly improved goal-reaching capabilities, even in the stochastic environment, as indicated by the broader and warmer-colored regions of the heatmap, particularly for goals distant from the starting location.

\subsection{Diversity Analysis}

Figure \ref{fig:policies} provides a visual comparison of the learned policies for parallel agents versus single agents in both \emph{Room-det} and \emph{Maze-det} environments. In the single-agent setting (Figures \ref{fig:policies}a and \ref{fig:policies}b), the agent tends to develop policies with higher stochasticity in its action choices. This is a consequence of the need to explore a larger portion of the state space alone, necessitating a more randomized approach to discover high-entropy trajectories.
In contrast, the parallel agent setting (Figures \ref{fig:policies}c and \ref{fig:policies}d) reveals that the centralized update mechanism, based on the aggregate state distribution $d_n$, encourages individual agents to specialize in more deterministic policies. 

\begin{figure}[h]
    \centering
    \begin{subfigure}[b]{0.44\columnwidth}
        \includegraphics[width=\linewidth]{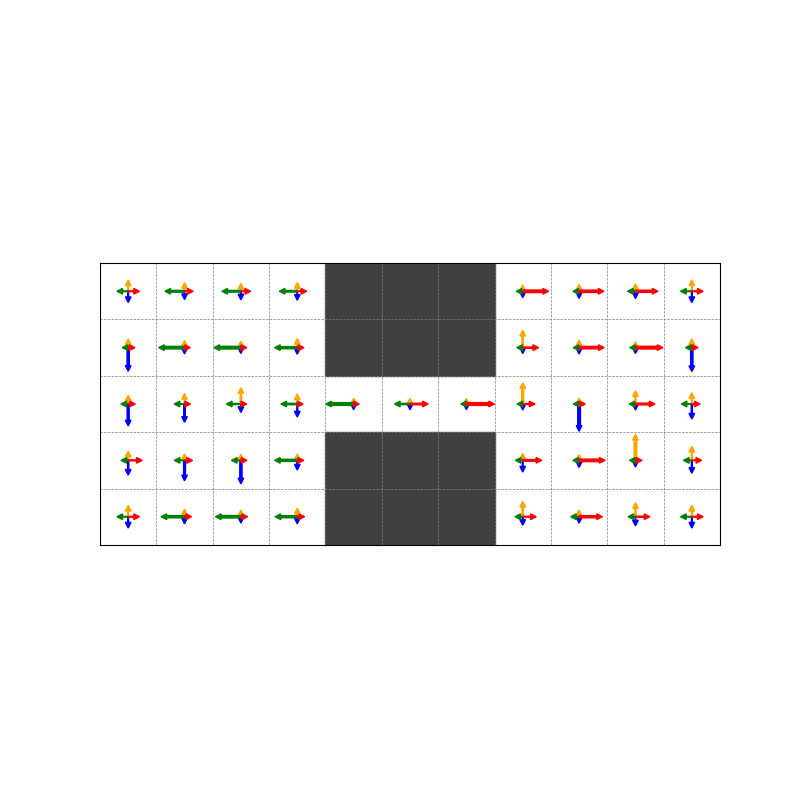}
        \vspace{-10pt}
        \caption{Single Agent (\emph{Room-det}).}
        \label{subfig:app_single_pol_roomdet}
    \end{subfigure}
    \begin{subfigure}[b]{0.44\columnwidth}
        \includegraphics[width=\linewidth]{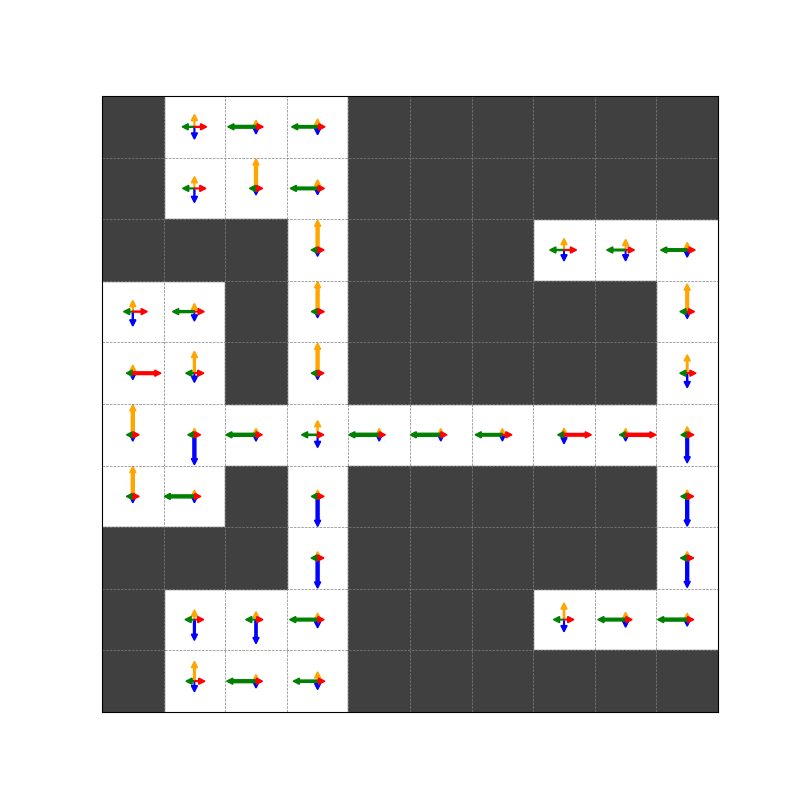}
        \vspace{-10pt}
        \caption{Single Agent (\emph{Maze-det}).}
        \label{subfig:app_single_pol_labdet}
    \end{subfigure}
    \begin{subfigure}[b]{0.8\columnwidth}
        \includegraphics[width=\linewidth]{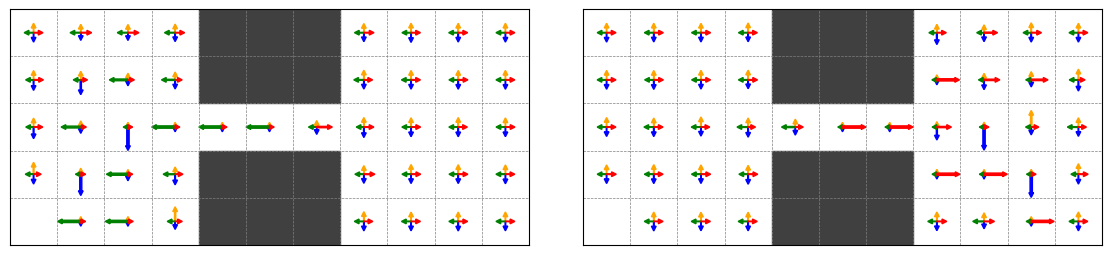}
        \vspace{-10pt}
        \caption{Parallel Agents (\emph{Room-det}).}
        \label{subfig:app_par_pol_roomdet}
    \end{subfigure}
    \hfill
    \begin{subfigure}[b]{0.8\columnwidth}
        \includegraphics[width=\linewidth]{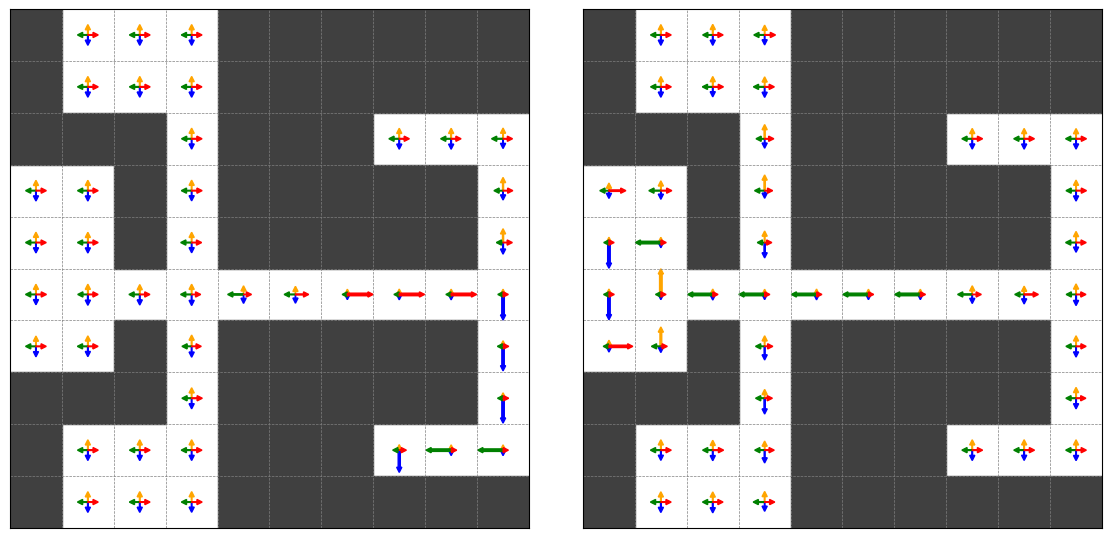}
        \vspace{-10pt}
        \caption{Parallel Agents (\emph{Maze-det}).}
        \label{subfig:app_par_pol_labdet}
    \end{subfigure}
    \caption{Analysis of stochasticity and determinism of parallel vs. single agents.}
    \label{fig:policies}
\end{figure}


\end{document}
